\def\eqref#1{equation~\ref{#1}}
\def\1{\bm{1}}
\DeclareMathAlphabet{\mathsfit}{\encodingdefault}{\sfdefault}{m}{sl}
\SetMathAlphabet{\mathsfit}{bold}{\encodingdefault}{\sfdefault}{bx}{n}
\definecolor{c1}{RGB}{255,75,0}
\definecolor{c2}{RGB}{0,180,255}
\definecolor{LightCyan}{rgb}{0.88,1,1}
\definecolor{Gray}{gray}{0.9}
\newcolumntype{?}{!{\vrule width 1pt}}
\newcommand\independent{\protect\mathpalette{\protect\independenT}{\perp}}
\def\independenT#1#2{\mathrel{\rlap{$#1#2$}\mkern2mu{#1#2}}}
\newcommand\mydots{\makebox[1em][c]{.\hfil.\hfil.}}
\colorlet{mylinkcolor}{violet}
\colorlet{mycitecolor}{RoyalBlue}
\colorlet{myurlcolor}{RoyalPurple}
\newcommand{\pa}[1]{\text{pa}(#1)}
\newcommand{\dependent}{ \not\!\perp\!\!\!\perp}
\definecolor{myredcolor}{RGB}{215,48,39}
\definecolor{mygreencolor}{RGB}{26,152,80}
\newcommand{\veryshortarrow}[1][3pt]{\mathrel{%
    \vcenter{\hbox{\rule[-.2pt]{#1}{.4pt}}}%
    \mkern-4mu\hbox{\usefont{U}{lasy}{m}{n}\symbol{41}}}}
\newcommand{\scriptveryshortarrow}[1][3pt]{{%
    \vcenter{\hbox{\rule[\scriptratio\dimexpr-.2pt\relax]
    {\scriptratio\dimexpr#1\relax}{\scriptratio\dimexpr.4pt\relax}}}%
    \mkern-4mu\hbox{\let\f@size\sf@size\usefont{U}{lasy}{m}{n}\symbol{41}}}}
\newtheorem{Definition}{Definition}
\newtheorem{proposition}{Proposition}
\title{Factored Adaptation for Non-stationary Reinforcement Learning}
\author{
    Fan Feng$^1$, Biwei Huang$^2$, Kun Zhang$^{2,3}$, Sara Magliacane$^{4,5}$\\
    $^1$City University of Hong Kong
    $^2$Carnegie Mellon University\\
    $^3$Mohamed bin Zayed University of Artificial Intelligence\\
    $^4$University of Amsterdam
    $^5$MIT-IBM Watson AI Lab\\
    {\small \texttt{\{ffeng1017,sara.magliacane\}@gmail.com}, \texttt{biweih@andrew.cmu.edu}}, {\small \texttt{kunz1@cmu.edu}}}
\begin{document}
    \maketitle
    \begin{abstract}
        Dealing with non-stationarity in environments (e.g., in the transition dynamics) and objectives (e.g., in the reward functions) is a challenging problem that is crucial in real-world applications of reinforcement learning (RL).
        While most current approaches model the changes as a single shared embedding vector, we leverage insights from the recent causality literature to model non-stationarity in terms of individual latent change factors, and causal graphs across different environments.
        In particular, we propose Factored Adaptation for Non-Stationary RL (FANS-RL), a factored adaption approach
        that learns jointly both the causal structure in terms of a factored MDP, and a factored representation of the individual time-varying change factors. We prove that under standard assumptions, we can completely recover the causal graph representing the factored transition and reward function, as well as a partial structure between the individual change factors and the state components.
        Through our general framework, we can consider general non-stationary scenarios with different  function types and changing frequency, including changes across episodes and within episodes.
        Experimental results demonstrate that FANS-RL outperforms existing approaches in terms of return, compactness of the latent state representation, and robustness to varying degrees of non-stationarity.
    \end{abstract}

    \section{Introduction}
    Learning a stable policy under non-stationary environments is a long-standing challenge in Reinforcement learning (RL)~\citep{sutton2018reinforcement,dulac2021challenges, padakandla2021survey}.
    While most RL approaches assume stationarity, in many real-world applications of RL there can be changes in the dynamics or the reward function, both \emph{across} different episodes and \emph{within} each episode.
    Recently, several works adapted Meta-RL methods to learn sequences of non-stationary tasks \citep{alshedivat2018continuous,ijcai2021-399}. However, the continuous MAML \citep{finn2017model} adaptation for non-stationary RL \citep{alshedivat2018continuous} does not explicitly model temporal changing components, while TRIO~\citep{ijcai2021-399} needs to meta-train the model on a set of non-stationary tasks. LILAC~\citep{xie2020deep} and ZeUS \cite{sodhani2021block} leverage latent variable models to directly model the change factors in the environment in a shared embedding space.
    In particular, they consider families of MDPs indexed by a single latent parameter. In this paper, we argue that disentangling the changes as separate latent parameters and modeling the process with a factored representation improves the efficiency of adapting to non-stationarity.

    In particular, we leverage insights from the causality literature \citep{CDNOD_Huang20, DA_Zhang20} that model non-stationarity in terms of individual latent change factors and causal graphs across different environments.
    We propose Factored Adaptation for Non-Stationary RL (FANS-RL), a factored adaptation framework that jointly learns the causal structure of the MDP and a factored representation of the individual change factors, allowing for changes at discrete timepoints and continuously varying environments.
    While we provide a specific architecture (FN-VAE), the theoretical framework of FANS-RL can be implemented with different architectures and combined with various RL algorithms.
    We formalize our setting as a \emph{Factored Non-stationary MDP} (FN-MDP),  which combines a Factored-MDP ~\cite{boutilier2000stochastic, kearns1999efficient, osband2014near} with latent change factors that evolve in time following a Markov process.

    We build upon the AdaRL framework \cite{huang2021adarl}, a recently proposed fast adaptation approach. AdaRL learns a factored representation that explicitly models changes (i.e., domain-specific components) in observation, dynamics and reward functions across a set of source domains. An optimal policy learnt on the source domains can then be adapted to a new target domain simply by identifying a low-dimensional change factor, without any additional finetuning.
    FANS-RL extends AdaRL from the stationary case with constant change factors to a general non-stationary framework.
    Specifically, FANS-RL learns the low-dimensional and time-evolving representations $\boldsymbol{\theta}_t^s$ and $\boldsymbol{\theta}^r_t$ that fully capture the non-stationarity of dynamics and rewards, allowing for continuous and discrete changing functions, both \emph{within-episode} and \emph{across-episode}.
    Our main contributions can be summarized as:
    \begin{compactitem}
        \item We formalize FN-MDPs, a unified factored framework that can handle many non-stationary settings, including discrete and continuous changes, both within and across episodes. We prove that, under standard assumptions, the causal graph of the transition and reward function is identifiable, while we can recover a partial structure for the change factors.
        \item We introduce Factored Adaptation for Non-Stationary RL (FANS-RL), a general non-stationary RL approach that interleaves model estimation of an FN-MDP and policy optimization. We also describe FN-VAE, an example architecture for learning FN-MDPs.
        \item We evaluate FANS-RL on simulated benchmarks for continuous control and robotic manipulation tasks and show it outperforms the state of the art on the return, compactness of the latent space representation and robustness to varying degrees of non-stationarity.
    \end{compactitem}

    \section{Factored Non-stationary MDPs}
    \label{Sec: FMDP}
    To model different types of non-stationarity in a unified and factored way, we propose Factored Non-stationary Markov Decision Processes (FN-MDPs). FN-MDPs are an augmented form of a factored MDPs~\cite{boutilier2000stochastic, kearns1999efficient, osband2014near} with latent change factors that evolve over time following a Markov process. Since the change factors are latent, FN-MDPs are partially observed. We define them as:

    \begin{Definition}
        A Factored Non-stationary Markov Decision Process (FN-MDP) is a tuple $\left(\mathcal{S}, \mathcal{A}, \Theta^s,\Theta^r, \gamma, \mathcal{G}, \mathbb{P}_s, \mathcal{R}, ,  \mathbb{P}_{\theta^r}, \mathbb{P}_{\theta^s} \right)$, where $\mathcal{S}$ is the state space, $\mathcal{A}$ the action space, $\Theta^s$ the space of the change factors for the dynamics,  $\Theta^r$ the space of the reward change factors and $\gamma$ the discount factor.
        We assume $\mathcal{G}$ is a Dynamic Bayesian Network over $\{ s_{1,t}, \mydots, s_{d,t}, a_{1,t}, \mydots, a_{m,t}, r_t, \theta^{\boldsymbol{s}}_{1,t}, \mydots, \theta^{\boldsymbol{s}}_{p,t}, \\ \theta^{r}_{1,t}, \mydots, \theta^{r}_{q,t}\}$, where $d$, $m$, $p$, and $q$ are the dimensions of states, action, change factors on dynamics and reward, respectively.
        We define the factored state transition distribution $\mathbb{P}_s$ as:
        \begin{align*}
            \mathbb{P}_s (\boldsymbol{s}_t | \boldsymbol{s}_{t-1}, \boldsymbol{a}_{t-1}, \boldsymbol{\theta}^s_t) = \prod_{i=1}^d \mathbb{P}_s ( s_{i,t} | \pa{s_{i,t}})
        \end{align*}
        where $\pa{s_{i,t}}$ denotes the causal parents of $s_{i,t}$ in $\mathcal{G}$, which are a subset of the dimensions of $\boldsymbol{s}_{t-1}$, $ \boldsymbol{a}_{t-1}$ and $\boldsymbol{\theta_t^s}$.
        Note that the action $\boldsymbol{a}_{t-1}$ is a vector of $m$ dimensions in our setting.
        We assume a given initial state distribution $\mathbb{P}_s (\boldsymbol{s}_0)$.
        Similarly, we define the reward function $\mathcal{R}$ as a function of the parents of $r_t$ in $\mathcal{G}$, i.e., $\mathcal{R}( \mathbf{s}_t, \mathbf{a}_t, \boldsymbol{\theta^r}_t) = \mathcal{R}(\pa{r_t})$, where $\pa{r_t}$ are a subset of dimensions of $\mathbf{s}_t, \mathbf{a}_t,$ and $\boldsymbol{\theta^r}_t$.
        We define the factored latent change factors transition distributions $\mathbb{P}_{\theta^s}$ and $\mathbb{P}_{\theta^r}$ as:
        \begin{align*}
            \mathbb{P}_{\theta^s} (\boldsymbol{\theta^s}_t |  \boldsymbol{\theta^s}_{t-1}) &= \prod_{j=1}^p \mathbb{P}_{\theta^s}  ( \theta^s_{j,t} | \pa{\theta^s_{j,t}}), &
            \mathbb{P}_{\theta^r} (\boldsymbol{\theta^r}_t | \boldsymbol{\theta^r}_{t-1}) &= \prod_{k=1}^q \mathbb{P}_{\theta^r}  ( \theta^r_{k,t} | \pa{\theta^r_{k,t}})
        \end{align*}
        where $\pa{\theta^s_{j,t}}$ are a subset of the dimensions of $\boldsymbol{\theta^s}_{t-1}$, while $\pa{\theta^r_{k,t}}$ are a subset of dimensions of $\boldsymbol{\theta^r}_{t-1}$. We assume the initial distributions $\mathbb{P}_{\theta^s} (\boldsymbol{\theta^s}_0)$ and $\mathbb{P}_{\theta^r} (\boldsymbol{\theta^r}_0)$ are given.
    \end{Definition}

    We show an example DBN representing the graph $\mathcal{G}$ of an FN-MDP in Fig.~\ref{fig:vae}(a).
    Since we are interested in learning the graphical structure of the FN-MDP, as well as identifying the values of the latent change factors from data, we
    describe a generative process of an FN-MDP environment. In particular, we assume that the graph $\mathcal{G}$ is time-invariant throughout the non-stationarity, and there are no unobserved confounders and instantaneous causal effects in the system.
    We will learn a set of binary masks $\boldsymbol{c}^{\cdot \scriptveryshortarrow \cdot}$ and $\boldsymbol{C}^{\cdot \scriptveryshortarrow \cdot}$ that are the indicators for edges in $\mathcal{G}$.

    \textbf{Generative environment model.}
    We adapt the generative model in AdaRL \citep{huang2021adarl} across $k$ different domains to a time-varying setting on a single domain. We assume the generative process of the environment
    at timestep $t$ in terms of the transition function for each dimension $i=1,\mydots, d$ of $\mathbf{s}_t$ is:
    
    \begin{equation}
        \label{eq: adarl1}
        s_{i, t}=f_{i}(\boldsymbol{c}_{i}^{\boldsymbol{s} \scriptveryshortarrow \boldsymbol{s}} \odot \boldsymbol{s}_{t-1},
        \boldsymbol{c}_{i}^{\boldsymbol{a} \scriptveryshortarrow \boldsymbol{s}} \odot \boldsymbol{a}_{t-1},
        \boldsymbol{c}_{i}^{\boldsymbol{\theta}_{t} \scriptveryshortarrow \boldsymbol{s}} \odot \boldsymbol{\theta}_{t}^{\boldsymbol{s}}, \epsilon_{i, t}^{s})
    \end{equation}
    
    where $\odot$ is the element-wise product, $f_i$ are non-linear functions and $\epsilon^s_{i,t}$ is an i.i.d. random noise. The binary mask $\boldsymbol{c}_{i}^{\boldsymbol{s} \scriptveryshortarrow \boldsymbol{s}} \in \{0, 1\}^d$ represents which state components $s_{j,t-1}$ are used in the transition function of $s_{i,t}$.
    Similarly, $\boldsymbol{c}_{i}^{\boldsymbol{a} \scriptveryshortarrow \boldsymbol{s}} \in \{0, 1\}^m$ indicates whether the action directly affects $s_{i,t}$.
    The change factor $\boldsymbol{\theta}_{t}^{\boldsymbol{s}} \in \mathbb{R}^p$
    encodes any change in the dynamics. The binary mask $\boldsymbol{c_{i}}^{\boldsymbol{\theta}_{t} \scriptveryshortarrow \boldsymbol{s}} \in \{0,1\}^p$ represents which of the components of $\boldsymbol{\theta}_{t}^{\boldsymbol{s}}$ influence $s_{i,t}$. We model the reward function as:
    
    \begin{equation}
        r_{t}=h (\boldsymbol{c}^{\boldsymbol{s} \scriptveryshortarrow r} \odot \boldsymbol{s}_{t}, \boldsymbol{c}^{\boldsymbol{a} \scriptveryshortarrow r} \odot \boldsymbol{a}_{t}, \boldsymbol{\theta}_{t}^{r}, \epsilon_{t}^{r} )
    \end{equation}
    
    where $\boldsymbol{c}^{\boldsymbol{s} \scriptveryshortarrow \boldsymbol{r}} \in \{0, 1\}^d$, $\boldsymbol{c}^{\boldsymbol{a} \scriptveryshortarrow \boldsymbol{s}} \in \{0, 1\}^m$, and $\epsilon_{t}^{r}$ is an i.i.d. random noise. The change factor $\boldsymbol{\theta}_{t}^{r} \in \mathbb{R}^q$
    encodes any change in the reward function.
    The binary masks $\boldsymbol{c}^{\cdot \veryshortarrow \cdot}$ can be seen as indicators of edges in the DBN $\mathcal{G}$.
    In AdaRL, all change factors are assumed to be constant in each domain.
    Since in this paper we allow the change parameters to evolve in time, we introduce two additional equations:
    \begin{figure}[t]
        \centering
        \includegraphics[width=0.9\linewidth]{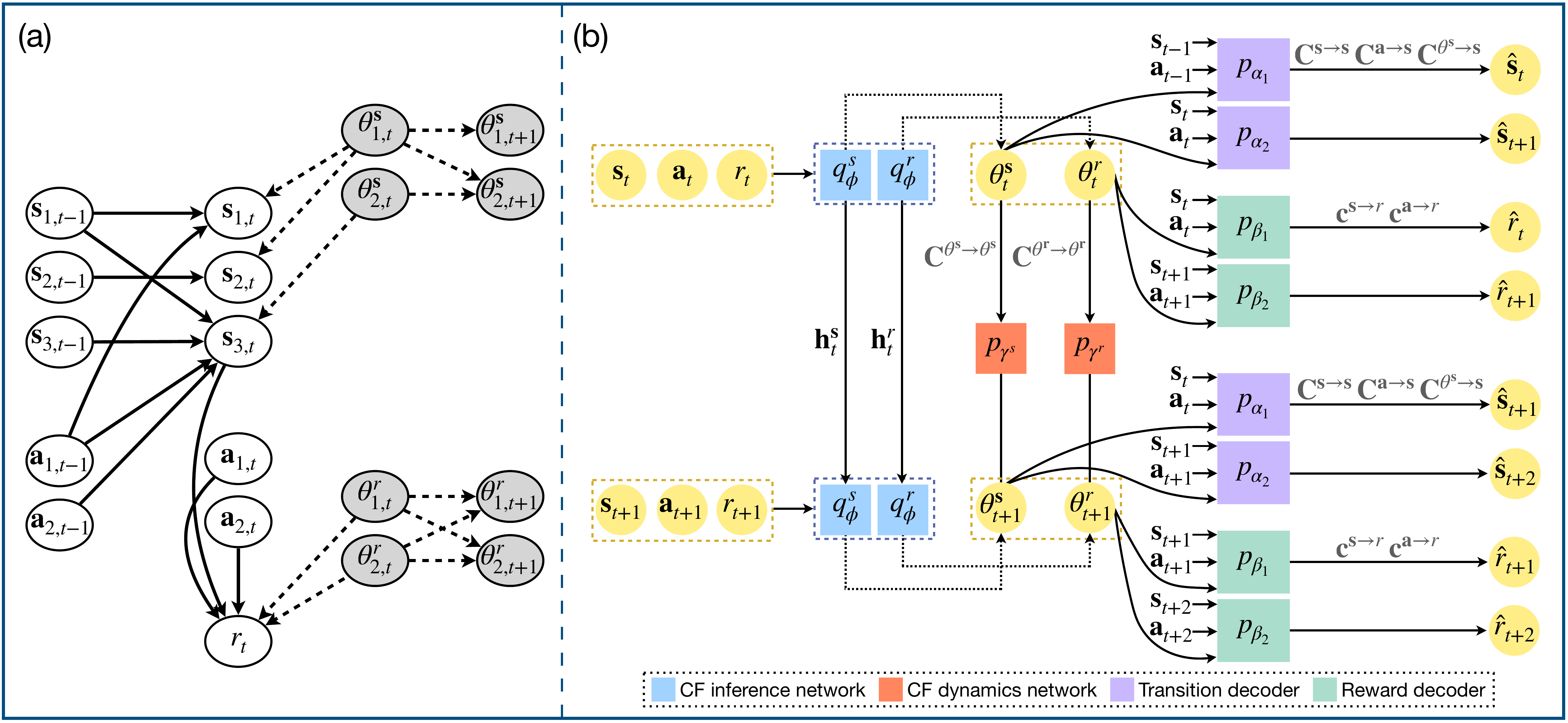}
        \caption{(a). A graphical representation of an FN-MDP. For readability, we only illustrate a subsection of dimensions of states, actions, and latent change factors. The shaded variables are unobserved; (b). The architecture of FN-VAE, which learns the generative model, explained in Sec.~\ref{sec:FAEVAE}.}
        \label{fig:vae}
    \end{figure}
    
    \begin{equation}
        \begin{array}{ll}
            \theta^s_{j,t} & = g^s (\boldsymbol{c}^{\boldsymbol{\theta^s} \scriptveryshortarrow \boldsymbol{\theta^s}}_{j} \odot \boldsymbol{\theta^s}_{t-1} , \epsilon^{\theta^s}_t )                         \\
            \theta^r_{k,t} & = g^r (\boldsymbol{c}^{\boldsymbol{\theta}^r \scriptveryshortarrow \boldsymbol{\theta}^r}_{k} \odot \boldsymbol{\theta}^r_{t-1} , \epsilon^{\theta^r}_t ) \label{Eq: conti_model}
        \end{array}
    \end{equation}
    
    for $i = 1,\mydots, d$,  $j = 1,\mydots, p$, $k=1, \mydots, q$, and $g^s$, and $g^r$ are non-linear functions.
    We assume the binary masks $\boldsymbol{c}^{\cdot \scriptveryshortarrow \cdot}$ are stationary across timesteps and so are the $\epsilon_{i, t}^{s}$, $\epsilon_{t}^{r}$, $\epsilon^{\theta^s}_t$ and $ \epsilon^{\theta^r}_t$, the i.i.d. random noises. Although $\boldsymbol{c}^{\cdot \scriptveryshortarrow \cdot}$ and $\epsilon$ are stationary, we model the changes in the functions and some changes in the graph structure through $\boldsymbol{\theta}$. For example a certain value of $\boldsymbol{\theta}_t^r$ can switch off the contribution of some of the state or action dimensions in the reward function, or in other words nullify the effect of some edges in $\mathcal{G}$. Similarly the contribution of the noise distribution to each function can be modulated via the change factors. On the other hand, this setup does not allow adding edges that are not captured by the binary masks $\boldsymbol{c}^{\cdot \scriptveryshortarrow \cdot}$.
    We group the binary masks in the matrices $\boldsymbol{C}^{\boldsymbol{s} \scriptveryshortarrow \boldsymbol{s}} := [\boldsymbol{c}^{\boldsymbol{s} \scriptveryshortarrow \boldsymbol{s}}_i]_{i=1}^d$,
    $\boldsymbol{C}^{\theta^{s} \scriptveryshortarrow \boldsymbol{s}} := [\boldsymbol{c}^{\theta^{s} \scriptveryshortarrow \boldsymbol{s}}_i]_{i=1}^d$,
    and
    $\boldsymbol{C}^{\boldsymbol{a} \scriptveryshortarrow \boldsymbol{s}} := [\boldsymbol{c}^{\boldsymbol{a} \scriptveryshortarrow \boldsymbol{s}}_i]_{i=1}^d$.
    Similarly, we also group the binary vectors in the dynamics of the latent change factors in matrices $\boldsymbol{C}^{\boldsymbol{\theta^s} \scriptveryshortarrow \boldsymbol{\theta^s}} := [\boldsymbol{c}^{\boldsymbol{\theta^s} \scriptveryshortarrow \boldsymbol{\theta^s}}_j]_{j=1}^p$ and $\boldsymbol{C}^{\boldsymbol{\theta}^r \scriptveryshortarrow \boldsymbol{\theta}^r} := [\boldsymbol{c}^{\boldsymbol{\theta}^r \scriptveryshortarrow \boldsymbol{\theta}^r}_k]_{k=1}^q$.
    Since latent change factors $\boldsymbol{\theta^s}$ and $\boldsymbol{\theta}^r$ follow a Markov process based on $g^s$ and $g^r$, we can consider different types of changes by varying the form of $g^s$ and $g^r$, generalizing the approaches in literature. We can also model concurrent changes in dynamics and reward, including different types of changes, e.g. a continuous $g^s$ and a piecewise-constant $g^r$.

    \textbf{Compact representations. } 
    \citet{huang2021adarl} show that the only dimensions of the state and change factors useful for policy learning are those that eventually affect the reward. These dimensions are called \emph{compact representations} and are defined as the dimensions of the state and change factors with a path (i.e. a sequence of edges $\to$) to the present or future reward $r_{t+\tau}$ for $ \tau \geq 0$ in the DBN $\mathcal{G}$:
    \begin{align*}
        \setlength{\abovedisplayskip}{1pt}
        \setlength{\belowdisplayskip}{1pt}
        s_{i,t} \in \boldsymbol{s}^{min} \! \! \iff s_{i,t} \to \mydots \to r_{t+\tau} \text{ for } \tau \geq 0  \text{, and }
        \theta_{i} \in \boldsymbol{\theta}^{min} \! \! \iff \theta_{i}  \to \mydots \to r_{t+\tau} \text{ for } \tau \geq 0
    \end{align*}

    \textbf{Continuous changes.} If $g^s$ and $g^r$ are continuous, then they can model smooth changes in the environment, including across episodes. While the functions in Eq.~\ref{eq: adarl1}-\ref{Eq: conti_model} allow us to model \emph{within-episode} changes, i.e. changes that can happen only before $t=H$ where $H$ is the horizon, we also want to model \emph{across-episode} changes. We use a separate time index $\tilde{t}$ that models the agent's lifetime. Initially $\tilde{t}=t$ for $t \leq H$, but while we reset $t=0$ afterwards, $\tilde{t}$ continues to grow indefinitely.

    \textbf{Discrete changes.}
    We assume discrete changes happen at specific timesteps and can be represented with a piecewise constant function.
    In particular we denote change timesteps as $\boldsymbol{\tilde{t}} = (\tilde{t}_1, \ldots, \tilde{t}_M)$ where $\tilde{t}_i$ describes a specific timepoint in the agent's lifetime time index $\tilde{t}$.
    This allows us to model \emph{within-episode changes}. In this case, we assume that the change happens always at the same steps $(t_1, \mydots, t_m)$ in each episode, i.e., we assume that $\boldsymbol{\tilde{t}} =(t_1, \mydots, t_m, H+t_1, \mydots, H+t_m, 2H+t_1, \mydots, 2H+t_m, \mydots)$, where $H$ is the horizon.
    We can also model \emph{across-episode changes}, when we assume the change points only occur at the end of each episode, i.e., $\boldsymbol{\tilde{t}} = (H, 2H, 3H, \mydots)$.

    We extend the result on identifiability of factored MDPs in AdaRL \citep{huang2021adarl} to non-stationary environments.
    We first assume that we observe the change factors and show we can identify the true causal graph $\mathcal{G}$:
    
    \begin{proposition}[Full identifiability with observed change factors]
        Suppose the generative process follows Eq.~\ref{eq: adarl1}-\ref{Eq: conti_model} and all change factors $\boldsymbol{\theta^s}_{t}$ and $\boldsymbol{\theta^r}_{t}$ are observed, i.e., Eq.~\ref{eq: adarl1}-\ref{Eq: conti_model} is an MDP. Under the Markov and faithfulness assumptions, i.e. conditional independences correspond exactly to d-separations, all binary masks $\boldsymbol{C}^{\cdot \veryshortarrow \cdot}$ are identifiable, i.e., we can fully recover the causal graph $\mathcal{G}$.
        \label{prop:identifibility}
    \end{proposition}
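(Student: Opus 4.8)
The plan is to treat the observed system as a fully-observed dynamic Bayesian network and reduce the claim to a standard constraint-based identification argument, using the crucial fact that the temporal structure supplies a complete causal ordering so that \emph{no} orientation ambiguity remains. First I would make the structural setup explicit. Once every $\boldsymbol{\theta}^s_t$ and $\boldsymbol{\theta}^r_t$ is observed, Eq.~\ref{eq: adarl1}--\ref{Eq: conti_model} describe a recursive, acyclic generative process over the time-unrolled variables $\{s_{i,t}, a_{j,t}, r_t, \theta^s_{j,t}, \theta^r_{k,t}\}$, with the binary masks $\boldsymbol{C}^{\cdot \veryshortarrow \cdot}$ as exact adjacency indicators of $\mathcal{G}$. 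By assumption there are no unobserved confounders, so causal sufficiency holds, and the recursive form induces a topological ordering consistent with $\mathcal{G}$, for instance $\mydots, \theta^s_{t-1}, \theta^r_{t-1}, \boldsymbol{s}_{t-1}, \boldsymbol{a}_{t-1}, r_{t-1}, \theta^s_t, \theta^r_t, \boldsymbol{s}_t, \boldsymbol{a}_t, r_t, \mydots$; in particular, causes always precede effects in time (no instantaneous cycles), so the orientation of every potential edge is fixed a priori.

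Next I would run the identification argument. Under the Markov and faithfulness assumptions, a conditional independence holds in the induced distribution if and only if the corresponding d-separation holds in $\mathcal{G}$, so constraint-based discovery recovers $\mathcal{G}$ up to its Markov equivalence class. Since the known causal ordering already orients all edges, it suffices to recover the skeleton and orient by time. Concretely, for each target variable $Y$ the parent set is the unique minimal subset $S$ of its predecessors in the ordering satisfying $Y \perp (\text{predecessors}\setminus S)\mid S$, which by faithfulness equals $\pa{Y}$. Applying this per variable type reads off each family of masks: the parents of $s_{i,t}$ among $\{\boldsymbol{s}_{t-1}, \boldsymbol{a}_{t-1}, \boldsymbol{\theta}^s_t\}$ give $\boldsymbol{c}_i^{\boldsymbol{s}\scriptveryshortarrow\boldsymbol{s}}, \boldsymbol{c}_i^{\boldsymbol{a}\scriptveryshortarrow\boldsymbol{s}}, \boldsymbol{c}_i^{\boldsymbol{\theta}_t\scriptveryshortarrow\boldsymbol{s}}$; the parents of $r_t$ give $\boldsymbol{c}^{\boldsymbol{s}\scriptveryshortarrow r}, \boldsymbol{c}^{\boldsymbol{a}\scriptveryshortarrow r}$; and the parents of $\theta^s_{j,t}$ and $\theta^r_{k,t}$ among the previous change factors give $\boldsymbol{c}^{\boldsymbol{\theta^s}\scriptveryshortarrow\boldsymbol{\theta^s}}_j$ and $\boldsymbol{c}^{\boldsymbol{\theta}^r\scriptveryshortarrow\boldsymbol{\theta}^r}_k$. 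Because $\mathcal{G}$ is time-invariant, the recovered families coincide across $t$ and collapse to a single set of masks, so all of $\boldsymbol{C}^{\cdot\veryshortarrow\cdot}$ are identified.

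The step I expect to be the main obstacle is the treatment of the action $\boldsymbol{a}_t$, which is generated by the agent's policy rather than by an exogenous mechanism. For the conditional-independence tests into $r_t$ and $\boldsymbol{s}_{t+1}$ to separate the influence of $\boldsymbol{a}_t$ from that of its determinants, the behavior policy must inject enough stochasticity that $\boldsymbol{a}_t$ is not a deterministic function of the conditioning set (an overlap/positivity condition); otherwise faithfulness can fail and the edges incident to the action become unidentifiable. I would therefore either assume a sufficiently exploratory behavior policy with full support over actions, or equivalently treat actions as exogenous inputs. A secondary technical point, subsumed by the stated assumptions, is that the relevant independences are those of the \emph{stationary} distribution of the process, which requires ergodicity so that the population conditional independences invoked above are well defined.
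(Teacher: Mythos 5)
Your proposal is correct and follows essentially the same route as the paper's proof: exploit the temporal (topological) ordering to fix all edge orientations a priori, then use the Markov and faithfulness assumptions to read off each variable's parent set from conditional independences, recovering every mask family in turn. Your packaging via unique minimal parent sets among predecessors is equivalent to the paper's explicit per-edge conditional-independence tests, and your added remark about requiring a stochastic (non-deterministic) behavior policy so that faithfulness is not violated for edges out of $\boldsymbol{a}_t$ is a reasonable caveat that the paper handles only implicitly by treating actions as exogenous.
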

    We provide all proofs and a detailed explanation in Appendix~\ref{app:proofs}.
    If we do not observe the change factors $\boldsymbol{\theta^s}_{t}$ and $\boldsymbol{\theta^r}_{t}$, we cannot identify their dimensions, and we cannot fully recover the causal graph $\mathcal{G}$.
    On the other hand, we can still identify the partial causal graph over the state variables $\mathbf{s}_t$, reward variable $r_t$, and action variable $\mathbf{a}_t$. We can also identify which dimensions in $s_{i,t}$ have changes, i.e., we can identify $\boldsymbol{C}^{\boldsymbol{\theta}^s \scriptveryshortarrow \boldsymbol{s}}$. We formalize this idea in the following (proof in Appendix~\ref{app:proofs}):
    \begin{proposition}[Partial Identifiability with latent change factors]
        Suppose the generative process follows Eq.~\ref{eq: adarl1}-\ref{Eq: conti_model} and the change factors $\boldsymbol{\theta^s}_{t}$ and $\boldsymbol{\theta^r}_{t}$ are unobserved.
        Under the Markov and faithfulness assumptions, the binary masks $\boldsymbol{C}^{\boldsymbol{s} \scriptveryshortarrow \boldsymbol{s}}, \boldsymbol{C}^{\boldsymbol{a} \scriptveryshortarrow \boldsymbol{s}}, \boldsymbol{c}^{\boldsymbol{s} \scriptveryshortarrow r}$ and $\boldsymbol{c}^{\boldsymbol{a} \veryshortarrow r}$ are identifiable. Moreover, we can identify which state dimensions are affected by $\boldsymbol{\theta^s}_{t}$
        and whether the reward function changes.  
        \label{prop:part-identifibility}
    \end{proposition}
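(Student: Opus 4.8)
The plan is to recast Proposition~\ref{prop:part-identifibility} as a non-stationary causal discovery problem over the \emph{observed} variables $\{s_{1,t},\dots,s_{d,t},a_{1,t},\dots,a_{m,t},r_t\}$, in which the latent change factors $\boldsymbol{\theta^s}_t,\boldsymbol{\theta^r}_t$ play the role of unobserved, time-varying causes. Because by Eq.~\ref{Eq: conti_model} the change factors evolve autonomously (each $\theta^s_{j,t}$ and $\theta^r_{k,t}$ depends only on its own past), they act as exogenous drivers of non-stationarity rather than as ordinary confounders that also respond to states or actions. Following the non-stationary causal discovery framework of \citet{CDNOD_Huang20}, I would summarize the latent factors by a surrogate variable $C$ equal to the lifetime index $\tilde{t}$ (for discrete changes, the regime label), and treat $C$ as an observed parent of exactly those components whose mechanism is modulated by a change factor. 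Since Proposition~\ref{prop:identifibility} already gives full identifiability once the change factors are observed, the content here is to show which parts of $\mathcal{G}$ survive when the $\boldsymbol{\theta}$'s are collapsed into $C$.

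First I would recover the purely observable edges, i.e.\ the masks $\boldsymbol{C}^{\boldsymbol{s}\scriptveryshortarrow\boldsymbol{s}}$, $\boldsymbol{C}^{\boldsymbol{a}\scriptveryshortarrow\boldsymbol{s}}$, $\boldsymbol{c}^{\boldsymbol{s}\scriptveryshortarrow r}$ and $\boldsymbol{c}^{\boldsymbol{a}\veryshortarrow r}$. Testing $s_{i,t}\not\perp s_{j,t-1}\mid(\text{remaining candidate parents},C)$ and the analogous test with $a_{l,t-1}$ decides, by faithfulness, whether each edge $s_{j,t-1}\to s_{i,t}$ or $a_{l,t-1}\to s_{i,t}$ is present; the same conditional-independence tests with $s_{i,t}$ and $a_{l,t}$ against $r_t$ recover the reward masks. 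No orientation search is needed: the DBN structure forbids instantaneous and backward-in-time edges, so the temporal order of $t-1$ versus $t$ orients every transition edge, while the assumed generative order $\boldsymbol{s}_t,\boldsymbol{a}_t\to r_t$ orients the reward edges. The role of conditioning on $C$ is precisely to block the spurious association created by the change-factor chain $s_{i,t}\leftarrow\boldsymbol{\theta^s}_t\leftarrow\boldsymbol{\theta^s}_{t-1}\to s_{j,t-1}$: once $C$ fixes the shared value of the latent trajectory, these backdoor paths are d-separated and only genuine edges among observables remain dependent.

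Next I would detect which mechanisms are non-stationary. A state dimension satisfies $\boldsymbol{c}^{\boldsymbol{\theta}^s\scriptveryshortarrow\boldsymbol{s}}_i\neq\boldsymbol{0}$ exactly when, after conditioning on its observed parents identified above, $s_{i,t}$ still depends on the surrogate, i.e.\ $s_{i,t}\not\perp C\mid\mathrm{Pa}_{\mathrm{obs}}(s_{i,t})$. By faithfulness this dependence holds iff some component of $\boldsymbol{\theta^s}_t$ is a direct parent of $s_{i,t}$, which identifies the set of state dimensions affected by $\boldsymbol{\theta^s}_t$, i.e.\ $\boldsymbol{C}^{\boldsymbol{\theta}^s\scriptveryshortarrow\boldsymbol{s}}$. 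The same test applied to the reward, $r_t\not\perp C\mid(\boldsymbol{c}^{\boldsymbol{s}\scriptveryshortarrow r}\odot\boldsymbol{s}_t,\boldsymbol{c}^{\boldsymbol{a}\veryshortarrow r}\odot\boldsymbol{a}_t)$, distinguishes a genuinely changing reward function ($\boldsymbol{\theta^r}_t\to r_t$) from a reward that merely inherits non-stationarity through its state parents: conditioning on the observed parents removes the latter, so a residual dependence on $C$ certifies that the reward function itself changes.

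The main obstacle is controlling the latent confounding introduced by the change factors' own temporal dynamics, and this is also the reason the result is only \emph{partial}. The surrogate $C$ d-separates the backdoor paths only if conditioning on it renders the affected mechanisms conditionally stationary, which holds when the change-factor trajectory is a (piecewise-)deterministic function of the lifetime index, as in the discrete and continuous regimes of Sec.~\ref{Sec: FMDP}; I would make this the precise working assumption and verify that it blocks every path through $\boldsymbol{\theta^s},\boldsymbol{\theta^r}$. Even then, because $\boldsymbol{\theta^s}_t$ and $\boldsymbol{\theta^r}_t$ are never observed, their footprint on the data is a single ``changes-with-time'' signal per affected variable: we can certify \emph{that} $s_{i,t}$ or $r_t$ has a non-stationary mechanism, but we cannot recover the number of latent change components, their internal transition masks $\boldsymbol{C}^{\boldsymbol{\theta^s}\scriptveryshortarrow\boldsymbol{\theta^s}},\boldsymbol{C}^{\boldsymbol{\theta}^r\scriptveryshortarrow\boldsymbol{\theta}^r}$, or the assignment of individual factors to individual states. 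Hence $\boldsymbol{C}^{\boldsymbol{\theta}^s\scriptveryshortarrow\boldsymbol{s}}$ is identified only at the level of \emph{which} state dimensions change, matching the statement.
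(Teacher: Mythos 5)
Your proposal is correct and follows essentially the same route as the paper's proof: both introduce the time index as a surrogate variable for the latent change factors (the paper invokes the \emph{pseudo-causal sufficiency} assumption of \citet{CDNOD_Huang20}, which is exactly your working assumption that the change-factor trajectory is a deterministic function of the lifetime index), recover the skeleton over observed variables plus the surrogate via conditional-independence tests, orient all edges using the temporal order, and certify non-stationarity of a state dimension or the reward by testing residual dependence on the surrogate given the observed parents. Your explicit treatment of the backdoor path through the change-factor chain and of what remains unidentifiable (the number of latent components and their internal transition masks) is a slightly more detailed articulation of the same argument.
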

    This means that even in the most general case, we can learn most of the true causal graph $G$ in an FN-MDP, with the exception of the transition structure of the latent change factors.
    In the following, we show a variational autoencoder setup to learn the generative process in FN-MDPs.

    \section{Learning the Generative Process in FN-MDPs}
    \label{sec:FAEVAE}
    There are many possible architectures to learn FN-MDPs through Eq.~\ref{eq: adarl1}-\ref{Eq: conti_model}. We propose \emph{FN-VAE}, a variational autoencoder architecture described in Fig.~\ref{fig:vae}(b). In FN-VAE, we jointly learn the structural relationships, state transition function, reward function, and transition function of the latent change factors, as described in detail in Appendix Alg.~\ref{alg:A1}.
    An FN-VAE has four types of components: \emph{change factor (CF) inference networks} that reconstruct the latent change factors, \emph{change factor (CF) dynamics networks} that model their dynamics with an LSTM ~\citep{hochreiter1997long},  \emph{transition decoders} that reconstruct the state dynamics at the time $t$ and predict one step further at $t+1$, and \emph{reward decoders} that reconstruct the reward at $t$ and predict the future reward at $t+1$. We now describe them in detail.

    \textbf{CF inference networks (blue boxes in Fig.~\ref{fig:vae}(b)).} The two inference models for latent change factors $q_{\phi^s} \left(\boldsymbol{\theta^s_t} \mid \boldsymbol{s}_t, \boldsymbol{a}_t \right)$ and $q_{\phi^r} \left(\boldsymbol{\theta}_t^r \mid \boldsymbol{s}_t, \boldsymbol{a}_t, r_t \right)$ are parameterised by $\phi^s$ and $\phi^r$, respectively. 
    To model the time-dependence of $\boldsymbol{\theta_t^s}$ and $\boldsymbol{\theta}_t^r$, we use LSTMs~\citep{hochreiter1997long} as inference networks.
    At timestep $t$, the dynamics change factor LSTM infers $q_{\phi^s}\left( \boldsymbol{\theta}^s_{t} \mid \boldsymbol{s}_t, \boldsymbol{a}_t, r_t, \boldsymbol{h}^s_{t-1} \right)$, where $\boldsymbol{h}^s_{t-1} \in \mathbb{R}^L$ is the hidden state in the LSTM.
    Thus we can obtain $\mu_{\phi^s}(\boldsymbol{\tau}_{0:t})$ and $\sigma^2_{\phi^s}(\boldsymbol{\tau}_{0:t})$ using $q_{\phi^s}$, and sample the latent changing factor $\boldsymbol{\theta}_{t}^{\boldsymbol{s}} \sim \mathcal{N}(\mu_{\phi^s}(\boldsymbol{\tau}_{0:t}), \sigma_{\phi^s}^{2}(\boldsymbol{\tau}_{0:t}))$, where $\boldsymbol{\tau}_{0:t}=\left(\boldsymbol{s}_0, \boldsymbol{a}_0, r_0, \boldsymbol{s}_1, \boldsymbol{a}_1, r_2, \ldots, \boldsymbol{s}_t, \boldsymbol{a}_t, r_{t} \right)$.
    Similarly, the reward change factor LSTM infers $q_{\phi^r}( \boldsymbol{\theta}^r_{t} \mid \boldsymbol{s}_t, \boldsymbol{a}_t, r_t, \boldsymbol{h}^r_{t-1})$, where $\boldsymbol{h}^r_{t-1} \in \mathbb{R}^L$ is the hidden state, such that we can sample ${\boldsymbol{\theta}}_{t}^r \sim \mathcal{N}(\mu_{\phi^r}(\boldsymbol{\tau}_{0:t}), \sigma_{\phi^r}^{2}(\boldsymbol{\tau}_{0:t}))$.

    \textbf{CF dynamics network (orange boxes in Fig.~\ref{fig:vae}(b)).}
    We model the dynamics of latent change factors with $p_{\gamma^s} \left (\boldsymbol{\theta^s}_{t+1} \mid \boldsymbol{\theta^s}_{t}, \boldsymbol{C}^{\boldsymbol{\theta^s} \scriptveryshortarrow \boldsymbol{\theta^s}}\right)$ and $p_{\gamma^r} \left (\boldsymbol{\theta^r}_{t+1} \mid \boldsymbol{\theta^r}_{t}, \boldsymbol{C}^{\boldsymbol{\theta^r} \scriptveryshortarrow \boldsymbol{\theta^r}}\right)$. To ensure the Markovianity of $\boldsymbol{\theta}^s_t$ and $\boldsymbol{\theta}^r_t$, we define a loss $\mathcal{L}_{\text{KL}}$ that helps minimize the KL-divergence between $q_\phi$ and $p_\gamma$.
    \begin{equation}
        \label{eq:kl}
        \begin{aligned}
            \mathcal{L}_{\text{KL}} = \sum \limits_{t=2}^T & \text{KL}  \big( q_{\phi^s}\left( \boldsymbol{\theta}^s_{t} \mid \boldsymbol{s}_t, \boldsymbol{a}_t, r_t, \boldsymbol{h}^s_{t-1} \right)) \Vert   {p_{\gamma^s}(\boldsymbol{\theta^s}_{t}|\boldsymbol{\theta^s}_{t-1}; \boldsymbol{C}^{\boldsymbol{\theta^s} \scriptveryshortarrow \boldsymbol{\theta^s}})} \big) \\
            + & \text{KL}  \big( q_{\phi^r}( \boldsymbol{\theta}^r_{t} \mid \boldsymbol{s}_t, \boldsymbol{a}_t, r_t, \boldsymbol{h}^r_{t-1})) \Vert   {p_{\gamma^r}(\boldsymbol{\theta}^r_{t}|\boldsymbol{\theta}^r_{t-1}; \boldsymbol{C}^{\boldsymbol{\theta}^r \scriptveryshortarrow \boldsymbol{\theta}^r})} \big)
        \end{aligned}
    \end{equation}
    
    If we assume that the change between $\boldsymbol{\theta^s}_{t}$ and $\boldsymbol{\theta^s}_{t+1}$, and similarly $\boldsymbol{\theta^r}_{t}$, is smooth, we can add a smoothness loss $\mathcal{L}_{\text{smooth}}$. We provide a smooth loss for discrete changes in Appendix~\ref{app: discrete}.

    \begin{equation} 
        \setlength{\abovedisplayskip}{1pt}
        \setlength{\belowdisplayskip}{1pt}
        \mathcal{L}_{\text{smooth}} =  \sum
        \limits_{t=2}^T \left(|| \boldsymbol{\theta^s}_{t} - \boldsymbol{\theta^s}_{t-1}||_1 +  ||\boldsymbol{\theta^r}_{t} - \boldsymbol{\theta^r}_{t-1}||_1  \right)
    \end{equation}

    \textbf{Transition decoders (purple boxes in Fig.~\ref{fig:vae}(b)).} We learn an approximation of the transition dynamics in Eq.~\ref{eq: adarl1} by learning a reconstruction, parameterized by $\alpha_1$, and a prediction encoder, parametrized by $\alpha_2$. To simplify the formulas, we define $\boldsymbol{C}^{\cdot \scriptveryshortarrow \boldsymbol{s}} := (\boldsymbol{C}^{\boldsymbol{s} \scriptveryshortarrow \boldsymbol{s}}, \boldsymbol{C}^{\boldsymbol{a} \scriptveryshortarrow \boldsymbol{s}}, \boldsymbol{C}^{\boldsymbol{\theta^s} \scriptveryshortarrow \boldsymbol{s}})$.
    At timestep $t$, the reconstruction encoder $p_{\alpha_1}\left(\boldsymbol{s}_t \mid \boldsymbol{s}_{t-1}, \boldsymbol{a}_{t-1}, \boldsymbol{\theta}^s_t; \boldsymbol{C}^{\cdot \scriptveryshortarrow \boldsymbol{s}} \right)$  reconstructs the state from current state $\boldsymbol{s}_t$ with sampled $\boldsymbol{\theta^s}_t$.
    The one-step prediction encoder $p_{\alpha_2}\left(\boldsymbol{s}_{t+1} \mid \boldsymbol{s}_{t}, \boldsymbol{a}_{t}, \boldsymbol{\theta^s}_t \right)$ instead tries to approximate the next state $\boldsymbol{s}_{t+1}$. We do not use the prediction loss, when the one-step prediction is not smooth.
    In particular, we do not use it for the last time-step in episode $i$ if there is a change happening at the first step in episode $(i+1)$, since the states in new episodes will be randomly initiated. We also do not use it in the case of discrete changes at the timesteps $(\tilde{t}_1-1, \ldots, \tilde{t}_M-1)$. The loss functions are:
    \begin{equation}
        \begin{array}{l}
            \mathcal{L}_{\text{rec-dyn}} =
            \sum \limits_{t=1}^{T-2} \mathbb{E}_{\theta^s_t \sim q_{\phi}} \log p_{\alpha_1}(\boldsymbol{s}_{t} |\boldsymbol{s}_{t-1}, \boldsymbol{a}_{t-1},\boldsymbol{\theta}_{t}^s; \boldsymbol{C}^{\cdot \scriptveryshortarrow \boldsymbol{s}}) 
            \\
            \mathcal{L}_{\text{pred-dyn}} =  \sum \limits_{t=1}^{T-2} \mathbb{E}_{\theta^s_t \sim q_{\phi}} \log p_{\alpha_2}(\boldsymbol{s}_{t+1}| \boldsymbol{s}_{t},
            \boldsymbol{a}_{t},\boldsymbol{\theta}_{t}^s)
            \label{eq:dyn}
        \end{array}
    \end{equation}

    \textbf{Reward decoders (green boxes in Fig.~\ref{fig:vae}(b)).} Similarly, we use a reconstruction encoder $p_{\beta_1}\left(r_t \mid \boldsymbol{s}_t, \boldsymbol{a}_t, \boldsymbol{\theta}^r_t, \boldsymbol{c}^{\boldsymbol{s} \scriptveryshortarrow r}, \boldsymbol{c}^{\boldsymbol{a} \scriptveryshortarrow r} \right)$, parameterized by $\beta_1$, and a one-step prediction encoder $p_{\beta_2}\left(r_{t+1} \mid \boldsymbol{s}_{t+1}, \boldsymbol{a}_{t+1}, \boldsymbol{\theta}^r_t \right)$, parametrized by $\beta_2$, to approximate the reward function. Similarly to transition decoders, we do not use the one-step prediction loss, if it is not smooth. The losses are:
    \begin{equation}
        \begin{array}{l}
            \mathcal{L}_{\text{rec-rw}}  = \sum \limits_{t=1}^{T-2} \mathbb{E}_{\theta^r_t \sim q_{\phi}} \log p_{\beta_1}(r_{t} |\boldsymbol{s}_{t}, \boldsymbol{a}_{t},\boldsymbol{\theta}_{t}^r; \boldsymbol{c}^{\boldsymbol{s} \scriptveryshortarrow r}, \boldsymbol{c}^{\boldsymbol{a} \scriptveryshortarrow r})
            \\
            \mathcal{L}_{\text{pred-rw}} = \sum \limits_{t=1}^{T-2} \mathbb{E}_{\theta^r_t \sim q_{\phi}} \log p_{\beta_2}(r_{t+1}| \boldsymbol{s}_{t+1}, \boldsymbol{a}_{t+1},\boldsymbol{\theta}_{t}^r)
            \label{eq:rw}
        \end{array}
    \end{equation}
    
    \textbf{Sparsity loss.} We encourage sparsity in the binary masks $\boldsymbol{C}^{\cdot \veryshortarrow \cdot}$ to improve identifiability, by using following loss with adjustable hyperparameters $(w_1, \mydots, w_7)$, which we learn through grid search.
    
    \begin{equation}
        \begin{aligned}
            \mathcal{L}_{\text{sparse}} &= w_1\| \boldsymbol{C}^{\boldsymbol{s} \scriptveryshortarrow \boldsymbol{s}} \|_1 + w_2\| \boldsymbol{C}^{\boldsymbol{a} \scriptveryshortarrow \boldsymbol{s}} \|_1 + w_3\| \boldsymbol{C}^{\boldsymbol{\theta^s} \scriptveryshortarrow \boldsymbol{s}} \|_1
            + w_4\| \boldsymbol{c}^{\boldsymbol{s} \scriptveryshortarrow r} \|_1 + w_5\| \boldsymbol{c}^{\boldsymbol{a} \scriptveryshortarrow r} \|_1 \\
            & + w_6\| \boldsymbol{C}^{\boldsymbol{\theta^s} \scriptveryshortarrow \boldsymbol{\theta^s}} \|_1 + w_7\| \boldsymbol{C}^{\boldsymbol{\theta}^r \scriptveryshortarrow \boldsymbol{\theta}^r}\|_1
        \end{aligned}
        \label{eq:sparse}
    \end{equation}
    
    The total loss is $\mathcal{L}_{\text{vae}} = k_1(\mathcal{L}_{\text{rec-dyn}} +
    \mathcal{L}_{\text{rec-rw}})+
    k_2(\mathcal{L}_{\text{pred-dyn}} + \mathcal{L}_{\text{pred-rw}})
    - k_3\mathcal{L}_{\text{KL}} - k_4\mathcal{L}_{\text{sparse}} - k_5\mathcal{L}_{\text{smooth}}$, where $(k_1, \mydots, k_5)$: hyper-parameters, which we learn with an automatic weighting method  \citep{liebel2018auxiliary}.

    \textbf{Learning from raw pixels.}
    Our framework can be easily extended to image inputs by adding an encoder $\phi^o$ to learn the latent state variables from pixels, similar to other works ~\citep{ha2018recurrent, meld, huang2021adarl}. In this case, our identifiability results do not hold anymore, since we cannot guarantee that we identify the true causal variables. We describe this component in Appendix~\ref{app: pixel}.

    \begin{algorithm}[!h]
        \caption{Factored Adaptation for non-stationary RL}
        \label{alg:A2}
        \begin{algorithmic}[1]
            \STATE {\bfseries Init:} Env; VAE parameters: $\phi = (\phi^s, \phi^r)$, $\alpha = (\alpha_1, \alpha_2)$, $\beta = (\beta_1, \beta_2)$, $\gamma$; Binary masks: $\boldsymbol{C^{\cdot \scriptveryshortarrow \cdot}}$; Policy parameters: $\psi$; replay buffer: $\mathcal{D}$; Number of episodes: $N$; Episode horizon: $H$; Initial $\boldsymbol{\theta}$: $\boldsymbol{\theta}^s_{\text{old}}$ and  $\boldsymbol{\theta}^r_{\text{old}}$; Length of collected trajectory: $k$.
            \STATE {\bfseries Output:} VAE parameters: $\phi$, $\alpha$, $\beta$, $\gamma$; Policy parameters: $\psi$
            \STATE Collect multiple trajectories of length $k$ : $\boldsymbol{\tau} = \{\boldsymbol{\tau}^{1}_{0:k}, \boldsymbol{\tau}^{2}_{0:k}, \ldots\}$ with policy $\pi_{\psi}$ from Env;
            \STATE Learn FN-VAE from $\boldsymbol{\tau}$, including masks $\boldsymbol{C}^{\cdot \scriptveryshortarrow \cdot}$ that represent the graph $\mathcal{G}$ (Appendix Alg.~\ref{alg:A1})
            \STATE Identify the compact representations $\boldsymbol{s}^{min}$ and change factors $\boldsymbol{\theta}^{min}$ based on $\boldsymbol{C}^{\cdot \scriptveryshortarrow \cdot}$
            \FOR {$n=0,\ldots, N-1$}
            \FOR{$t=0, \ldots, H-1$}
            \STATE Observe $\boldsymbol{s}_{t}$ from Env;
            \IF{$t=0$}
            \STATE $\boldsymbol{\theta}^s \leftarrow \boldsymbol{\theta}^s_{\text{old}}$ and $\boldsymbol{\theta}^r \leftarrow  \boldsymbol{\theta}^r_{\text{old}}$
            \ELSE
            \STATE $\boldsymbol{\theta}^s \leftarrow \boldsymbol{\theta}^s_{t-1}$ and $\boldsymbol{\theta}^r \leftarrow  \boldsymbol{\theta}^r_{t-1}$
            \ENDIF
            \FOR {j = s, r }
            \STATE Infer mean $\mu_{\gamma^j}(\boldsymbol{\theta}^j)$ and variance $\sigma^2_{\gamma^j}(\boldsymbol{\theta}^j)$ of the change parameter $\boldsymbol{\theta}^j_{t}$ via $p_{\gamma^j}$
            \STATE Sample ${\boldsymbol{\theta}}^j_{
                t} \sim \mathcal{N}\left(\mu_{\gamma^j}(\boldsymbol{\theta}^j), \sigma_{\gamma^j}^{2}(\boldsymbol{\theta}^j)\right)$
            \ENDFOR

            \IF{$t=H-1$}
            \STATE $\boldsymbol{\theta}^s_{\text{old}} \leftarrow$  $\boldsymbol{\theta}^s_{t}$ and $\boldsymbol{\theta}^r_{\text{old}} \leftarrow$  $\boldsymbol{\theta}^s_{t}$;
            \ENDIF
            \STATE Generate $\boldsymbol{a}_{t} \sim \pi_\psi (\boldsymbol{a}_{t} \mid \boldsymbol{s}^{min}_{t}, \boldsymbol{\theta}^{min}_{t})$  and receive reward $r_{n,t}$ from Env;
            \STATE Add $(\boldsymbol{s}_{t}, \boldsymbol{a}_{t}, r_t, \boldsymbol{\theta}^s_{t}, \boldsymbol{\theta}^r_{t})$ to replay buffer $\mathcal{D}$;
            \STATE  Extract a trajectory with length $k$ from replay buffer $\mathcal{D}$;
            \STATE Learn FN-VAE (Appendix Alg.~\ref{alg:A1}) with updateG=False (i.e. with fixed masks $\boldsymbol{C}^{\cdot \scriptveryshortarrow \cdot}$);
            \STATE Sample a batch of data from replay buffer $\mathcal{D}$ and update policy network parameters $\psi$;
            \ENDFOR
            \ENDFOR
        \end{algorithmic}
    \end{algorithm}

    \section{FANS-RL: Online Model Estimation and Policy Optimization}
    We propose Factored Adaptation for Non-Stationary RL (FANS-RL), a general algorithm that interleaves model estimation and policy optimization, as shown in Alg.~\ref{alg:A2}.
    After we estimate the initial FN-MDP with the FN-VAE,
    we can identify compact representations $\boldsymbol{s}^{min}$ and $\boldsymbol{\theta}^{min}$ following AdaRL~\citep{huang2021adarl}. In particular, the only dimensions of the state and change factors that are useful for policy learning are those that have a directed path to the reward in the graph $\mathcal{G}$.
    The online policy $\pi_{\psi}\left(\boldsymbol{a}_{t} \mid \boldsymbol{s}_{t}^{min}, q_{\phi}(\boldsymbol{\theta}_t^{min} \mid \boldsymbol{\tau}_{0:t})\right)$ can be learned end-to-end, including learning the FN-VAE, as shown in Alg.~\ref{alg:A2}. We use SAC~\citep{haarnoja2018soft} as our policy learning model, so the policy parameters are $\psi = (\pi, Q)$.

    \textbf{Continuous changes.} In Alg.~\ref{alg:A2} we describe our framework in case of continuous changes that can span across episodes.
    We start by collecting a few trajectories $\tau$ and then learn our initial FN-VAE (Lines 3-4). We can use the graphical structure of the initial FN-VAE to identify the compact representations (Line 5).
    During the online model estimation and policy learning stage, we estimate the latent change factors $\boldsymbol{\theta}^s_{t}$ and $\boldsymbol{\theta}^r_{t}$ using the CF dynamics networks $\gamma^s$ and $\gamma^r$ (Lines 8-20). Since in this case, we assume the dynamics of the change factors are smooth across episodes, at time $t=0$ we will use the last timestep $(H-1)$ of the previous episode as a prior on the change factors (Line 10). Otherwise, we will estimate the change factors using their values in the previous timestep $t-1$.
    We use the estimated latent factors $\boldsymbol{\theta}_{t}$ and observed state $\boldsymbol{s}_{t}$ to generate $\boldsymbol{a}_{t}$ using $\pi_\psi$ and receive a reward $r_{t}$ (Line 21). We add $(\boldsymbol{s}_{t}, \boldsymbol{a}_{t}, r_t, \boldsymbol{\theta}^s_{t}, \boldsymbol{\theta}^r_{t})$ to the replay buffer (Line 22). We now update our estimation of the FN-VAE, but we keep the graph $G$ fixed (Lines 23-24). Finally, we sample a batch of trajectories in the replay buffer and update the policy network $\psi$ (Line 25).

    \textbf{Discrete changes.}
    Since we assume discrete changes happen at specific timestep $\boldsymbol{\tilde{t}} = (\tilde{t}_1, \ldots, \tilde{t}_M)$, we can easily modify Alg.~\ref{alg:A2} for discrete changes, both within-episode and across-episode, by changing Lines 9-20 to only update the change parameters at the timesteps in $\mathbf{\tilde{t}}$, as shown in Appendix Alg.~\ref{alg:A3}.

    \section{Evaluation}
    \label{sec:eval}
    \begin{figure*}[t]
        \centering
        \includegraphics[width=1\linewidth]{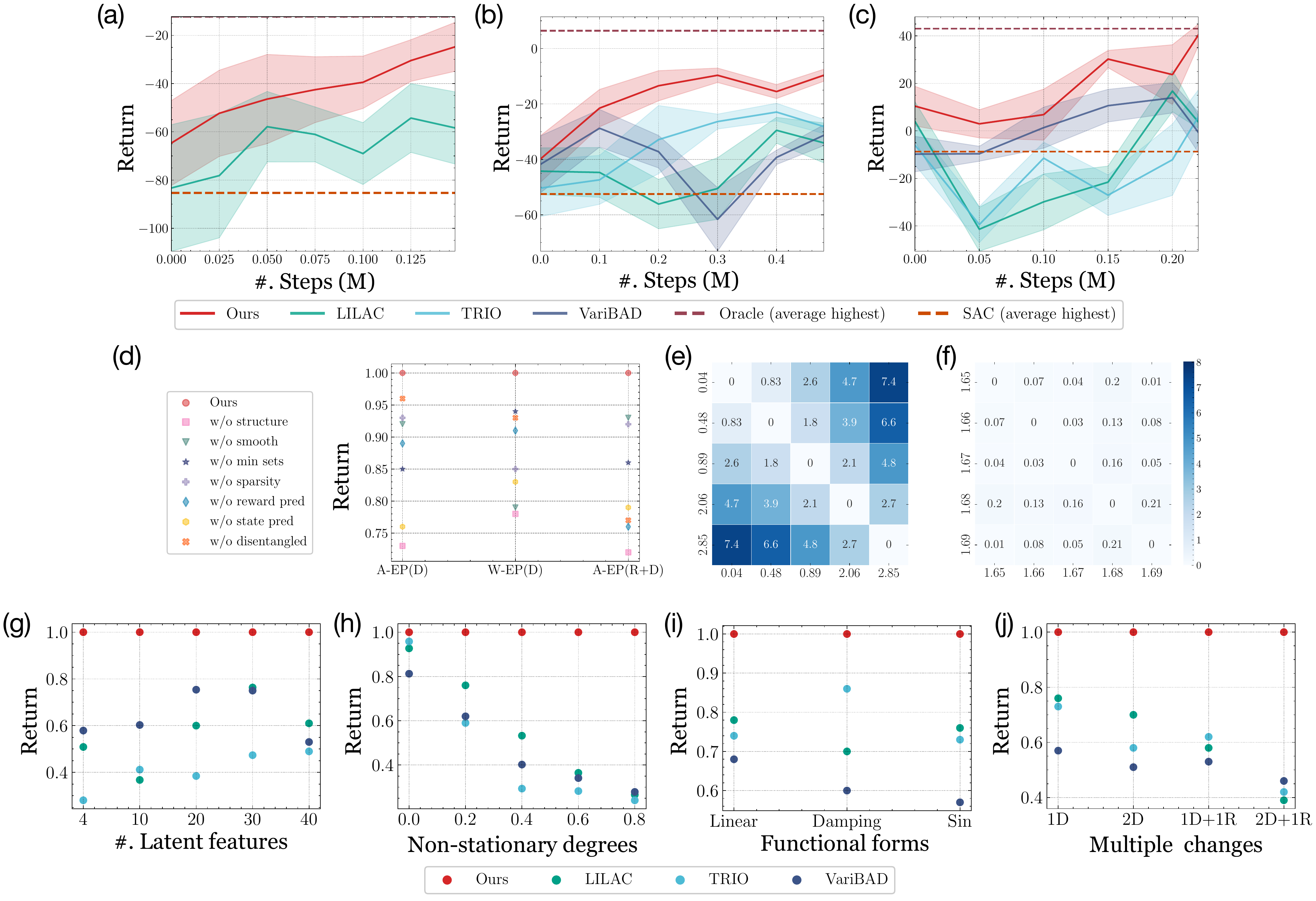}
        \caption{
            Summary of experimental results. (a)-(c). Average return (smoothed) across 10 runs. We only indicate the average of the highest result of all times for oracle and SAC. The shaded region is ($\mu - \sigma, \mu + \sigma$), where $\mu$ is the mean and $\sigma$ is the standard deviation. (a) Half-Cheetah-V3 with continuous (sine) changes on $f_w$; (b)  Sawyer-Reaching with discrete across-episode changes on $\boldsymbol{s}^g$; and (c) Minitaur with discrete across-episode changes on $m$ and $s_{t,v}$ concurrently. (d) Ablation studies on Half-Cheetah with across \& within episode changes on dynamics and across episode changes on both dynamics and rewards. (e)-(f): Pairwise distance on learned $\boldsymbol{\theta}$ between different time steps in Half-Cheetah experiment with across-episode changes on rewards. (g)-(j): Average and normalized final return on 10 runs on Half-Cheetah (g) with within-in episode changes on wind forces using a different number of dimensions in latent representation space; (h) with different non-stationary degrees on across-episode and multi-factor changes; (i) with different functional forms (across episode changes on dynamics); and (j) with different combinations of across-episode changes.}
        \label{fig:reward_cheetah}
    \end{figure*}

    We evaluate our approach on four well-established benchmarks, including Half-Cheetah-V3 from MuJoCo \citep{todorov2012mujoco, brockman2016openai}, Sawyer-Reaching and Sawyer-Peg from Sawyer \citep{yu2020meta, meld}, and Minitaur \citep{tan2018sim}. We modified these tasks to test several non-stationary RL scenarios with continuous and discrete changes.
    The results suggest that FANS-RL can (1) obtain high rewards, (2) learn meaningful mappings to the change factors with compact latent vectors, and (3) be robust to different non-stationary levels, different functional forms (e.g. piecewise linear, sine, damped sine) and multiple concurrent changes. For space limits, we only highlight a subset of results, and report the full results in Appendix~\ref{app:full}. The implementation will be open-sourced at \url{https://bit.ly/3erKoWm}.

    \textbf{Half-Cheetah-v3 in MuJoCo.} 
    In this task, the agent is moving forward using the joint legs and the objective is to achieve the target velocity ${v}^g$.
    We consider both the changes in the dynamics (change of the wind forces $f^w$, change of gravity) and reward functions (change of target velocity ${v}^g$). We also consider changes on the agent's mechanism, where one random joint is disabled. The reward function is $r_t=-\left\|v^{o}_t-v^{g}_t\right\|_{2}-0.05\|{a}_t\|_{2}$, where $v^{o}$ and $a_t$ are the agent's velocity and action, respectively, at timestep $t$. The number of time steps in each episode is $50$. For dynamics, we change the wind forces $f^w$ in the environment. Moreover, in terms of the reward functions, we change the target velocity ${v}^g$ to be a time-dependent variable. 
    The change function in the dynamics $f^w$ can be either continuous or discrete, and discrete changes can happen both within and across episodes. Similarly to LILAC \citep{xie2020deep}, we choose different functions (piecewise linear, sine and damped sine), besides allowing the change at specified intervals, we also allow it to change smoothly.
    The change in the reward function $v^g$ is not generally stable in the continuous case, so we only consider the discrete and across episode change functions $v^g$ for the reward.
    We also design a scenario where dynamic and reward functions change concurrently.
    We report all equations for $g^s$ and $g^r$ in Appendix~\ref{app:mujoco}.

    \textbf{Sawyer.}
    We consider two robotic manipulation tasks, Sawyer-Reaching and Sawyer-Peg. We describe the non-stationary settings in Appendix~\ref{app:sawyer}.
    
    In Sawyer-Reaching, the sawyer arm is trained to reach a target position $\boldsymbol{s}^{g}$.
    The reward $r_t$ is the difference between the current position $\boldsymbol{s}_t$ and the target position $r_t=-\left\|\boldsymbol{s}_t-\boldsymbol{s}^{g}\right\|_{2}$. In this task, we cannot directly modify the dynamics in the simulator, so consider a reward-varying scenario, where the target location changes across each episode following a periodic function.

    In Sawyer-Peg, the robot arm is trained to insert a peg into a designed target location $\boldsymbol{s}^g$. In this task, following~\citep{sodhani2021block}, we consider a reward-varying scenario, where the target location changes across each episode following a periodic function. In order to compare with similar approaches, e.g., ZeUS~\citep{sodhani2021block}, CADM~\citep{lee2020context}, Hyperdynamics~\citep{xian2021hyperdynamics}and Meld~\citep{meld}, we evaluate our method on raw pixels. Following ~\citep{sodhani2021block}, we consider discrete across-episode changes, where the target location can change in each episode, and is randomly sampled from a small interval. 

    \textbf{Minitaur.} A minitaur robot is a simulated quadruped robot with eight direct-drive actuators. The minitaur is trained to move at the target speed  $s_{t,v}$. The reward is $r_t=0.3-\left|0.3-\boldsymbol{s}_{t, v}\right|-0.01 \cdot\left\|\boldsymbol{a}_{t}-2 \boldsymbol{a}_{t-1}+\boldsymbol{a}_{t-2}\right\|_{1}$.
    We modify (1) the mass $m$ (dynamics) and (2) target speed $s_{t,v}$ (reward) of the minitaur. We consider continuous, discrete across-episode and discrete within-episode changes for the dynamics, and across-episode changes for the reward. We describe the settings in Appendix~\ref{app:minitaur}.

    \textbf{Baselines.}
    We compare our approach with a meta-RL approach for stationary RL, \textbf{VariBAD}~\citep{whiteson2021varibad}, a meta-RL approach for non-stationary RL, \textbf{TRIO}~\citep{ijcai2021-399}, as well as with two representative task embedding approaches, \textbf{LILAC}~\citep{xie2020deep} and \textbf{ZeUS}~\citep{sodhani2021block}.
    The details on the meta-learning setups are given in Appendix~\ref{app:hyper}.
    We also compare with stationary RL method, \textbf{SAC}~\citep{haarnoja2018soft}, which will be our lower-bound, and compare with an \textbf{oracle} agent that has access to the full information of non-stationarity (e.g., the wind forces) and can use it to learn a policy, which will be our upper-bound.
    For all baselines, we use SAC for policy learning.
    We compare with the baselines on \textit{average return} and \textit{the compactness of the latent space} in varying degrees and functional forms of non-stationarity.

    \textbf{Experimental results.}
    Fig.~\ref{fig:reward_cheetah}\textcolor{RoyalPurple}{(a)-(c)} shows the smoothed curves of average return across timesteps in a subsection of Half-Cheetah, Sawyer-Reaching, and Minitaur experiments.
    Smoothed curves of other experiments are given in Appendix Fig.~\ref{fig:appendix_cheetah_smooth} and \ref{fig:appendix_minitaur_smooth}.
    For continuous changes, we only compare with LILAC~\citep{xie2020deep} since other approaches are not applicable. We smooth the learning curves by uniformly selecting a few data points for readability. 
    \begin{table}[t]
        \small
        \caption{Average highest return on Sawyer-Peg. The number of random trails is $10$. }
        \begin{tabular}{c|c|c|c|c|c}
            \toprule
            Methods & ZeUS    & Meld    & CaDM   & Hyper-Dynamics & Ours             \\ \hline
            Best Avg. Return   & $12.45$ & $6.38$  & $4.04$ & $3.91$         & $\textbf{18.01}$ \\ 
            \bottomrule
        \end{tabular}
        \label{tbl:pomdp}
    \end{table}
    Table~\ref{tbl:pomdp} shows the results on Sawyer-Peg experiments using raw pixels as input, based on the reported results in~\citep{sodhani2021block}. 
    The learning curve is given in Appendix Fig.~\ref{fig:pomdp}. For a fair comparison, we indicate the best and final average return for each baseline in Fig.~\ref{fig:pomdp}. 
    Full quantitative results of all experiemnts are given in Appendix~\ref{app:full}, including significance tests for Wilcoxon signed-rank test at $\alpha=0.05$ showing that FANS-RL is significantly better than baselines. Finally, we conduct ablation studies on each component of FANS-RL and report some key results in Fig.~\ref{fig:reward_cheetah}\textcolor{RoyalPurple}{(d)}. Full results are in Appendix~\ref{app:full}. 

    \textbf{Ablation studies.} The ablation studies verify the effectiveness of all components, including binary masks/structure, smoothness loss, sparsity loss, reward prediction or state prediction. The results show that \textit{the largest gain is provided by the factored representation}, validating our original hypothesis, followed by state prediction. As expected, reward prediction is also important when there is a nonstationary reward, while smoothness is important for within-episode changes. The disentangled design of CF inference networks is valuable when there are changes on both dynamics and reward functions. Full results are in Appendix~\ref{app:ablation}.

    \textbf{Visualization on the learned $\boldsymbol{\theta}$.}
    To verify that the learned $\boldsymbol{\theta}$ can capture the true change factors, we compute the pairwise distance between learned $\boldsymbol{\theta}^r$ at different time steps. We randomly sample $10$ time steps from the Half-Cheetah experiment with across-episode changes on reward functions.
    Fig.~\ref{fig:reward_cheetah}\textcolor{RoyalPurple}{(e)} gives the pairwise distance of $\boldsymbol{\theta}^r$ among $5$ time steps (from episode $148, 279, 155, 159, 230$) with different target speed values ($0.04, 0.48, 0.89, 2.06, 2.85$), respectively. We can find that there is a positive correlation between the distance of learned $\boldsymbol{\theta}^r$ and values of change factors. Meanwhile, we also sample 5 time steps from episodes $31, 188, 234, 408$ with target values around $1.67$. Fig.~\ref{fig:reward_cheetah}\textcolor{RoyalPurple}{(f)} shows that the distance of $\boldsymbol{\theta}^r$ among these $5$ time steps is very small, indicating that the learned $\boldsymbol{\theta}^r$ are almost the same for the similar values of change factors at different time steps. The visualization suggests that the learned $\boldsymbol{\theta}$ can capture meaningful mappings from the time-varying factors.

    \textbf{Varying latent dimensions, non-stationary levels and functional forms.} Fig.~\ref{fig:reward_cheetah}\textcolor{RoyalPurple}{(g)} shows the normalized averaged return versus the number of two latent features on Half-Cheetah. Our framework can learn a better policy with relatively smaller feature dimensions in the latent space than other approaches. As we show in Appendix~\ref{app:full}, Saywer and Minitaur have the similar trend, where we learn a better policy than the baselines with fewer latent features.
    We also vary the non-stationary levels in Half-Cheetah with discrete across-episode changes on both dynamics and rewards. A higher non-stationary degree indicates a faster change rate. Fig.~\ref{fig:reward_cheetah}\textcolor{RoyalPurple}{(h)} shows that FANS-RL achieves the highest return across all tested non-stationary degrees and that the gap increases with the non-stationarity.
    We also test FANS-RL together with all baselines on different non-stationary functions, including piecewise linear, damping-like and sinusoid waves. Fig.~\ref{fig:reward_cheetah}\textcolor{RoyalPurple}{(i)} displays the results, which indicate that FANS-RL can generally outperform the baselines on diverse non-stationary function forms. Detailed function equations and experimental setups can be referred to Appendix~\ref{appendix_design}.

    \textbf{Multiple change factors} We consider different numbers and types of changes to verify the benefits from the factored structure in FANS-RL. We conduct experiments with 1) only change wind forces (1D); 2) change wind forces and gravity concurrently (2D); 3) change wind force and target speed (1D+1R); and 4) change wind force, gravity, and target speed together (2D+1R) in an across-episode way in Half-Cheetah. From Fig.~\ref{fig:reward_cheetah}\textcolor{RoyalPurple}{(j)}, we find that, thanks to the factored representation, FANS-RL performs better in those more complicated scenarios with multiple numbers and types of changes.

    \section{Related Work}
     \textbf{Non-stationary RL.}
    Early works in non-stationary RL ~\citep{da2006dealing,sutton2007role} only detect changes that have already happened instead of anticipating them.
    If the evolution of non-stationary environments is a (Semi)-Markov chain, one can deal with non-stationarity with HM-MDPs~\citep{choi1999environment} or HS3MDPs~\citep{hadoux2014solving}.
    Several methods learn to anticipate changes in non-stationary deep RL. \citet{chandak2020optimizing} propose to maximize future rewards without explicitly modeling non-stationary environments. MBCD~\citep{2021aamas} uses change-point detection to decide if the agent should learn a novel policy or reuse previously trained policies.
    \citet{alshedivat2018continuous} extend  
    MAML~\citep{finn2017model}
    for the non-stationary setting, but do not explicitly model the temporal changes.
    TRIO~\citep{ijcai2021-399} tracks the non-stationarity by inferring the evolution of latent parameters, which captures the temporal change factors during the meta-testing phase. ReBAL and GrBAL~\citep{clavera2018learning} meta-train the dynamic prior, which adapts to the local contexts efficiently. However, these methods have to meta-train the model on a set of non-stationary tasks, which may not be accessible in real-world applications.
    Another line of research directly learns the latent representation to capture the non-stationary components. In particular, LILAC~\citep{xie2020deep} and ZeUS \cite{sodhani2021block} leverage latent variable models to directly model the change factors in environments, and \citet{RL_Guo22} estimate latent vectors that describe the non-stationary or variable part of the dynamics.
    Our approach fits in this line of work; however, as opposed to these methods, which model all temporal changes using a shared embedding space or a mixture models of dynamics, we use a factored representation.

    \textbf{Factored MDPs.}
    Among works on factored representations in stationary settings, \citet{hallak2015off} learn factored MDPs ~\citep{boutilier2000stochastic, kearns1999efficient, osband2014near} to improve sample efficiency in model-based off-policy RL. \citet{balaji2020factoredrl} employ known factored MDP to improve both model-free and model-based RL algorithms. Working memory graphs~\citep{loynd2020working} learn the factored observation space using Transformers. NeverNet~\citep{wang2018nervenet} factorizes the state-action space through graph neural networks.
    \citet{zholus2022factorized} factorize the visual states into actor-related, object of manipulation, and the latent
    influence factor between these two states to achieve sparse interaction in robotic control tasks.
Differently from methods modeling the factored dynamics and rewards only, \citet{zhou2022policy} and \citet{tang2022leveraging} explore factored entities and actions, respectively.
    \citet{zhou2022policy} extend the factored MDP to the multi-entity environments, learning the compositional structure in tasks with multiple entities involved. \citet{tang2022leveraging} leverage the factored action space to improve the sample efficiency in healthcare applications. However, the factored structures in these two works are derived from inductive bias or domain experts. 
    
    \textbf{Factored MDPs, causality and multiple environments.} Several works leverage factored MDPs to improve the sample efficiency and generalization of RL. Most of these works focus on learning an invariant (e.g. causal) representation that fits all environments and do not support learning latent change factors. 
    For example, AFaR~\citep{sodhani2022improving} learns factored value functions via attention modules to improve sample efficiency and generalization across tasks.
    \citet{mutti2022provably} learn a causal structure that can generalize across a family of MDPs under different environments, assuming that there are no latent causal factors. Similarly, \citet{wang22ae} propose to learn the factored and causal dynamics in model-based RL, in which the learned causal structure is assumed to generalize to unseen states. By deriving the state abstraction based on the causal graph, it can improve both the sample efficiency and generalizability of policy learning of MBRL. While most of the previous works focus on learning a causal structure that is time-invariant, \citet{pitis2022mocoda} learn a locally causal dynamics that can vary at each timestep and use it to generate counterfactual dynamics transitions in RL.
    While the previously described methods focus on a domain or time-invariant representations, in our work we also focus on modelling domain or time-specific factors in the form of latent change factors. A related work, AdaRL~\citep{huang2021adarl} learns the factored representation and the model change factors under heterogeneous domains with varying dynamics or reward functions. However, AdaRL is designed only for the domain adaptation setting and constant change factors without considering non-stationarity.

    \textbf{Independent causal mechanisms.}
    Another related line of work is based on independent causal mechanisms~\citep{scholkopf2021toward, parascandolo2018learning}. Recurrent independent mechanisms (RIMs) \citep{goyal2021recurrent} learn the independent transition dynamics in RL with sparse communication among the latent states. Meta-RIMs~\citep{madan2021fast} leverage meta-learning and soft attention to learn a set of RIMs with competition and communication.
    As opposed to these works, we do not assume that the mechanisms are independent and we learn the factored structure among all components in MDPs with a DBN.

    \section{Conclusions, Limitations and Future Work}
    \label{sec:limit}
    We describe Factored Adaptation for Non-Stationary RL (FANS-RL), a framework that learns a factored representation for non-stationarity that can be combined with any RL algorithm.
    We formalize our problem as a Factored Non-stationary MDP (FN-MDP), augmenting a factored MDP with latent change factors evolving as a Markov process. FN-MDPs do not model a family of MDPs, but instead include the dynamics of change factors analogously to the dynamics of the states. This allows us to capture different non-stationarities, e.g., continuous and discrete changes, both within and across different episodes.
    To learn FN-MDPs we propose FN-VAEs, which we integrated in FANS-RL, an online model estimation and policy evaluation approach.
    We evaluate FANS-RL on benchmarks for continuous control and robotic manipulation, also with pixel inputs, and show it outperforms the state of the art on rewards and robustness to varying degrees of non-stationarity.
    Learning the graph in model estimation is computationally expensive, which limits the scalability of our approach. In future work, we plan to meta-learn the graphs among different tasks to improve the scalability of our approach and its applicability to complex RL problems, e.g., multi-agent RL.

    \subsection*{Acknowledgments}
    FF would like to acknowledge the CityU High-Performance Computing (HPC) resources in Hong Kong SAR and LISA HPC from the SURF.nl. SM was supported by the MIT-IBM Watson AI Lab and the Air Force Office of Scientific Research under award number FA8655-22-1-7155. BH would like to acknowledge the support of Apple Scholarship. 
    KZ was partially supported by the National Institutes of Health (NIH) under Contract R01HL159805, by the NSF-Convergence Accelerator Track-D award \#2134901, by a grant from Apple Inc., and by a grant from KDDI Research Inc.

    \newpage
    \bibliographystyle{unsrtnat}
    \bibliography{ref}

\begin{thebibliography}{53}
\providecommand{\natexlab}[1]{#1}
\providecommand{\url}[1]{\texttt{#1}}
\expandafter\ifx\csname urlstyle\endcsname\relax
  \providecommand{\doi}[1]{doi: #1}\else
  \providecommand{\doi}{doi: \begingroup \urlstyle{rm}\Url}\fi

\bibitem[Sutton and Barto(2018)]{sutton2018reinforcement}
Richard~S Sutton and Andrew~G Barto.
\newblock \emph{Reinforcement learning: An introduction}.
\newblock MIT press, 2018.

\bibitem[Dulac-Arnold et~al.(2021)Dulac-Arnold, Levine, Mankowitz, Li,
  Paduraru, Gowal, and Hester]{dulac2021challenges}
Gabriel Dulac-Arnold, Nir Levine, Daniel~J Mankowitz, Jerry Li, Cosmin
  Paduraru, Sven Gowal, and Todd Hester.
\newblock Challenges of real-world reinforcement learning: definitions,
  benchmarks and analysis.
\newblock \emph{Machine Learning}, pages 1--50, 2021.

\bibitem[Padakandla(2021)]{padakandla2021survey}
Sindhu Padakandla.
\newblock A survey of reinforcement learning algorithms for dynamically varying
  environments.
\newblock \emph{ACM Computing Surveys (CSUR)}, 54\penalty0 (6):\penalty0 1--25,
  2021.

\bibitem[Al-Shedivat et~al.(2018)Al-Shedivat, Bansal, Burda, Sutskever,
  Mordatch, and Abbeel]{alshedivat2018continuous}
Maruan Al-Shedivat, Trapit Bansal, Yura Burda, Ilya Sutskever, Igor Mordatch,
  and Pieter Abbeel.
\newblock Continuous adaptation via meta-learning in nonstationary and
  competitive environments.
\newblock In \emph{International Conference on Learning Representations
  (ICLR)}, 2018.

\bibitem[Poiani et~al.(2021)Poiani, Tirinzoni, and Restelli]{ijcai2021-399}
Riccardo Poiani, Andrea Tirinzoni, and Marcello Restelli.
\newblock Meta-reinforcement learning by tracking task non-stationarity.
\newblock In \emph{International Joint Conference on Artificial Intelligence
  (IJCAI)}, pages 2899--2905, 2021.

\bibitem[Finn et~al.(2017)Finn, Abbeel, and Levine]{finn2017model}
Chelsea Finn, Pieter Abbeel, and Sergey Levine.
\newblock Model-agnostic meta-learning for fast adaptation of deep networks.
\newblock In \emph{International Conference on Machine Learning (ICML)}, pages
  1126--1135. PMLR, 2017.

\bibitem[Xie et~al.(2021)Xie, Harrison, and Finn]{xie2020deep}
Annie Xie, James Harrison, and Chelsea Finn.
\newblock Deep reinforcement learning amidst lifelong non-stationarity.
\newblock \emph{International Conference on Machine Learning (ICML)}, 2021.

\bibitem[Sodhani et~al.(2022{\natexlab{a}})Sodhani, Meier, Pineau, and
  Zhang]{sodhani2021block}
Shagun Sodhani, Franziska Meier, Joelle Pineau, and Amy Zhang.
\newblock Block contextual mdps for continual learning.
\newblock In \emph{Learning for Dynamics and Control Conference (L4DC)}, pages
  608--623. PMLR, 2022{\natexlab{a}}.

\bibitem[Huang et~al.(2020)Huang, Zhang, Zhang, Ramsey, Sanchez-Romero,
  Glymour, and Sch{\"o}lkopf]{CDNOD_Huang20}
Biwei Huang, Kun Zhang, Jiji Zhang, Joseph~D Ramsey, Ruben Sanchez-Romero,
  Clark Glymour, and Bernhard Sch{\"o}lkopf.
\newblock Causal discovery from heterogeneous/nonstationary data.
\newblock \emph{Journal of Machine Learning Research}, 21(89):\penalty0
  612--634, 2020.

\bibitem[Zhang et~al.(2020)Zhang, Gong, Stojanov, Huang, Liu, and
  Glymour]{DA_Zhang20}
Kun Zhang, Mingming Gong, Petar Stojanov, Biwei Huang, Qingsong Liu, and Clark
  Glymour.
\newblock Domain adaptation as a problem of inference on graphical models.
\newblock \emph{Advances in Neural Information Processing Systems (NeurIPS)},
  33:\penalty0 4965--4976, 2020.

\bibitem[Boutilier et~al.(2000)Boutilier, Dearden, and
  Goldszmidt]{boutilier2000stochastic}
Craig Boutilier, Richard Dearden, and Mois{\'e}s Goldszmidt.
\newblock Stochastic dynamic programming with factored representations.
\newblock \emph{Artificial intelligence}, 121\penalty0 (1-2):\penalty0 49--107,
  2000.

\bibitem[Kearns and Koller(1999)]{kearns1999efficient}
Michael Kearns and Daphne Koller.
\newblock Efficient reinforcement learning in factored mdps.
\newblock In \emph{International Joint Conference on Artificial Intelligence
  (IJCAI)}, volume~16, pages 740--747, 1999.

\bibitem[Osband and Van~Roy(2014)]{osband2014near}
Ian Osband and Benjamin Van~Roy.
\newblock Near-optimal reinforcement learning in factored mdps.
\newblock \emph{Advances in Neural Information Processing Systems (NeurIPS)},
  27:\penalty0 604--612, 2014.

\bibitem[Huang et~al.(2022)Huang, Feng, Lu, Magliacane, and
  Zhang]{huang2021adarl}
Biwei Huang, Fan Feng, Chaochao Lu, Sara Magliacane, and Kun Zhang.
\newblock Adarl: What, where, and how to adapt in transfer reinforcement
  learning.
\newblock \emph{International Conference on Learning Representations (ICLR)},
  2022.

\bibitem[Hochreiter and Schmidhuber(1997)]{hochreiter1997long}
Sepp Hochreiter and J{\"u}rgen Schmidhuber.
\newblock Long short-term memory.
\newblock \emph{Neural computation}, 9\penalty0 (8):\penalty0 1735--1780, 1997.

\bibitem[Liebel and K{\"o}rner(2018)]{liebel2018auxiliary}
Lukas Liebel and Marco K{\"o}rner.
\newblock Auxiliary tasks in multi-task learning.
\newblock \emph{arXiv preprint arXiv:1805.06334}, 2018.

\bibitem[Ha and Schmidhuber(2018)]{ha2018recurrent}
David Ha and J{\"u}rgen Schmidhuber.
\newblock Recurrent world models facilitate policy evolution.
\newblock In \emph{Advances in Advances in Neural Information Processing
  Systems (NeurIPS)}, pages 2455--2467, 2018.

\bibitem[Zhao et~al.(2020)Zhao, Nagabandi, Rakelly, Finn, and Levine]{meld}
Zihao Zhao, Anusha Nagabandi, Kate Rakelly, Chelsea Finn, and Sergey Levine.
\newblock {MELD:} meta-reinforcement learning from images via latent state
  models.
\newblock In \emph{Conference on Robot Learning (CORL)}, pages 1246--1261.
  {PMLR}, 2020.

\bibitem[Haarnoja et~al.(2018)Haarnoja, Zhou, Abbeel, and
  Levine]{haarnoja2018soft}
Tuomas Haarnoja, Aurick Zhou, Pieter Abbeel, and Sergey Levine.
\newblock Soft actor-critic: Off-policy maximum entropy deep reinforcement
  learning with a stochastic actor.
\newblock In \emph{International Conference on Machine Learning (ICML)}, pages
  1861--1870. PMLR, 2018.

\bibitem[Todorov et~al.(2012)Todorov, Erez, and Tassa]{todorov2012mujoco}
Emanuel Todorov, Tom Erez, and Yuval Tassa.
\newblock Mujoco: A physics engine for model-based control.
\newblock In \emph{2012 IEEE/RSJ International Conference on Intelligent Robots
  and Systems}, pages 5026--5033. IEEE, 2012.

\bibitem[Brockman et~al.(2016)Brockman, Cheung, Pettersson, Schneider,
  Schulman, Tang, and Zaremba]{brockman2016openai}
Greg Brockman, Vicki Cheung, Ludwig Pettersson, Jonas Schneider, John Schulman,
  Jie Tang, and Wojciech Zaremba.
\newblock Openai gym.
\newblock \emph{arXiv preprint arXiv:1606.01540}, 2016.

\bibitem[Yu et~al.(2020)Yu, Quillen, He, Julian, Hausman, Finn, and
  Levine]{yu2020meta}
Tianhe Yu, Deirdre Quillen, Zhanpeng He, Ryan Julian, Karol Hausman, Chelsea
  Finn, and Sergey Levine.
\newblock Meta-world: A benchmark and evaluation for multi-task and meta
  reinforcement learning.
\newblock In \emph{Conference on Robot Learning (CORL)}, pages 1094--1100.
  PMLR, 2020.

\bibitem[Tan et~al.(2018)Tan, Zhang, Coumans, Iscen, Bai, Hafner, Bohez, and
  Vanhoucke]{tan2018sim}
Jie Tan, Tingnan Zhang, Erwin Coumans, Atil Iscen, Yunfei Bai, Danijar Hafner,
  Steven Bohez, and Vincent Vanhoucke.
\newblock Sim-to-real: Learning agile locomotion for quadruped robots.
\newblock In \emph{Robotics: Science and Systems}, 2018.

\bibitem[Lee et~al.(2020)Lee, Seo, Lee, Lee, and Shin]{lee2020context}
Kimin Lee, Younggyo Seo, Seunghyun Lee, Honglak Lee, and Jinwoo Shin.
\newblock Context-aware dynamics model for generalization in model-based
  reinforcement learning.
\newblock In \emph{International Conference on Machine Learning (ICML)}, pages
  5757--5766. PMLR, 2020.

\bibitem[Xian et~al.(2021)Xian, Lal, Tung, Platanios, and
  Fragkiadaki]{xian2021hyperdynamics}
Zhou Xian, Shamit Lal, Hsiao-Yu Tung, Emmanouil~Antonios Platanios, and
  Katerina Fragkiadaki.
\newblock Hyperdynamics: Meta-learning object and agent dynamics with
  hypernetworks.
\newblock In \emph{International Conference on Learning Representations
  (ICLR)}, 2021.

\bibitem[Zintgraf et~al.(2021)Zintgraf, Schulze, Lu, Feng, Igl, Shiarlis, Gal,
  Hofmann, and Whiteson]{whiteson2021varibad}
Luisa Zintgraf, Sebastian Schulze, Cong Lu, Leo Feng, Maximilian Igl, Kyriacos
  Shiarlis, Yarin Gal, Katja Hofmann, and Shimon Whiteson.
\newblock Varibad: Variational bayes-adaptive deep rl via meta-learning.
\newblock \emph{Journal of Machine Learning Research}, 22\penalty0
  (289):\penalty0 1--39, 2021.

\bibitem[Da~Silva et~al.(2006)Da~Silva, Basso, Bazzan, and
  Engel]{da2006dealing}
Bruno~C Da~Silva, Eduardo~W Basso, Ana~LC Bazzan, and Paulo~M Engel.
\newblock Dealing with non-stationary environments using context detection.
\newblock In \emph{International Conference on Machine Learning (ICML)}, pages
  217--224, 2006.

\bibitem[Sutton et~al.(2007)Sutton, Koop, and Silver]{sutton2007role}
Richard~S Sutton, Anna Koop, and David Silver.
\newblock On the role of tracking in stationary environments.
\newblock In \emph{International Conference on Machine Learning (ICML)}, pages
  871--878, 2007.

\bibitem[Choi et~al.(1999)Choi, Yeung, and Zhang]{choi1999environment}
Samuel Choi, Dit-Yan Yeung, and Nevin Zhang.
\newblock An environment model for nonstationary reinforcement learning.
\newblock \emph{Advances in neural information processing systems (NeurIPS)},
  12, 1999.

\bibitem[Hadoux et~al.(2014)Hadoux, Beynier, and Weng]{hadoux2014solving}
Emmanuel Hadoux, Aur{\'e}lie Beynier, and Paul Weng.
\newblock Solving hidden-semi-markov-mode markov decision problems.
\newblock In \emph{International Conference on Scalable Uncertainty
  Management}, pages 176--189. Springer, 2014.

\bibitem[Chandak et~al.(2020)Chandak, Theocharous, Shankar, White, Mahadevan,
  and Thomas]{chandak2020optimizing}
Yash Chandak, Georgios Theocharous, Shiv Shankar, Martha White, Sridhar
  Mahadevan, and Philip Thomas.
\newblock Optimizing for the future in non-stationary mdps.
\newblock In \emph{International Conference on Machine Learning (ICML)}, pages
  1414--1425. PMLR, 2020.

\bibitem[Alegre et~al.(2021)Alegre, Bazzan, and {\relax da}~Silva]{2021aamas}
Lucas~N. Alegre, Ana L.~C. Bazzan, and Bruno~C. {\relax da}~Silva.
\newblock Minimum-delay adaptation in non-stationary reinforcement learning via
  online high-confidence change-point detection.
\newblock In \emph{Proceedings of the 20th International Conference on
  Autonomous Agents and Multiagent Systems (AAMAS)}, pages 97--105, 2021.

\bibitem[Clavera et~al.(2019)Clavera, Nagabandi, Liu, Fearing, Abbeel, Levine,
  and Finn]{clavera2018learning}
Ignasi Clavera, Anusha Nagabandi, Simin Liu, Ronald~S. Fearing, Pieter Abbeel,
  Sergey Levine, and Chelsea Finn.
\newblock Learning to adapt in dynamic, real-world environments through
  meta-reinforcement learning.
\newblock In \emph{International Conference on Learning Representations
  (ICLR)}, 2019.

\bibitem[Guo et~al.(2022)Guo, Gong, and Tao]{RL_Guo22}
Jixian Guo, Mingming Gong, and Dacheng Tao.
\newblock A relational intervention approach for unsupervised dynamics
  generalization in model-based reinforcement learnings.
\newblock \emph{International Conference on Learning Representations (ICLR)},
  2022.

\bibitem[Hallak et~al.(2015)Hallak, Schnitzler, Mann, and
  Mannor]{hallak2015off}
Assaf Hallak, Fran{\c{c}}ois Schnitzler, Timothy Mann, and Shie Mannor.
\newblock Off-policy model-based learning under unknown factored dynamics.
\newblock In \emph{International Conference on Machine Learning (ICML)}, pages
  711--719. PMLR, 2015.

\bibitem[Balaji et~al.(2020)Balaji, Christodoulou, Lu, Jeon, and
  Bell-Masterson]{balaji2020factoredrl}
Bharathan Balaji, Petros Christodoulou, Xiaoyu Lu, Byungsoo Jeon, and Jordan
  Bell-Masterson.
\newblock Factoredrl: Leveraging factored graphs for deep reinforcement
  learning.
\newblock \emph{NeurIPS Deep Reinforcement Learning Workshop}, 2020.

\bibitem[Loynd et~al.(2020)Loynd, Fernandez, Celikyilmaz, Swaminathan, and
  Hausknecht]{loynd2020working}
Ricky Loynd, Roland Fernandez, Asli Celikyilmaz, Adith Swaminathan, and Matthew
  Hausknecht.
\newblock Working memory graphs.
\newblock In \emph{International Conference on Machine Learning (ICML)}, pages
  6404--6414. PMLR, 2020.

\bibitem[Wang et~al.(2018)Wang, Liao, Ba, and Fidler]{wang2018nervenet}
Tingwu Wang, Renjie Liao, Jimmy Ba, and Sanja Fidler.
\newblock Nervenet: Learning structured policy with graph neural networks.
\newblock In \emph{International conference on learning representations
  (ICLR)}, 2018.

\bibitem[Zholus et~al.(2022)Zholus, Ivchenkov, and Panov]{zholus2022factorized}
Artem Zholus, Yaroslav Ivchenkov, and Aleksandr Panov.
\newblock Factorized world models for learning causal relationships.
\newblock In \emph{ICLR Workshop on the Elements of Reasoning: Objects,
  Structure and Causality}, 2022.

\bibitem[Zhou et~al.(2022)Zhou, Kumar, Finn, and Rajeswaran]{zhou2022policy}
Allan Zhou, Vikash Kumar, Chelsea Finn, and Aravind Rajeswaran.
\newblock Policy architectures for compositional generalization in control.
\newblock In \emph{ICML Workshop on Spurious Correlations, Invariance and
  Stability}, 2022.

\bibitem[Tang et~al.(2022)Tang, Makar, Sjoding, Doshi-Velez, and
  Wiens]{tang2022leveraging}
Shengpu Tang, Maggie Makar, Michael Sjoding, Finale Doshi-Velez, and Jenna
  Wiens.
\newblock Leveraging factored action spaces for efficient offline reinforcement
  learning in healthcare.
\newblock In \emph{Decision Awareness in Reinforcement Learning Workshop at
  ICML}, 2022.

\bibitem[Sodhani et~al.(2022{\natexlab{b}})Sodhani, Levine, and
  Zhang]{sodhani2022improving}
Shagun Sodhani, Sergey Levine, and Amy Zhang.
\newblock Improving generalization with approximate factored value functions.
\newblock In \emph{ICLR Workshop on the Elements of Reasoning: Objects,
  Structure and Causality}, 2022{\natexlab{b}}.

\bibitem[Mutti et~al.(2022)Mutti, De~Santi, Rossi, Calderon, Bronstein, and
  Restelli]{mutti2022provably}
Mirco Mutti, Riccardo De~Santi, Emanuele Rossi, Juan~Felipe Calderon, Michael
  Bronstein, and Marcello Restelli.
\newblock Provably efficient causal model-based reinforcement learning for
  systematic generalization.
\newblock \emph{arXiv preprint arXiv:2202.06545}, 2022.

\bibitem[Wang et~al.(2022)Wang, Xiao, Xu, Zhu, and Stone]{wang22ae}
Zizhao Wang, Xuesu Xiao, Zifan Xu, Yuke Zhu, and Peter Stone.
\newblock Causal dynamics learning for task-independent state abstraction.
\newblock In \emph{International Conference on Machine Learning (ICML)}, pages
  23151--23180. PMLR, 2022.

\bibitem[Pitis et~al.(2022)Pitis, Creager, Mandlekar, and
  Garg]{pitis2022mocoda}
Silviu Pitis, Elliot Creager, Ajay Mandlekar, and Animesh Garg.
\newblock Mocoda: Model-based counterfactual data augmentation.
\newblock In \emph{Decision Awareness in Reinforcement Learning Workshop at
  ICML}, 2022.

\bibitem[Sch{\"o}lkopf et~al.(2021)Sch{\"o}lkopf, Locatello, Bauer, Ke,
  Kalchbrenner, Goyal, and Bengio]{scholkopf2021toward}
Bernhard Sch{\"o}lkopf, Francesco Locatello, Stefan Bauer, Nan~Rosemary Ke, Nal
  Kalchbrenner, Anirudh Goyal, and Yoshua Bengio.
\newblock Toward causal representation learning.
\newblock \emph{Proceedings of the IEEE}, 109\penalty0 (5):\penalty0 612--634,
  2021.

\bibitem[Parascandolo et~al.(2018)Parascandolo, Kilbertus, Rojas-Carulla, and
  Sch{\"o}lkopf]{parascandolo2018learning}
Giambattista Parascandolo, Niki Kilbertus, Mateo Rojas-Carulla, and Bernhard
  Sch{\"o}lkopf.
\newblock Learning independent causal mechanisms.
\newblock In \emph{International Conference on Machine Learning (ICML)}, pages
  4036--4044. PMLR, 2018.

\bibitem[Goyal et~al.(2021)Goyal, Lamb, Hoffmann, Sodhani, Levine, Bengio, and
  Sch{\"o}lkopf]{goyal2021recurrent}
Anirudh Goyal, Alex Lamb, Jordan Hoffmann, Shagun Sodhani, Sergey Levine,
  Yoshua Bengio, and Bernhard Sch{\"o}lkopf.
\newblock Recurrent independent mechanisms.
\newblock In \emph{International Conference on Learning Representations
  (ICLR)}, 2021.

\bibitem[Madan et~al.(2021)Madan, Ke, Goyal, Sch{\"o}lkopf, and
  Bengio]{madan2021fast}
Kanika Madan, Nan~Rosemary Ke, Anirudh Goyal, Bernhard Sch{\"o}lkopf, and
  Yoshua Bengio.
\newblock Fast and slow learning of recurrent independent mechanisms.
\newblock In \emph{International Conference on Learning Representations
  (ICLR)}, 2021.

\bibitem[Spirtes et~al.(1993)Spirtes, Glymour, and Scheines]{SGS93}
Peter Spirtes, Clark~N Glymour, and Richard Scheines.
\newblock \emph{Causation, Prediction, and Search}.
\newblock Spring-Verlag Lectures in Statistics, 1993.

\bibitem[Murphy(2002)]{murphy2002dynamic}
Kevin~Patrick Murphy.
\newblock \emph{Dynamic bayesian networks: representation, inference and
  learning}.
\newblock University of California, Berkeley, 2002.

\bibitem[Pearl(2000)]{Pearl00}
Judea Pearl.
\newblock \emph{Causality: Models, Reasoning, and Inference}.
\newblock Cambridge University Press, Cambridge, 2000.

\bibitem[Peters et~al.(2013)Peters, Janzing, and
  Sch{\"o}lkopf]{peters2013causal}
Jonas Peters, Dominik Janzing, and Bernhard Sch{\"o}lkopf.
\newblock Causal inference on time series using restricted structural equation
  models.
\newblock In \emph{Advances in Neural Information Processing Systems
  (NeurIPS)}, pages 154--162, 2013.

\end{thebibliography}

    \newpage
    \appendix
    \onecolumn
    \section*{Checklist}

    \begin{enumerate}

        \item For all authors...
        \begin{enumerate}
            \item Do the main claims made in the abstract and introduction accurately reflect the paper's contributions and scope?
            \answerYes{There are theoretical claims backed by two theoretical results in Prop.~\ref{prop:identifibility} and Prop.~\ref{prop:part-identifibility}. The empirical claims are backed by an extensive evaluation in Section~\ref{sec:eval}.}
            \item Did you describe the limitations of your work?
            \answerYes{We did provide a discussion on the limitations and potential future work in Section~\ref{sec:limit}.}
            \item Did you discuss any potential negative societal impacts of your work?
            \answerYes{We discussed the broader societal impact in Appendix~\ref{app:impact}.}
            \item Have you read the ethics review guidelines and ensured that your paper conforms to them?
            \answerYes{}
        \end{enumerate}

        \item If you are including theoretical results...
        \begin{enumerate}
            \item Did you state the full set of assumptions of all theoretical results?
            \answerYes{The assumptions are stated in Prop.~\ref{prop:identifibility} and Prop.~\ref{prop:part-identifibility}.}
            \item Did you include complete proofs of all theoretical results?
            \answerYes{The complete proofs are in the Appendix~\ref{app:proofs}}
        \end{enumerate}

        \item If you ran experiments...
        \begin{enumerate}
            \item Did you include the code, data, and instructions needed to reproduce the main experimental results (either in the supplemental material or as a URL)?
            \answerYes{We will include the code and instructions in the supplementary files.}
            \item Did you specify all the training details (e.g., data splits, hyperparameters, how they were chosen)?
            \answerYes{The details are in the Appendix~\ref{app:hyper}.}
            \item Did you report error bars (e.g., with respect to the random seed after running experiments multiple times)?
            \answerYes{The learning curves (Fig.~\ref{fig:reward_cheetah} \textcolor{RoyalPurple}{(a-c)}, Appendix Fig.~\ref{fig:appendix_cheetah_smooth}-\ref{fig:appendix_minitaur_smooth}) include error bars with $10$ runs.}
            \item Did you include the total amount of compute and the type of resources used (e.g., type of GPUs, internal cluster, or cloud provider)?
            \answerYes{The information of computational resource is given in Appendix~\ref{app:platforms}. }
        \end{enumerate}

        \item If you are using existing assets (e.g., code, data, models) or curating/releasing new assets...
        \begin{enumerate}
            \item If your work uses existing assets, did you cite the creators?
            \answerYes{Yes, we have cited the OpenAI Gym and Mujoco libraries in our paper.}
            \item Did you mention the license of the assets?
            \answerYes{Yes, the licenses are given in Appendix~\ref{app:license}.}
            \item Did you include any new assets either in the supplemental material or as a URL?
            \answerNA{}
            \item Did you discuss whether and how consent was obtained from people whose data you're using/curating?
            \answerNA{}
            \item Did you discuss whether the data you are using/curating contains personally identifiable information or offensive content?
            \answerNA{}
        \end{enumerate}

        \item If you used crowdsourcing or conducted research with human subjects...
        \begin{enumerate}
            \item Did you include the full text of instructions given to participants and screenshots, if applicable?
            \answerNA{}
            \item Did you describe any potential participant risks, with links to Institutional Review Board (IRB) approvals, if applicable?
            \answerNA{}
            \item Did you include the estimated hourly wage paid to participants and the total amount spent on participant compensation?
            \answerNA{}
        \end{enumerate}

    \end{enumerate}


    \newpage
    \appendix
    \hrule height 3pt
    \vskip 5mm
    \begin{center}
        \Large{\textbf{Appendix for Factored Adaptation for\\ Non-Stationary Reinforcement Learning}}\\
    \end{center}
    \vskip 3mm
    \hrule height 1pt
    \setcounter{figure}{0}
    \renewcommand{\thefigure}{A\arabic{figure}}
    \setcounter{table}{0}
    \renewcommand{\thetable}{A\arabic{table}}
    \setcounter{algocf}{0}
    \renewcommand{\thealgocf}{A\arabic{algocf}}
    \renewcommand{\thesection}{\Alph{section}}
    \renewcommand{\theequation}{A\arabic{equation}}
    \setcounter{equation}{0}

    \section{Broader Impact} \label{app:impact}

    Our work is a first exploration in leveraging factored representations for non-stationary RL in order to improve adaptation. One of the limitations of our current approach, and similarly to a few other non-stationary RL approaches, is that we do not provide theoretical guarantees in terms of adapting to non-stationarity. This limits the applicability of these approaches in safety-critical applications, e.g. self-driving cars, or in adversarial environment.
    One of the future directions of this work would be to provide theoretical guarantees under reasonable assumptions, similarly to generalization bounds for factored representations for fast domain adaptation \citep{huang2021adarl}.

    Based on the application, there might be different assumptions or inductive biases that might be considered reasonable.
    Since our work leverages insights from recent causality literature \citep{CDNOD_Huang20}, we also inherit the same inductive bias in terms of assuming there is an underlying causal structure that is time-invariant throughout the non-stationarity. In our current method, this causal structure is estimated in the model estimation phase as one of the first steps of the algorithm. After this estimation, there can still be changes in the functional dependencies between the various components, but we assume there are no new edges/causal relations between components that were previously disconnected, or new forms of non-stationarity, in terms of connections between the change factor components and the state dimensions or reward. For example, if we consider Halfcheetah-v3 with a change in gravity and estimate a model that can handle this type of non-stationarity, our method will not be able to perform well under a new type of non-stationarity (e.g. change of wind forces) that was not observed during model estimation.
    An interesting extension of our work would be designing ways to efficiently detect changes in the causal structure and adapt the model.

    As is the case with other works in causal discovery, we also make some standard assumptions to recover the causal graph from time series observational data. In particular, we assume that there are no other unobserved confounders, except for the change factors, and that there are no instantaneous causal effects between the state components, which is implied by our definition of an FN-MDP and its Dynamic Bayesian Network. In practical applications, this means that we are able to measure all the relevant causal variables and we are measuring them at a rate that is faster than their interaction.
    Additionally, in our identifiability proofs, we assume the causal Markov and faithfulness assumptions \cite{SGS93}, which provide a correspondence between conditional independences and d-separations in the graph. The faithfulness assumption can be violated for example in case of deterministic relations, thus requiring a careful modelling of the system.
    In general, if these assumptions are violated, the causal structure we learn might be incorrect, and therefore the factored representation might not be beneficial to adapt to non-stationarity.
    A future direction would be to relax some of the current assumptions to provide a more realistic and flexible framework for factored non-stationary RL.

    \section{Proofs and Causality Background} \label{app:proofs}

    \subsection{Preliminaries}

    \subsubsection{Dynamic Bayesian networks}
    Dynamic Bayesian networks (DBNs)~\citep{murphy2002dynamic} are the extensions of Bayesian networks (BN), which model the time-dependent relationship between nodes (See an example in Fig.~\ref{fig:dbn}(b)). The unfolded DBNs can be represented as BNs.
    The variables in DBNs are in discrete time slices and dependent on variables from the same and previous time slices. Hence, the DBNs can model the stationary process repeated over the discrete time slices.

    \begin{figure}[h]
        \centering
        \includegraphics[width=\linewidth]{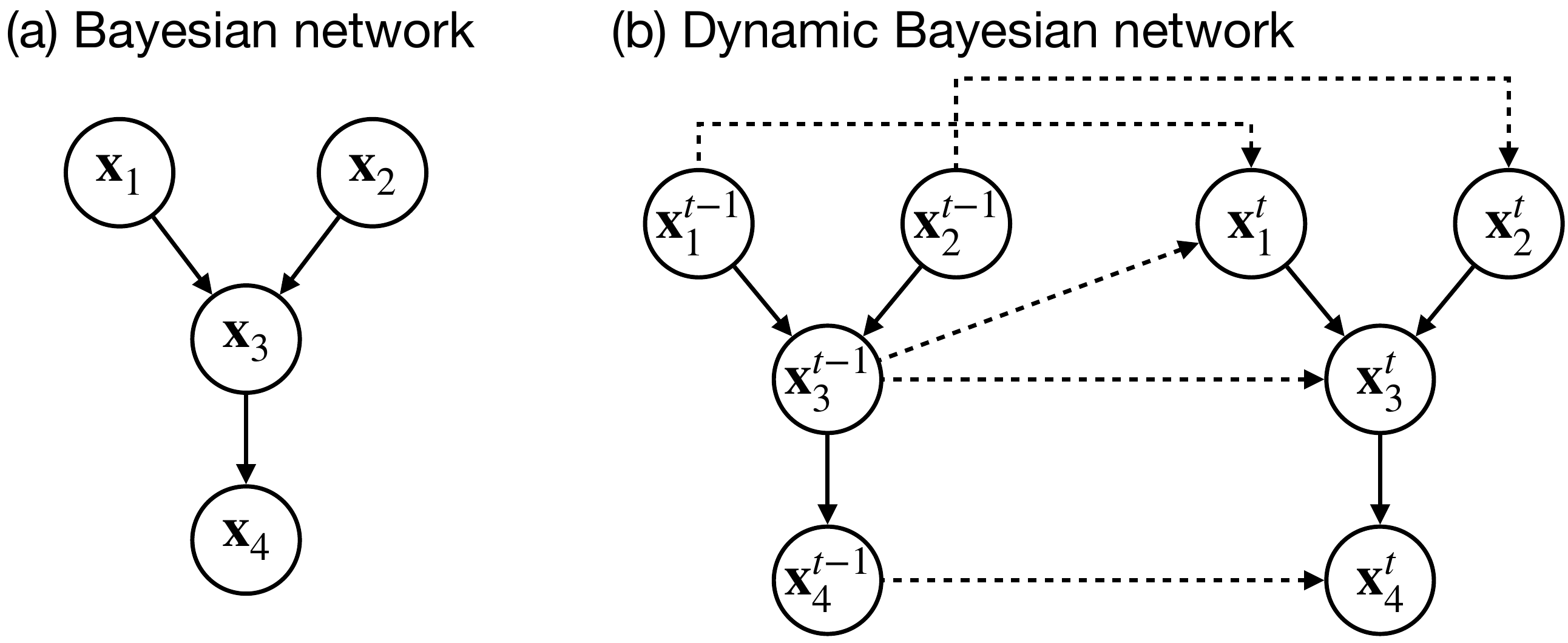}
        \caption{Examples on Bayesian and Dynamic Bayesian networks. The dashed edges indicate dependencies across time slices. }
        \label{fig:dbn}
    \end{figure}

    \subsubsection{Markov and faithfulness assumptions}
    Given a directed acyclic graph $G = (\textbf{V}, \textbf{E})$, where $\textbf{V}$ is the set of nodes and $\textbf{E}$ is the set of directed edges, we can define a graphical criterion that expresses a set of conditions on the paths.
    \begin{Definition}
        [d-separation \citep{Pearl00}]
        A path $p$ is said to be blocked by a set of nodes $\textbf{Z} \subseteq \textbf{V}$ if and only if (1) $p$ contains a chain $i \rightarrow m \rightarrow j$ or a fork $i \leftarrow m \rightarrow j$ such that the middle node $m$ is in $Z$, or (2) $p$ contains a collider $i \rightarrow m \leftarrow j$ such that the middle node $m$ is not in $\textbf{Z}$ and such that no descendant of $m$ is in $\textbf{Z}$.
        Let $\textbf{X}$, $\textbf{Y}$, and $\textbf{Z}$ be disjunct sets of nodes. $\textbf{Z}$ is said to d-separate $\textbf{X}$ from $\textbf{Y}$ (denoted as $\textbf{X} \perp_d \textbf{Y} | \textbf{Z}$) if and only if $\textbf{Z}$ blocks every path from a node in $\textbf{X}$ to a node in $\textbf{Y}$.
    \end{Definition}

    \begin{Definition}
        [Global Markov Condition \citep{SGS93, Pearl00}]
        A distribution $P$ over $\mathbf{V}$ satisfies the global Markov condition on graph $G$ if for any partition $(\textbf{X, Z, Y})$ such that $\textbf{X}$ is d-separated from $\textbf{Y}$ given $\textbf{Z}$, i.e. $\textbf{X} \perp_d \textbf{Y} | \textbf{Z}$ the distribution factorizes as:
        \begin{equation*}
            P(\textbf{X, Y}| \textbf{Z}) = P(\textbf{X} | \textbf{Z}) P(\textbf{Y} | \textbf{Z}).
        \end{equation*}
        In other words, $\textbf{X}$ is conditionally independent of $\textbf{Y}$ given $\textbf{Z}$, which we denote as $\textbf{X} \independent \textbf{Y} | \textbf{Z}$.
    \end{Definition}

    \begin{Definition}
        [Faithfulness Assumption \citep{SGS93, Pearl00}]
        There are no independencies between variables that are not entailed by the Markov Condition.
    \end{Definition}

    If we assume both of these assumptions, then we can use d-separation as a criterion to read all of the conditional independences from a given DAG $G$. In particular, for any disjoint subset of nodes $\textbf{X,Y,Z} \subseteq \textbf{V}$: $\textbf{X} \independent \textbf{Y} | \textbf{Z} \iff \textbf{X} \perp_d \textbf{Y} | \textbf{Z}$.

    \subsection{AdaRL summary} \label{sec:adarl}
    Our work extends the factored representation for fast policy adaptation across domains introduced in AdaRL~\citep{huang2021adarl}, which we summarize here. While \citet{huang2021adarl} propose a general framework that can be applied to both MDPs and POMDPs, in this work we focus on MDPs, so we only present the simplified version of AdaRL for MDPs.
    The simplified AdaRL setting considers $n$ source domains and $n'$ target domains. The state at time $t$ is represented as $\mathbf{s}_t = (s_{1,t},\cdots,s_{d,t})^\top \in \mathcal{S}^d$, while $\mathbf{a}_t \in \mathcal{A}^m$ is the executed action and $r_t \in \mathcal{R}$ is the reward signal.
    The generative process of the environment in the $k$-th domain with $k=1, \mydots, n+n'$ can be described in terms of the transition function for each dimension $i=1,\mydots, d$ of $\mathbf{s}_t$ as:
    
    \begin{equation}
        \label{eq: adarl1_new}
        s_{i, t}=f_{i}(\boldsymbol{c}_{i}^{\boldsymbol{s} \scriptveryshortarrow \boldsymbol{s}} \odot \boldsymbol{s}_{t-1},
        \boldsymbol{c}_{i}^{\boldsymbol{a} \scriptveryshortarrow \boldsymbol{s}} \odot \boldsymbol{a}_{t-1},
        \boldsymbol{c}_{i}^{\boldsymbol{\theta}_{k} \scriptveryshortarrow \boldsymbol{s}} \odot \boldsymbol{\theta}_{k}^{\boldsymbol{s}}, \epsilon_{i, t}^{s})
    \end{equation}
    
    where $\odot$ denotes the element-wise product. The binary mask $\boldsymbol{c}_{i}^{\boldsymbol{s} \scriptveryshortarrow \boldsymbol{s}} \in \{0, 1\}^d$ represents which of the state components $s_{j,t-1}$ are used in the transition function of $s_{i,t}$.
    Similarly, $\boldsymbol{c}_{i}^{\boldsymbol{a} \scriptveryshortarrow \boldsymbol{s}} \in \{0, 1\}^m$ is a mask that indicates whether the action directly affects $s_{i,t}$.
    The change factor $\boldsymbol{\theta}_{k}^{\boldsymbol{s}} \in \mathbb{R}^p$ is the only parameter that depends on the domain $k$ in Eq.~\ref{eq: adarl1} and it encodes any change across domains in the dynamics. The binary mask $\boldsymbol{c_{i}}^{\boldsymbol{\theta}_{k} \scriptveryshortarrow \boldsymbol{s}} \in \{0,1\}^p$ represents which of the $\boldsymbol{\theta}_{k}^{\boldsymbol{s}}$ components influence the $s_{i,t}$. Finally, $\epsilon^s_{i,t}$ is an i.i.d. random noise.
    Similarly the reward function is modeled as:
    
    \begin{equation}
        r_{t}=h (\boldsymbol{c}^{\boldsymbol{s} \scriptveryshortarrow r} \odot \boldsymbol{s}_{t-1}, \boldsymbol{c}^{\boldsymbol{a} \scriptveryshortarrow r} \odot \boldsymbol{a}_{t-1}, \boldsymbol{\theta}_{k}^{r}, \epsilon_{t}^{r} )
        \label{eq: adarl2}
    \end{equation}
    
    where $\boldsymbol{c}_{i}^{\boldsymbol{s} \scriptveryshortarrow \boldsymbol{r}} \in \{0, 1\}^d$, $\boldsymbol{c}_{i}^{\boldsymbol{a} \scriptveryshortarrow \boldsymbol{s}} \in \{0, 1\}^m$, and $\epsilon_{t}^{r}$ is an i.i.d. random noise. The change factor $\boldsymbol{\theta}_{k}^{r} \in \mathbb{R}^q$ is the only parameter that depends on the domain $k$ in Eq.~\ref{eq: adarl2} and it encodes any change in the reward function.
    In this simplified setting, the binary masks $\boldsymbol{c}^{\cdot \veryshortarrow \cdot}$ can be seen as indicators of edges in a Dynamic Bayesian Network (DBN). Under Markov and faithfulness assumptions, i.e., assuming the conditional independences in the data and d-separations in the true underlying graph coincide, the edges in the graph can be uniquely identified. This means one can learn the true causal graph representing jointly all of the environments, even if the change parameters are latent.

    In the general AdaRL framework, the representation is learned via a combination of a state prediction network (to estimate the various $f_i$) and a reward prediction network (to estimate $h$). All binary masks $\boldsymbol{c}^{\cdot \scriptveryshortarrow \cdot}$ and change factors $\theta_k^\cdot$ are trainable parameters. All change factors $\theta_k^\cdot$ are assumed to be constant in each domain $k$.
    If the inputs are pixels, another encoder is added to infer the symbolic states, forming a \emph{Multi-model Structured Sequential Variational Auto-Encoder (MiSS-VAE)}. MiSS-VAE leverages the generative modeling to learn the data generation process in RL system with multiple domains. All change factors in MiSS-VAE are modeled by constants in each domain. This setting is not suitable for non-stationary RL where the change factors evolve over time.

    In general, not all of the dimensions of the learned state and change factor vectors are useful in policy learning.
    \citet{huang2021adarl} select a subsection of dimensions which are essential for policy optimization. Leveraging the learned representation as a DBN, we can select compact states and CFs as having a directed path to a reward:
    
    \begin{align*}
        s_{i,t} \in \boldsymbol{s}^{min} &\iff s_{i,t} \to \mydots \to r_{t+\tau} \text{ for } \tau \geq 1 \\
        \theta_{i} \in \boldsymbol{\theta}^{min} &\iff \theta_{i}  \to \mydots \to r_{t+\tau} \text{ for } \tau \geq 1
    \end{align*}

    \subsection{Proofs}

    \begin{proposition}[Identifiability with observed change factors]
        Suppose all the change factors $\boldsymbol{\theta^s}_{t}$ and $\boldsymbol{\theta^r}_{t}$ are observed, i.e., Eq.~\ref{eq: adarl1}-\ref{Eq: conti_model}) is an MDP. Under the Markov and faithfulness assumptions, all the binary masks $\boldsymbol{C}^{\cdot \veryshortarrow \cdot}$ are identifiable.
    \end{proposition}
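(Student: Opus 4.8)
The plan is to exploit the observation that, once every change factor is observed, the generative process in Eq.~\ref{eq: adarl1}--\ref{Eq: conti_model} defines a \emph{fully observed} dynamic Bayesian network over two consecutive time slices, so each binary mask is nothing more than an edge indicator between two observed nodes. This is strictly simpler than the cross-domain argument in \citet{huang2021adarl}, because no latent variables remain to be marginalized out. By the stated assumptions there are no unobserved confounders and no instantaneous effects, and every structural equation reads only variables from the current or the previous slice; hence the unrolled graph is a DAG with an explicit topological ordering. Recovering $\mathcal{G}$ therefore reduces to the classical problem of identifying a DAG with a \emph{known} causal order from its conditional independences.

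First I would write down the topological ordering dictated by the equations. Within two slices it is
\begin{equation*}
    \{\boldsymbol{\theta}^s_{t-1}, \boldsymbol{\theta}^r_{t-1}, \boldsymbol{s}_{t-1}, \boldsymbol{a}_{t-1}\} \prec \{\boldsymbol{\theta}^s_t, \boldsymbol{\theta}^r_t\} \prec \boldsymbol{s}_t \prec r_t,
\end{equation*}
since $\boldsymbol{\theta}^s_t,\boldsymbol{\theta}^r_t$ depend only on their own past (Eq.~\ref{Eq: conti_model}), $\boldsymbol{s}_t$ additionally depends on $\boldsymbol{\theta}^s_t$ (Eq.~\ref{eq: adarl1}), and $r_t$ depends on $\boldsymbol{s}_t$. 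Each structural equation also fixes a finite set of \emph{candidate parents} $\mathcal{P}(V)$ for every node $V$ --- for instance $\mathcal{P}(s_{i,t})=\{\boldsymbol{s}_{t-1},\boldsymbol{a}_{t-1},\boldsymbol{\theta}^s_t\}$ and $\mathcal{P}(r_t)=\{\boldsymbol{s}_t,\boldsymbol{a}_t,\boldsymbol{\theta}^r_t\}$ --- and all such candidates precede $V$ in the ordering. The seven mask families $\boldsymbol{C}^{\cdot\scriptveryshortarrow\cdot}$ are precisely the indicators of which candidates are true parents, so identifying the masks is identifying the edges.

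Next I would recover each edge with a single conditional independence test. For a node $V$ and a candidate parent $W\in\mathcal{P}(V)$, I claim the edge $W\to V$ is present if and only if
\begin{equation*}
    W \dependent V \,\big|\, \bigl(\mathcal{P}(V)\setminus\{W\}\bigr).
\end{equation*}
Under Markov and faithfulness this test is both sound and complete: conditioning on $\mathcal{P}(V)\setminus\{W\}$ conditions on every true parent of $V$ other than possibly $W$, and on no descendant of $V$ (since $V$ is last in the ordering among $\{V\}\cup\mathcal{P}(V)$). If $W$ is not a true parent, every path from $W$ into $V$ must enter through a conditioned parent acting as a non-collider and is therefore blocked, giving independence; if $W$ is a true parent, faithfulness forbids the direct edge from being cancelled, giving dependence. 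Sweeping over all nodes and all candidate parents then recovers every mask and hence all of $\mathcal{G}$.

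I expect the crux to be the d-separation bookkeeping that guarantees soundness, namely ruling out spurious collider paths opened by the conditioning set. This is exactly where the two structural assumptions do the work: the absence of instantaneous effects among the state components means no candidate parent of $s_{i,t}$ (or of $r_t$) can point to another candidate at the same slice, and the absence of hidden confounders means no common cause outside $\mathcal{P}(V)$ can reconnect two conditioned candidates to $V$. Together with the layered temporal order, these ensure that any collider a conditioned candidate might create (e.g. via a policy edge into an action node) is immediately re-blocked by a neighbouring conditioned candidate, so no false edge survives. Once this is checked, the result is a direct instance of constraint-based DAG recovery under a known ordering, mirroring the stationary argument of \citet{huang2021adarl}.
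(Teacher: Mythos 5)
Your proposal is correct and follows essentially the same route as the paper's proof: both recover each binary mask by a single conditional-independence test between a node and a candidate parent, relying on the temporal ordering, the absence of hidden confounders and instantaneous effects among states, and Markov plus faithfulness to equate d-connection with the presence of a direct edge. The only difference is the choice of separating set --- you condition on the remaining candidate parents $\mathcal{P}(V)\setminus\{W\}$, while the paper conditions on the change factors at time $t$ together with states and actions at time $t-2$ --- but both are valid blocking sets and the argument is otherwise identical.
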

    \begin{proof}
        We construct the graph in the Factored Non-stationary MDP as a dynamic Bayesian network (DBN) $G$ over the variables $\mathbf{V_{MDP}} =\{s_{1,t-1}, \dots, s_{d,t-1}, s_{1,t}, \dots, s_{d,t}, a_{1,t-1}, \dots, a_{m, t-1}, r_{t-1} \}$,
        and the change factors $\mathbf{V_{\theta}} =  \{\theta^s_{1,t-1}, \dots, \theta^s_{p,t-1}, \theta^r_{1,t-1}, \dots, \theta^r_{q,t-1}, \theta^s_{1,t}, \dots, \theta^s_{p,t}, \theta^r_{1,t}, \dots, \theta^r_{q,t}\}$.
        In this setting, we can always the correct causal graph under the Markov and faithfulness assumptions.
        We rewrite the time index of $r_{t-1}$ as $r_{t}$.

        We could rewrite also the change factors time indices by shifting them back in time by one step, i.e. $t \to t-1$, and this would allow us to model the whole setup as a Markov DBN without any instantaneous effect. In this setup, we can leverage existing results to show that the true causal graph is asymptotically identifiable from conditional independences in a time-series without any unobserved confounders or instantaneous effects \citep{peters2013causal}. In order to make our proof clearer, we instead show step by step how we can recover the parts of the graph related to $\mathbf{V_{MDP}}$, to  $\mathbf{V_{\theta}}$ and finally the connections between them.

        In this setting, there are no instantaneous effects except for the change factors, and the only causal parents for a variable that is not a change factor at time $t$ can be in the previous time-step $t-1$.
        In particular, as in usual MDPs, these are the only allowed edges:
        \begin{compactenum}
            \item state dimension $s_{i,t-1}$ at time $t-1$ to state dimension $s_{j,t}$ at time $t$, for $i,j \in \{1, \dots, d\}$ (this includes the case in which $i = j$);
            \item action dimension $a_{k, t-1}$ at time $t-1$ to state dimension $a_{j, t}$ at time $t$, for $j \in \{1, \dots, d\}, k \in \{1, \dots, m\}$;
            \item state dimension $s_{i,t-1}$ at time $t-1$ to reward $r_{t}$ at time $t$, for $i \in \{1, \dots, d\}, k \in \{1, \dots, m\}$;
            \item action dimension $a_{k, t-1}$ at time $t-1$ to reward $r_{t}$ at time $t$, for $k \in \{1, \dots, m\}$;
        \end{compactenum}
        and in addition we have some extra knowledge about the allowed edges to and from change factors, as expressed in our generative model in Equations (1-3):
        \begin{compactenum}
            \item transition change factor dimension $\theta^s_{i,t}$ at time $t$ to state dimension $s_{j,t}$ at time $t$, for $i \in \{1, \dots, p\}, j \in \{1, \dots, d\}$;
            \item transition change factor dimension $\theta^s_{i,t}$ at time $t$ to transition change factor dimension $\theta^s_{j,t+1}$ at time $t+1$, for $i,j \in \{1, \dots, p\}$;
            \item reward change factor dimension $\theta^r_{i,t}$ at time $t$ to reward change factor dimension $\theta^r_{j,t+1}$ at time $t+1$, for $i,j \in \{1, \dots, q\}$;
        \end{compactenum}
        For the reward change factors, we assume they are fully connected to the reward, as shown in Eq. (2).
        Using this background knowledge, we can learn the edges from any other variable $V_{i,t-1} \to V_{j,t}$, for $V_{i,t-1}, V_{j,t} \in \mathbf{V_{MDP}}$, just by checking if $V_{i,t-1} \dependent V_{i,t} | \mathbf{\theta}^r_{t}, \mathbf{\theta}^s_{t}, \mathbf{s}_{t-2}, \mathbf{a}_{t-2}$. This dependence implies that the variables are d-connected, under the Markov and faithfulness assumptions. Except for a direct edge, there is no other possible path through the graph unrolled in time, since we have blocked all influence of time-step $t-2$ and earlier, and the paths through future time-steps contain colliders. So this means that $V_{i,t-1}$ and $ V_{j,t}$ are adjacent, and in particular the edge follows the arrow of time, from $t-1$ to $t$. In our setting, we do not model the actions at time $t$, since we assume they are not caused by any other variable. We also never need to condition on the reward to check if two variables are adjacent, since it's always on a collider path.
        This means we are able to learn the following binary masks:
        \begin{compactenum}
            \item state dimensions to state dimensions $\boldsymbol{C}^{\boldsymbol{s} \scriptveryshortarrow \boldsymbol{s}}$;
            \item action dimensions to state dimensions $\boldsymbol{C}^{\boldsymbol{a} \scriptveryshortarrow \boldsymbol{s}}$;
            \item state dimensions to reward $\boldsymbol{c}^{\boldsymbol{s} \scriptveryshortarrow r}$
            \item action dimensions to reward $\boldsymbol{c}^{\boldsymbol{a} \veryshortarrow r}$;
        \end{compactenum}

        To learn the edges between any change factor component to another change factor component, i.e., $V_{i,t-1} \to V_{j,t}$ for $ V_{i,t-1}, V_{j,t}\in \mathbf{V_{\theta}}$, we can just check if $V_{i,t-1} \dependent V_{i,t} | \mathbf{\theta}^r_{t-2}, \mathbf{\theta}^s_{t-2}$, since states, actions and rewards can never be parents of the change factors, so we do not need to condition on them to close any path through the earlier time-steps. We also assumed that change factor components follow a Markov process, so they do not have instantaneous effects towards each other.
        This means we are able to learn the following masks:
        \begin{compactenum}
            \item transition change factor dimensions to transition change factor dimensions $\boldsymbol{C}^{\boldsymbol{\theta^s} \scriptveryshortarrow \boldsymbol{\theta^s}}$;
            \item reward change factor dimensions to reward change factor dimensions $\boldsymbol{C}^{\boldsymbol{\theta^r} \scriptveryshortarrow \boldsymbol{\theta^r}}$ ;
        \end{compactenum}

        Finally to learn the edges between the change factors $V_{i,t} \in \mathbf{V_{\theta}}$ and the other variables $V_{j,t} \in \mathbf{V_{MDP}}$, we can just check if  $V_{i,t} \dependent V_{j,t} |\mathbf{s}_{t-1}, \mathbf{a}_{t-1}, \mathbf{\theta}^r_{t-1}, \mathbf{\theta}^s_{t-1}$ and if this is true we can learn the edge $V_{i,t} \to V_{j,t}$.
        This means we are able to learn the mask:
        \begin{compactenum}
            \item transition change factor dimensions to state dimensions $\boldsymbol{C}^{\boldsymbol{\theta^s} \scriptveryshortarrow \boldsymbol{s}}$;
        \end{compactenum}
        All of these results together show that if we have the Markov and faithfulness assumption, i.e. the conditional independence tests return the true d-separations in the graph, and we observe the change factors, we are able to completely identify the graph of the FN-MDP represented by the binary masks.
    \end{proof}

    \begin{proposition}[Partial Identifiability with latent change factors]
        Suppose the generative process follows Eq.~\ref{eq: adarl1}-\ref{Eq: conti_model} and the change factors $\boldsymbol{\theta^s}_{t}$ and $\boldsymbol{\theta^r}_{t}$ are unobserved.
        Under the Markov and faithfulness assumptions, the binary masks $\boldsymbol{C}^{\boldsymbol{s} \scriptveryshortarrow \boldsymbol{s}}, \boldsymbol{C}^{\boldsymbol{a} \scriptveryshortarrow \boldsymbol{s}}, \boldsymbol{c}^{\boldsymbol{s} \scriptveryshortarrow r}$ and $\boldsymbol{c}^{\boldsymbol{a} \veryshortarrow r}$. Moreover, we can identify which state dimensions are affected by $\boldsymbol{\theta^s}_{t}$ and if the reward function changes.
    \end{proposition}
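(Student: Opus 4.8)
The plan is to reduce the latent case to the observed case of Proposition~\ref{prop:identifibility} by treating the time index as an observed \emph{surrogate variable} $C$, following the change/domain-index construction of CD-NOD~\citep{CDNOD_Huang20} and the domain index of AdaRL~\citep{huang2021adarl}. First I would augment the DBN over the observed variables $\mathbf{V_{MDP}}$ with this single node $C$, adding an edge $C \to s_{i,t}$ exactly when the transition mechanism of $s_{i,t}$ depends on a change factor (i.e. $\boldsymbol{c}_i^{\boldsymbol{\theta^s} \scriptveryshortarrow \boldsymbol{s}} \neq \boldsymbol{0}$) and an edge $C \to r_t$ when the reward mechanism depends on $\boldsymbol{\theta^r}$. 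The point of this step is that the latent factors $\boldsymbol{\theta^s}_t, \boldsymbol{\theta^r}_t$ are the only sources of confounding among the observed variables, and their nonstationary influence is now routed through $C$; once $C$ is treated as observed, the augmented system over $\mathbf{V_{MDP}} \cup \{C\}$ is (pseudo-)causally sufficient, so I may again assume Markovianity and faithfulness on it.

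Given this augmentation, I would recover the four observed-to-observed masks by repeating the conditional-independence arguments of Proposition~\ref{prop:identifibility} with $C$ added to every conditioning set. The new ingredient is that conditioning on $C$ closes the confounding path created by the shared latent chain among the change factors: in the augmented graph this path reads $s_{i,t-1} \leftarrow C \to s_{j,t}$ and is blocked by the fork at $C$. Concretely, I would learn $\boldsymbol{C}^{\boldsymbol{s} \scriptveryshortarrow \boldsymbol{s}}$ and $\boldsymbol{C}^{\boldsymbol{a} \scriptveryshortarrow \boldsymbol{s}}$ by testing $s_{i,t-1} \dependent s_{j,t} \mid \mathbf{s}_{t-2}, \mathbf{a}_{t-2}, C$ (resp. $a_{k,t-1} \dependent s_{j,t} \mid \mathbf{s}_{t-2}, \mathbf{a}_{t-2}, C$), and $\boldsymbol{c}^{\boldsymbol{s} \scriptveryshortarrow r}, \boldsymbol{c}^{\boldsymbol{a} \veryshortarrow r}$ analogously; as in Proposition~\ref{prop:identifibility}, the arrow of time orients each recovered adjacency, and the reward is never placed in a conditioning set because it always lies on a collider path.

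Next I would use the surrogate directly to obtain the remaining two claims. A state dimension $s_{i,t}$ is affected by some transition change factor if and only if $C$ stays dependent on it after conditioning on its observed parents, i.e. $s_{i,t} \dependent C \mid \mathbf{s}_{t-1}, \mathbf{a}_{t-1}$ (conditioning on $\mathbf{s}_{t-1}$ closes the indirect route $C \to s_{i,t-1} \to s_{i,t}$), and the set of such $i$ is exactly the row-support of $\boldsymbol{C}^{\boldsymbol{\theta^s} \scriptveryshortarrow \boldsymbol{s}}$. Likewise the reward mechanism is nonstationary if and only if $r_t \dependent C \mid \mathbf{s}_t, \mathbf{a}_t$, which detects whether the reward changes. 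Since each change factor is latent, its individual dimensions, its self-transition structure ($\boldsymbol{C}^{\boldsymbol{\theta^s} \scriptveryshortarrow \boldsymbol{\theta^s}}, \boldsymbol{C}^{\boldsymbol{\theta^r} \scriptveryshortarrow \boldsymbol{\theta^r}}$), and the precise entries of $\boldsymbol{C}^{\boldsymbol{\theta^s} \scriptveryshortarrow \boldsymbol{s}}$ are not recoverable --- only the aggregate presence or absence of a change per observed variable is, which is exactly the partial structure claimed.

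The hard part will be justifying the surrogate construction rigorously: I must argue that replacing the latent, time-evolving factors $\boldsymbol{\theta^s}, \boldsymbol{\theta^r}$ by the single index $C$ preserves faithfulness for the observed marginal, i.e. that conditioning on $C$ genuinely d-separates the observed variables from the latent change dynamics rather than merely partially screening them. This is where the assumption that the nonstationarity is captured by the time index (deterministic functions of $\tilde{t}$ for continuous changes, piecewise-constant regimes for discrete changes) enters, mirroring the causal-sufficiency-after-augmentation argument of CD-NOD~\citep{CDNOD_Huang20}; checking that this introduces no spurious independences among the observed variables is the crux of the argument.
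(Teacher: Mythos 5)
Your proposal follows essentially the same route as the paper's proof: both introduce the time index as an observed surrogate for the latent change factors under the pseudo-causal sufficiency assumption of CD-NOD, recover the skeleton over $\mathbf{V_{MDP}} \cup \{t\}$ via conditional independence tests, orient edges by the arrow of time, and characterize the affected state dimensions and reward nonstationarity exactly via $s_{i,t} \dependent t \mid \boldsymbol{s}_{t-1}, \boldsymbol{a}_{t-1}$ and $r_t \dependent t \mid \boldsymbol{s}_t, \boldsymbol{a}_t$. The paper discharges the faithfulness-of-the-augmented-system concern you flag at the end by directly invoking Theorem 1 of \citet{CDNOD_Huang20}, so your plan is correct and matches the paper's argument.
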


    \begin{proof}
        In this case, the MDP is non-stationary, since we cannot observe the latent change factors. We assume that we can represent the latent change factors as a smooth function of the observed time index $t$. This assumption is called \emph{pseudo-causal sufficiency} in previous work \citep{CDNOD_Huang20}.
        We can then use the time index $t$ as a surrogate variable to characterize the unobserved change factors, since at each time-step their value will be fixed.

        We again consider a DBN $\mathcal{G}_{MDP  \cup \{t\}}$ over the variables $\mathbf{V_{MDP}} =\{s_{1,t-1}, \dots, s_{d,t-1}, s_{1,t}, \dots, s_{d,t}, a_{1,t-1}, \dots, a_{m, t-1}, r_{t-1} \}$ and the time index $t$.
        We rewrite the time index of $r_{t-1}$ as $r_{t}$. Note that we do not represent the change factors in this DBN, but we can capture their effect through $t$ since they are assumed to be deterministic smooth functions of the time index.

        We can then reuse the results by \citet{CDNOD_Huang20} (Theorem 1) in which under the pseudo-causal sufficiency (Assumption 1 in that paper) and the Markov and faithfulness assumption (Assumption 2), one can asymptotically identify the true causal skeleton (i.e. the adjacencies) in the graph $\mathcal{G}_{MDP \cup \{t\} }$) through conditional independence tests.
        In particular for any $V_i, V_j \in \mathbf{V}_{MDP} \cup \{t \}$,  $V_i$ and $V_j$ are not adjacent if there exists a subset of the unrolled graph
        $\mathbf{V}_k$ of $\mathbf{V}_{MDP \cup \{t \}} \setminus \{V_i, V_j\}$ in $\mathcal{G}_{MDP \cup \{t\} }$ such that  $V_i \independent V_j |\mathbf{V}_k$.
        In particular, we can also focus on the tests that were used for the proof of the previous Proposition, by just substituting the change factors with the time index $t$.
        The skeleton is the undirected version of the causal graph, so this result only tells us that can asymptotically get the correct undirected edges, but not their orientations.

        Fortunately, in our setting we have some additional background knowledge that allows us to orient all the existing edges.
        In particular, as in usual MDPs, these are the only allowed edges:
        \begin{compactenum}
            \item state dimension $s_{i,t-1}$ at time $t-1$ to state dimension $s_{j,t}$ at time $t$, for $i,j \in \{1, \dots, d\}$ (this includes the case in which $i = j$);
            \item action dimension $a_{k, t-1}$ at time $t-1$ to state dimension $a_{j, t}$ at time $t$, for $j \in \{1, \dots, d\}, k \in \{1, \dots, m\}$;
            \item state dimension $s_{i,t-1}$ at time $t-1$ to reward $r_{t}$ at time $t$, for $i \in \{1, \dots, d\}, k \in \{1, \dots, m\}$;
            \item action dimension $a_{k, t-1}$ at time $t-1$ to reward $r_{t}$ at time $t$, for $k \in \{1, \dots, m\}$;
        \end{compactenum}
        This means that, for example, we cannot have a variable at time $t$ causing a variable at time $t-1$. Therefore if two variables $V_{i, t-1}$ and $V_{j, t}$ are adjacent, we already know that the direction of that edge will be $V_{i, t-1} \to V_{j, t}$.
        This also implies that the following binary masks are identifiable (i.e. no edge remains unoriented):
        \begin{compactenum}
            \item state dimensions to state dimensions $\boldsymbol{C}^{\boldsymbol{s} \scriptveryshortarrow \boldsymbol{s}}$;
            \item action dimensions to state dimensions $\boldsymbol{C}^{\boldsymbol{a} \scriptveryshortarrow \boldsymbol{s}}$;
            \item state dimensions to reward $\boldsymbol{c}^{\boldsymbol{s} \scriptveryshortarrow r}$
            \item action dimensions to reward $\boldsymbol{c}^{\boldsymbol{a} \veryshortarrow r}$;
        \end{compactenum}
        These represent all of the edges in $\mathcal{G}_{MDP}$. We can also learn the edges from $t$ to $V_{MDP}$ (by construction we assume the opposite direction is not possible), which will represent the effect of the change factors, as we show in the following.

        Since $t$ inherits all of the children of the latent change factors in $G$, we can further show that if $s_{i,t} \independent t| \boldsymbol{s}_{t-1}, \boldsymbol{a}_{t-1}$ in $G$, then none of the latent change factor dimensions $\theta^s_{j, t}$ affect $s_{i,t}$, i.e., $s_{i,t} \independent t| \boldsymbol{s}_{t-1}, \boldsymbol{a}_{t-1} \iff c_{i,j}^{\boldsymbol{\theta^s} \scriptveryshortarrow \boldsymbol{s}} = 0$.
        Intuitively, this means that the distribution of $s_{i,t}$ only depends on $\mathbf{s}_{t-1}$ and $\mathbf{a}_{t-1}$, and not on the timestep $t$, or in other words, this distribution is stationary.
        Under the same principle, if $r_{t} \independent t| \boldsymbol{s}_{t}, \boldsymbol{a}_t$, then the reward is stationary.

    \end{proof}

    \section{Details on Experimental Designs and Results}
    \label{appendix_design}

    \subsection{MuJoCo} \label{app:mujoco}
    We modify the Half-Cheetah environment into a variety of non-stationary settings. Details on the change factors are given as below.

    \paragraph{Changes on dynamics.}
    We change the wind forces $f^w$ in the environment. We consider the changing functions can be both continuous and discrete.

    $\bullet$ Continuous changes: $f^w_t = 10 + 10 \sin(0.005 \cdot t)$, where $t$ is the timestep index; \\
    $\bullet$ Discrete changes:
    (1) Across-episode: \\
    a. Sine function: $f_w = 10 + 10 \sin(0.5 \cdot i)$ \\
    b. Damping-like function: $f_{w}=10+3 \cdot (1.01)^{-\lceil i/10 \rceil} \sin (0.5 \cdot i)$ \\
    c. Piecewise linear function $f_{w}=5+0.02\cdot\|i-1500\|$; , where $i$ is the episode index.\\
    (2) Within-episode: $f_w = 10 + 10 \sin(0.4 \cdot \lfloor t / 10\rfloor)$, where $t$ is the timestep index.

    We also consider a special case where the agent's mechanism is changing over time. Specifically, the one random joint is disabled at the beginning of each episode. 

    \paragraph{Changes on reward functions.} To introduce non-stationarity in the rewards, we change the target speed $v_g$ in each episode. To make the learning process stable, we only consider the discrete changes and the change points are located at the beginning of each episode. The changing function is $v_{g}=1.5+1.5 \sin (0.2 \cdot i)$, where $i$ denotes the episode index.

    \paragraph{Changes on both dynamics and rewards.} We consider a more general but challenging scenario, where the changes on dynamics and rewards can happen concurrently during the lifetime of the agents. We change the wind forces and target speed at the beginning of each episode. At episode $i$, the dynamics and reward functions are:
    
    \begin{equation*}
        \left\{\begin{array}{l}
                   f_w = 10 + 10 \sin(w \cdot i) \\
                   v_{g} = 1.5+1.5 \sin (w \cdot i)
        \end{array}\right.
    \end{equation*}
    
    Here, $w$ is the non-stationary degree.
    We consider multiple values of $w$ in our experiments. In Fig.~\ref{fig:reward_cheetah}\textcolor{RoyalPurple}{(d)}, $w=0.5$. In Fig.~\ref{fig:reward_cheetah}\textcolor{RoyalPurple}{(h)}, $w$ is the value of non-stationary degree.

    \subsection{Sawyer benchmarks} \label{app:sawyer}
    In Sawyer-Reaching, the sawyer arm is trained to reach a target position $\boldsymbol{s}^{g}_{t}$.
    The reward $r_t$ is the difference between the current position $\boldsymbol{s}_t$ and the target position $r_t=-\left\|\boldsymbol{s}_t-\boldsymbol{s}^{g}\right\|_{2}$. In this task, we cannot directly modify the dynamics in the simulator, so consider a reward-varying scenario, where the target location changes across each episode following a periodic function.
    In Sawyer-Peg, the robot arm is trained to insert a peg into a designed target location $\boldsymbol{s}^g$. The reward function is $r_t=\mathbb{I}\left(\|\boldsymbol{s_t}-\boldsymbol{s}^g\|_{2} \leq 0.05\right)$.

    We change the target location in Sawyer reaching task. The target location $\boldsymbol{s}^g_t$ is given as below:
    
    \begin{equation*}
        \mathbf{s}^{g}_t=\left[\begin{array}{c}
                                   0.1 \cdot \| \cos (0.2 \cdot i) \| \\
                                   0.1 \cdot \sin (0.5 \cdot i)       \\
                                   0.2
        \end{array}\right]
    \end{equation*}
    
    where $i$ is the episode index.
    For Sawyer-Peg task, the target location $\boldsymbol{s}_g$ changes at each episode. The parameters in each dimension of $\boldsymbol{s}_g$ is randomly sampled at episode $i$ as below:

    $\bullet$ x\_range\_1: $(0.44,0.45)$;\\
    $\bullet$ x\_range\_2: $(0.6,0.61)$;\\
    $\bullet$ y\_range\_1: $(-0.08,-0.07)$;\\
    $\bullet$ y\_range\_2: $(0.07,0.08)$;

    \subsection{Minitaur benchmarks} \label{app:minitaur}
    We consider both the changes on dynamics and reward functions.

    \paragraph{Changes on dynamics.} We change the mass of taur $m$ in the environment. Specifically, we consider both the continuous and discrete changes.

    $\bullet$ Continuous changes: $m_t=1.0+ 0.75 \sin (0.005 \cdot t)$; \\
    $\bullet$ Discrete and within-episode changes: $m_t=1.0+ 0.75 \sin(0.3 \cdot \lfloor t / 20\rfloor)$

    \paragraph{Changes on both dynamics and reward functions.} We also consider a case where both the dynamics and reward functions change at the beginning of each episode. We change the target speed of minitaur to introduce the non-stationarity of reward functions. The change functions are given below:
    
    \begin{equation*}
        \left\{\begin{array}{l}
                   m_i = 1.0 + 0.5 \sin(0.5 \cdot i) \\
                   s_{v} = 0.3+0.2 \sin (0.5 \cdot i)
        \end{array}\right.
    \end{equation*}

    \subsection{Full results} \label{app:full}
    Fig.~\ref{fig:appendix_cheetah_smooth} and~\ref{fig:appendix_minitaur_smooth} give the smoothed learning curves on average return over 10 runs versus timesteps in Half-Cheetah and Minitaur experiments.   Table~\ref{tbl:full} shows the average final return over 10 runs for all experiments.  Fig.~\ref{fig:non_s_appendix} demonstrates the return on Half-Cheetah with different non-stationary degrees on multi-factor changing scenario.
    Fig.~\ref{fig:appendix_latent} gives average return on different benchmarks with varying numbers of latent features with all evaluated approaches. 

    \begin{figure}[H]
        \centering
        \includegraphics[width=0.95\linewidth]{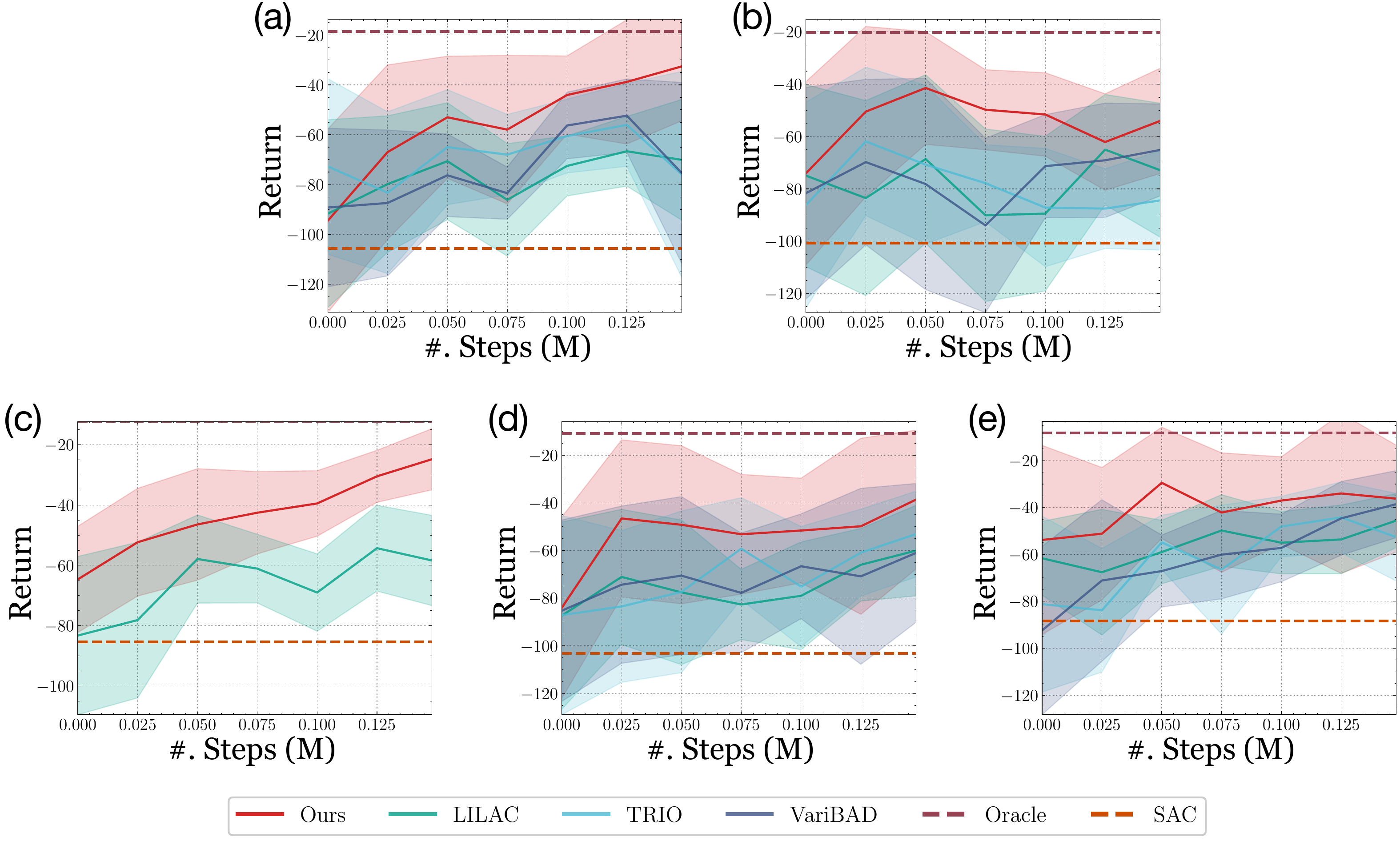}
        \caption{The average return (smoothed) across timesteps in Half-Cheetah experiments. (a) Discrete (across-episode) changes on wind forces; (b) Discrete (within-episode) changes on wind forces; (c) Continuous changes on wind forces; (d) Discrete (across-episode) changes on target speed; (e) Discrete (across-episode) changes on wind forces and target speed concurrently. }
        \label{fig:appendix_cheetah_smooth}
    \end{figure}

    \begin{figure}[H]
        \centering
        \includegraphics[width=0.95\linewidth]{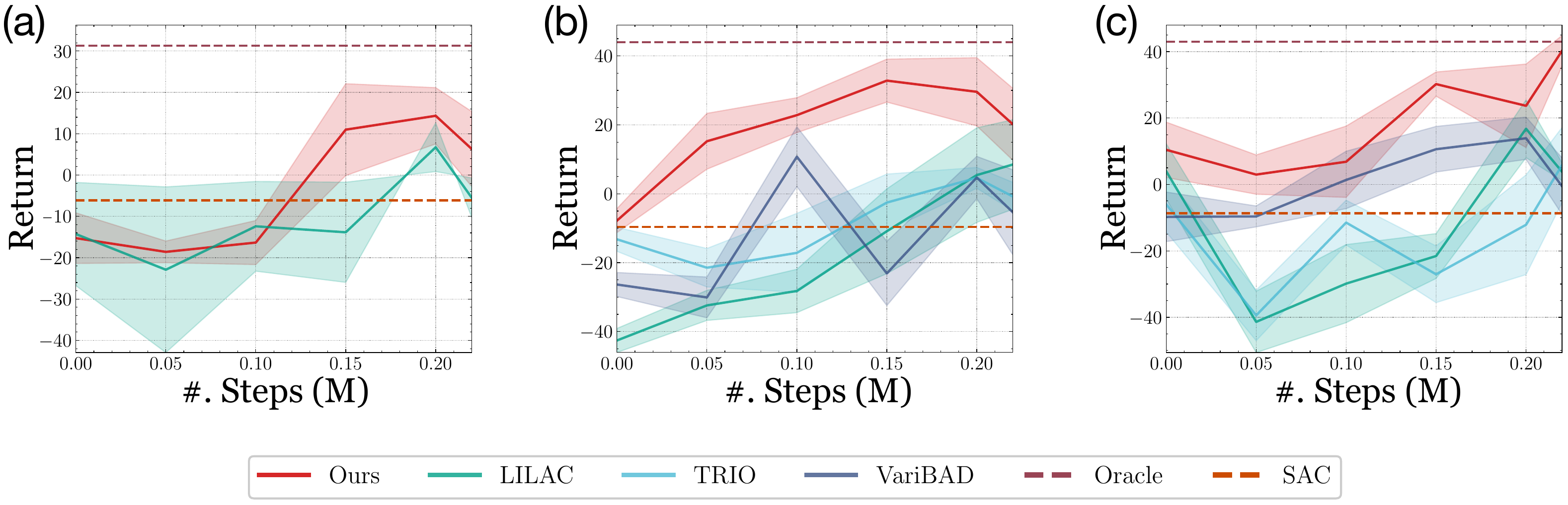}
        \caption{The average return (smoothed) across timesteps in Minitaur experiments. (a) Continuous changes on the mass; (b) Discrete (across-episode) changes on the target speed; (c) Discrete (across-episode) changes on mass and target speed concurrently.}
        \label{fig:appendix_minitaur_smooth}
    \end{figure}

    \begin{table}[H]
        \centering
        \begin{tabular}{c|cccccc}
            \toprule
            & Oracle & SAC & LILAC & TRIO & VariBAD & Ours \\ \midrule
            Half-Cheetah: A-EP (D) & \begin{tabular}[c]{@{}c@{}}
                                            $-24.4$\\ ($\pm 16.2$)
            \end{tabular} & \begin{tabular}[c]{@{}c@{}}
                                $-113.4$ $\bullet$\\ ($\pm 28.5$)
            \end{tabular} & \begin{tabular}[c]{@{}c@{}}
                                $-70.1$ $\bullet$\\ ($\pm 27.7$)
            \end{tabular} & \begin{tabular}[c]{@{}c@{}}
                                $-76.0$ $\bullet$\\ ($\pm 47.3$)
            \end{tabular} & \begin{tabular}[c]{@{}c@{}}
                                $-75.5$ $\bullet$\\ ($\pm 41.6$)
            \end{tabular} & \begin{tabular}[c]{@{}c@{}}
                                $\textbf{-32.6}$\\ ($\pm 25.0$)
            \end{tabular} \\ \hline
            Half-Cheetah: A-EP (A) & \begin{tabular}[c]{@{}c@{}}
                                            $-9.6$\\ ($\pm 5.7$)
            \end{tabular} & \begin{tabular}[c]{@{}c@{}}
                                $-30.5$ $\bullet$\\ ($\pm 12.1$)
            \end{tabular} & \begin{tabular}[c]{@{}c@{}}
                                $-19.4$ $\bullet$\\ ($\pm 11.4$)
            \end{tabular} & \begin{tabular}[c]{@{}c@{}}
                                $-21.9$ $\bullet$\\ ($\pm 13.0$)
            \end{tabular} & \begin{tabular}[c]{@{}c@{}}
                                $-17.3$ $\bullet$\\ ($\pm 10.2$)
            \end{tabular} & \begin{tabular}[c]{@{}c@{}}
                                $\textbf{-15.1}$\\ ($\pm 9.8$)
            \end{tabular} \\ \hline
            Half-Cheetah: W-EP (D) & \begin{tabular}[c]{@{}c@{}}
                                         $-48.2$\\ ($\pm 41.6$)
            \end{tabular} & \begin{tabular}[c]{@{}c@{}}
                                $-107.5$ $\bullet$\\ ($\pm 20.6$)
            \end{tabular} & \begin{tabular}[c]{@{}c@{}}
                                $-72.9$ $\bullet$\\ ($\pm 29.3$)
            \end{tabular} & \begin{tabular}[c]{@{}c@{}}
                                $-84.4$ $\bullet$\\ ($\pm 21.7$)
            \end{tabular} & \begin{tabular}[c]{@{}c@{}}
                                $-65.1$ $\bullet$\\ ($\pm 20.1$)
            \end{tabular} & \begin{tabular}[c]{@{}c@{}}
                                $\textbf{-54.0}$\\ ($\pm 23.0$)
            \end{tabular} \\ \hline
            Half-Cheetah: CONT (D) & \begin{tabular}[c]{@{}c@{}}
                                         $-12.3$\\ ($\pm 27.7$)
            \end{tabular} & \begin{tabular}[c]{@{}c@{}}
                                $-112.0$ $\bullet$\\ ($\pm 16.9$)
            \end{tabular} & \begin{tabular}[c]{@{}c@{}}
                                $-58.4$ $\bullet$\\ ($\pm 22.3$)
            \end{tabular} & - & - & \begin{tabular}[c]{@{}c@{}}
                                        $\textbf{-24.8}$\\ ($\pm 21.1$)
            \end{tabular} \\ \hline
            Half-Cheetah: A-EP (R) & \begin{tabular}[c]{@{}c@{}}
                                         $-10.9 $\\ ($\pm 20.1$)
            \end{tabular} & \begin{tabular}[c]{@{}c@{}}
                                $-131.5$ $\bullet$\\ ($\pm 16.9$)
            \end{tabular} & \begin{tabular}[c]{@{}c@{}}
                                $-60.1$ $\bullet$\\ ($\pm 21.7$)
            \end{tabular} & \begin{tabular}[c]{@{}c@{}}
                                $-53.1$ $\bullet$\\ ($\pm 20.6$)
            \end{tabular} & \begin{tabular}[c]{@{}c@{}}
                                $-61.0$ $\bullet$\\ ($\pm 33.3$)
            \end{tabular} & \begin{tabular}[c]{@{}c@{}}
                                $\textbf{-38.7}$\\ ($\pm 33.3$)
            \end{tabular} \\ \hline
            Half-Cheetah: A-EP (R+D) & \begin{tabular}[c]{@{}c@{}}
                                           $-15.2$\\ ($\pm 38.1$)
            \end{tabular} & \begin{tabular}[c]{@{}c@{}}
                                $-105.3$ $\bullet$\\ ($\pm 38.1$)
            \end{tabular} & \begin{tabular}[c]{@{}c@{}}
                                $-45.6$\\ ($\pm 13.1$)
            \end{tabular} & \begin{tabular}[c]{@{}c@{}}
                                $-52.6$ $\bullet$\\ ($\pm 21.4$)
            \end{tabular} & \begin{tabular}[c]{@{}c@{}}
                                $-38.6$\\ ($\pm 16.3$)
            \end{tabular} & \begin{tabular}[c]{@{}c@{}}
                                $\textbf{-36.2}$\\ ($\pm 26.0$)
            \end{tabular} \\ \hline
            Sawyer-Reaching: A-EP (R) & \begin{tabular}[c]{@{}c@{}}
                                            $6.4 $\\ ($\pm 3.9$)
            \end{tabular} & \begin{tabular}[c]{@{}c@{}}
                                $-52.5$ $\bullet$\\ ($\pm 9.1$)
            \end{tabular} & \begin{tabular}[c]{@{}c@{}}
                                $-34.0$ $\bullet$\\ ($\pm 8.2$)
            \end{tabular} & \begin{tabular}[c]{@{}c@{}}
                                $-28.1$ $\bullet$\\ ($\pm 2.9$)
            \end{tabular} & \begin{tabular}[c]{@{}c@{}}
                                $-31.3$ $\bullet$\\ ($\pm 4.3$)
            \end{tabular} & \begin{tabular}[c]{@{}c@{}}
                                $\textbf{-9.7}$\\ ($\pm 2.5$)
            \end{tabular} \\ \hline
            Minitaur: CONT (D) & \begin{tabular}[c]{@{}c@{}}
                                     $31.3$\\ ($\pm 4.2$)
            \end{tabular} & \begin{tabular}[c]{@{}c@{}}
                                $-6.1$ $\bullet$\\ ($\pm 3.9$)
            \end{tabular} & \begin{tabular}[c]{@{}c@{}}
                                $-5.5$ $\bullet$\\ ($\pm 11.7$)
            \end{tabular} & - & - & \begin{tabular}[c]{@{}c@{}}
                                        $\textbf{6.3}$\\ ($\pm 10.4$)
            \end{tabular} \\ \hline
            Minitaur: W-EP (D) & \begin{tabular}[c]{@{}c@{}}
                                     $44.9$\\ ($\pm 5.8$)
            \end{tabular} & \begin{tabular}[c]{@{}c@{}}
                                $-9.6$ $\bullet$ \\ ($\pm 5.5$)
            \end{tabular} & \begin{tabular}[c]{@{}c@{}}
                                $8.5$ $\bullet$\\ ($\pm 14.9$)
            \end{tabular} & \begin{tabular}[c]{@{}c@{}}
                                $-0.8$ $\bullet$\\ ($\pm 4.7$)
            \end{tabular} & \begin{tabular}[c]{@{}c@{}}
                                $5.4$ $\bullet$\\ ($\pm 14.1$)
            \end{tabular} & \begin{tabular}[c]{@{}c@{}}
                                $\textbf{20.2}$\\ ($\pm 11.9$)
            \end{tabular} \\ \hline
            Minitaur: A-EP (R+D) & \begin{tabular}[c]{@{}c@{}}
                                       $43.0$\\ ($\pm 4.7$)
            \end{tabular} & \begin{tabular}[c]{@{}c@{}}
                                $-8.7$ $\bullet$ \\ ($\pm 5.4$)
            \end{tabular} & \begin{tabular}[c]{@{}c@{}}
                                $3.8$ $\bullet$\\ ($\pm 3.0$)
            \end{tabular} & \begin{tabular}[c]{@{}c@{}}
                                $5.8$ $\bullet$\\ ($\pm 12.9$)
            \end{tabular} & \begin{tabular}[c]{@{}c@{}}
                                $21.5$ $\bullet$\\ ($\pm 9.7$)
            \end{tabular} & \begin{tabular}[c]{@{}c@{}}
                                $\textbf{40.2}$\\ ($\pm 5.3$)
            \end{tabular} \\ \bottomrule
        \end{tabular}
        \label{tbl:full}
        \caption{Average final return of different methods on Half-Cheetah, Sawyer-Reaching, and minitaur benchmarks with a variety of non-stationary settings. The best non-oracle results w.r.t. the mean are marked in  \textbf{bold}. "$\bullet$" indicates the baseline for which the improvements of our approach are statistically significant (via Wilcoxon signed-rank test at $5\%$ significance level). D, R, and A denote changes on dynamics, reward and agent's mechanism respectively. A-EP, W-EP, and CONT denote across-episode, within-episode and continuous changes, respectively.}
    \end{table}
    \begin{figure}[H]
        \centering
        \includegraphics[width=0.4\linewidth]{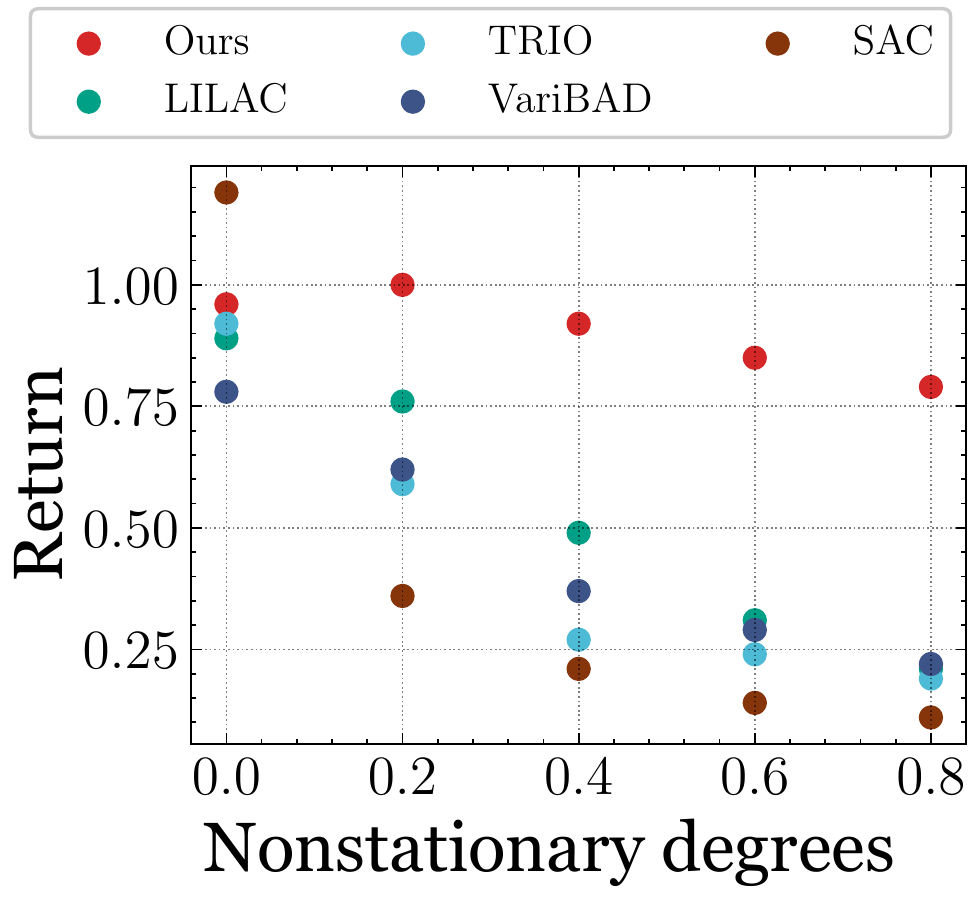}
        \caption{Average final return on 10 runs on Half-Cheetah with different non-stationary degrees on across-episode and multi-factor changes.}
        \label{fig:non_s_appendix}
    \end{figure}

    \begin{figure}[H]
        \begin{center}
            \includegraphics[width=1\linewidth]{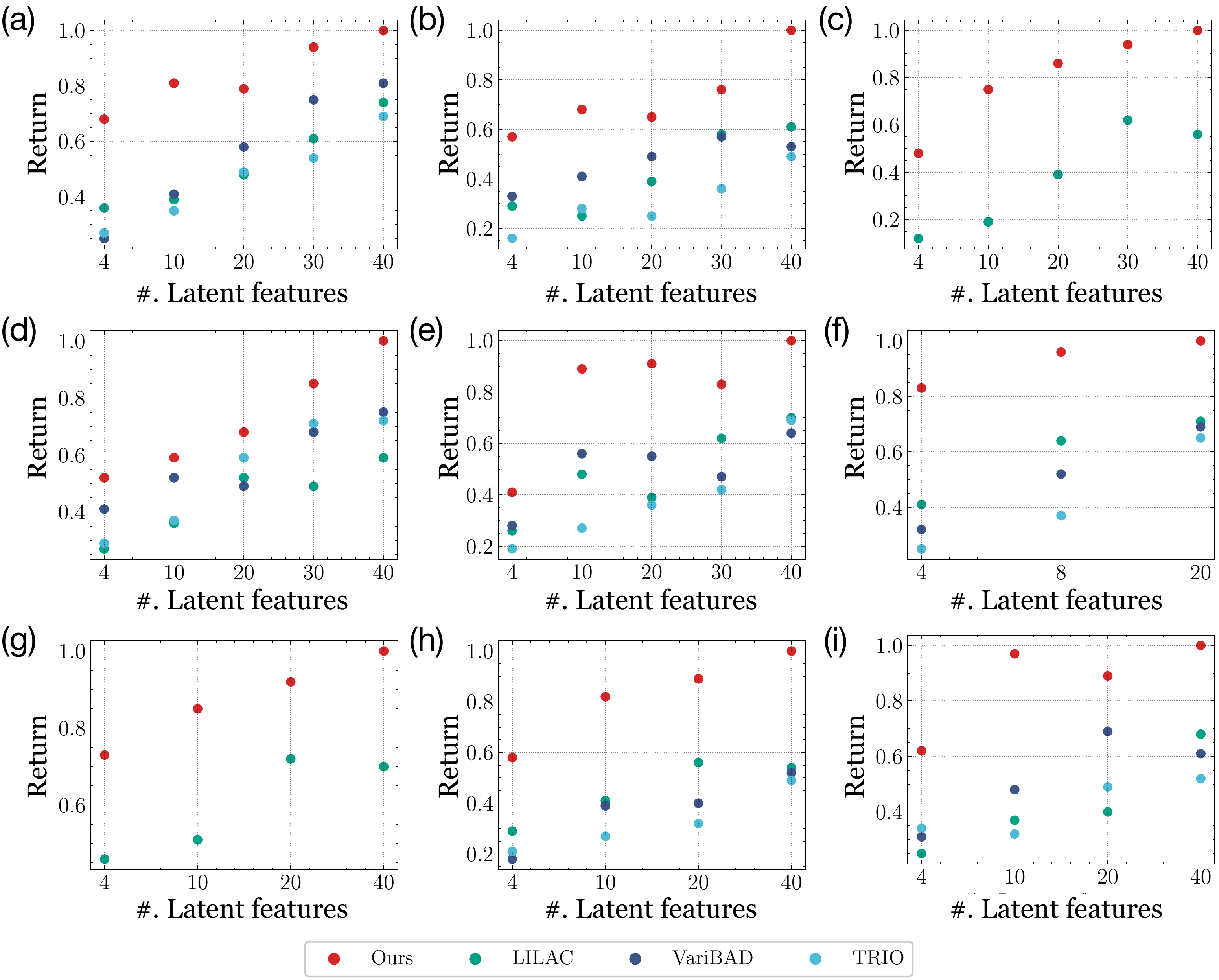}
        \end{center}
        \caption{Average return on different benchmarks with different number of latent features. (a) Half-Cheetah experiments with discrete (across-episode) changes on wind forces; (b) Half-Cheetah experiments with discrete (within-episode) changes on wind forces; (c) Half-Cheetah experiments with continuous changes on wind forces; (d) Half-Cheetah experiments with discrete (across-episode) changes on target speed; (e) Half-Cheetah experiments with discrete (across-episode) changes on wind forces and target speed concurrently; (f) Sawyer-Reaching experiment with discrete (across-episode) changes on target locations; (g) Minitaur experiments with continuous changes on the mass; (h) Minitaur experiments with discrete (across-episode) changes on the target speed; (i) Minitaur experiments with discrete (across-episode) changes on mass and target speed concurrently.
        }
        \label{fig:appendix_latent}
    \end{figure}

    \subsection{Ablation studies on FANS-RL} \label{app:ablation}
    \begin{figure}[H]
        \begin{center}
            \includegraphics[width=0.6\linewidth]{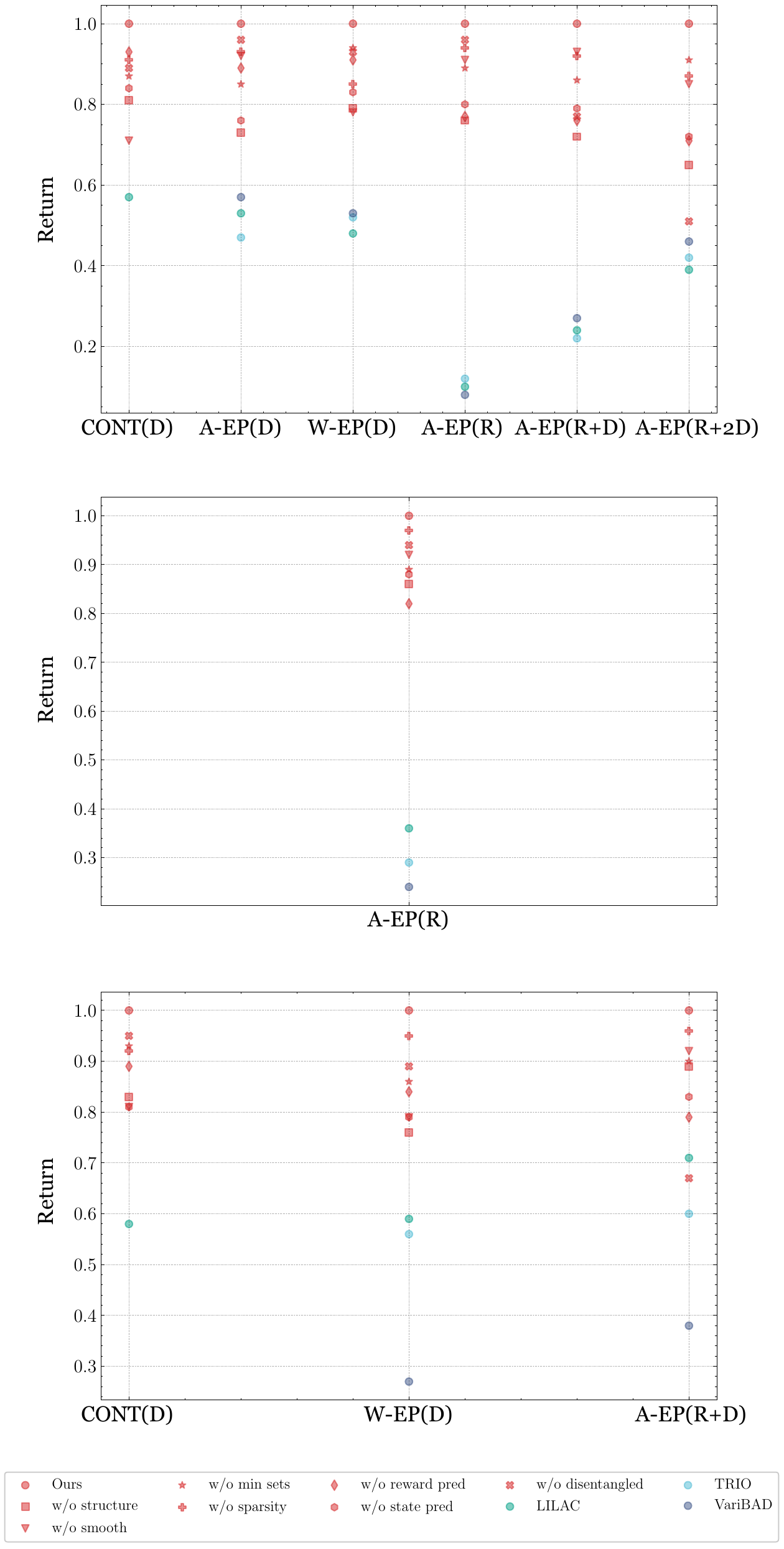}
        \end{center}
        \caption{Ablation studies on different components in FANS-RL on (a) Half-Cheetah experiment; (b) Sawyer experiment; and (c) Minitaur experiments. CONT, A-, W-EP indicate continuous, across-episode, and within-episode changes, respectively. (D) and (R) represent changes on dynamics and reward functions, respectively. Best viewed in color.}
        \label{fig:ablation2}
    \end{figure}
    To verify the effectiveness of each component in our proposed framework, we consider the following ablation studies:

    $\bullet$ Without smoothness loss ($\mathcal{L}_\text{smooth}$);

    $\bullet$ Without structural relationships ($\boldsymbol{C}^{\cdot \scriptveryshortarrow \cdot}$);

    $\bullet$ Without compact representations ($\boldsymbol{s}^{min}, \boldsymbol{\theta}^{min}$);

    $\bullet$ Without sparsity losses ($\mathcal{L}_{\text{sparse}}$);

    $\bullet$ Without reward or state prediction losses ($\mathcal{L}_{\text{pred-rw}}$, $\mathcal{L}_{\text{pred-dyn}}$);

    $\bullet$ Without the disentangled design of CF inference networks for dynamics ($q_{\phi}^s$) and rewards ($q_{\phi}^r$). Specifically, we use one CF inference encoder and the mixed latent space of $\boldsymbol{\theta}^s$ and $\boldsymbol{\theta}^r$ in this setting.

    As shown in Fig.~\ref{fig:ablation2}, all the studied components benefit the performance. Furthermore, FANS-RL can still outperform the strong baselines even without some of the components.

    We also test different smoothness losses, including the moving average (MA) $\mathcal{L}_{\text {smooth }}=\sum_{t=2}^T\left(\left\|\theta_t-\left(\theta_{t-1}+\theta_{t-2}+\ldots+\theta_{t-T}\right) / T\right\|_1\right)$ and exponential moving average (EMA) $\mathcal{L}_{\text {smooth }}=\sum_{t=2}^T\left(\left\|\theta_t-\left(\beta \theta_{t-1}+(1-\beta) \mathbf{v}_{t-2}\right)\right\|_1\right)$, where $\mathbf{v}_t=\beta \theta_t+(1-\beta) \mathbf{v}_{t-1}$ and $\mathbf{v}_0$ is a zero vector. Table~\ref{tbl:smooth_ablation} shows the normalized final results of using different smoothness losses. We can find that different smoothness losses have comparable performances.

    \begin{table}[H]
        \centering
        \begin{tabular}{c|c|c|c}
            \toprule
            & Ours   & MA ($T=2$) & EMA ($\beta = 0.98$) \\ \midrule
            Half-Cheetah: A-EP (D\_1) & $1.00$ & $1.02$     & $0.89$               \\ \hline
            Half-Cheetah: A-EP (D\_2) & $1.00$ & $0.96$     & $0.90$               \\ \hline
            Half-Cheetah: W-EP (D)    & $1.00$ & $0.88$     & $1.05$               \\ \hline
            Half-Cheetah: CONT (D)    & $1.00$ & $1.04$     & $0.95$               \\ \hline
            Half-Cheetah: A-EP (R)    & $1.00$ & $0.93$     & $0.82$               \\ \hline
            Half-Cheetah: A-EP (R+D)  & $1.00$ & $1.09$     & $1.02$               \\ \hline
            Sawyer-Reaching: A-EP (R) & $1.00$ & $0.97$     & $0.91$               \\ \hline
            Minitaur: CONT (D)        & $1.00$ & $1.08$     & $0.96$               \\ \hline
            Minitaur: W-EP (D)        & $1.00$ & $0.86$     & $1.03$               \\ \hline
            Minitaur: A-EP (R+D)      & $1.00$ & $0.97$     & $0.94$               \\ \bottomrule
        \end{tabular}
        \label{tbl:smooth_ablation}
        \caption{Average final return of using different smoothness losses. }
    \end{table}

    \subsection{Visualization on the learned change factors}
    Fig.~\ref{fig:heatmap} gives the visualization on the learned $\boldsymbol{\theta}$ in Half-Cheetah. Fig.~\ref{fig:heatmap}(a-b) show the pairwise Euclidean distance between learned $\boldsymbol{\theta}^r$ and the axes denote the values of change factors on rewards. Similarly, Fig.~\ref{fig:heatmap}(c-d) displays the Euclidean distance between learned $\boldsymbol{\theta}^s$ and the values of change factors on dynamics. The results suggest that there is a positive correlation between the distance of learned $\boldsymbol{\theta}$ versus the true change factors. This can verify that $\boldsymbol{\theta}$ can capture the change factors in the system.

    \begin{figure}[H]
        \begin{center}
            \includegraphics[width=1.0\linewidth]{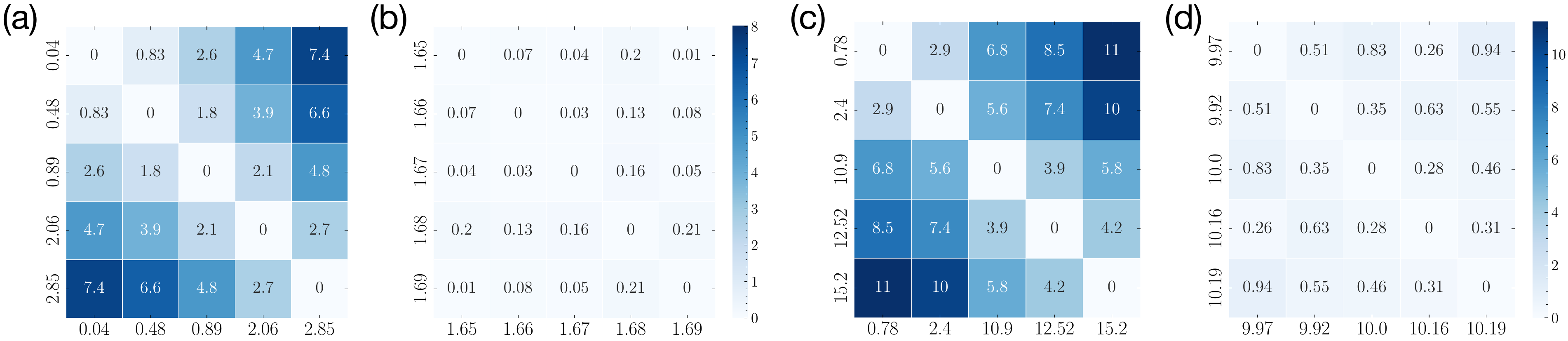}
        \end{center}
        \caption{Visualization on the learned $\boldsymbol{\theta}^r$ and $\boldsymbol{\theta}^s$. Best viewed in color.}
        \label{fig:heatmap}
    \end{figure}

    \section{Details on the Factored Adaptation Framework}

    \subsection{Algorithm pipelines of FN-VAE}
    Alg.~\ref{alg:A1} gives the full pipeline of FN-VAE.
    \begin{algorithm}[h]
        \caption{Learning FN-MDPs using FN-VAE.}
        \label{alg:A1}
        \begin{algorithmic}
            \STATE {\bfseries Input:} Trajectories $\boldsymbol{\tau}$, FN-VAE parameters $\phi = (\phi^s, \phi^r)$, $\alpha = (\alpha_1, \alpha_2)$, $\beta = (\beta_1, \beta_2)$, $\gamma$; Mask matrices $G = (\boldsymbol{C^{\cdot \scriptveryshortarrow \cdot}})$, Boolean updateG, Learning rates $ \lambda_{\phi}, \lambda_{\gamma}, \lambda_{\alpha}, \lambda_{\beta}, \lambda_{G}$, Length of collected rollouts $k$; Number of training epochs E.
            \STATE {\bfseries Output:} $\phi$, $\alpha_1$, $\alpha_2$, $\beta_1$, $\beta_2$, $\gamma$
            \FOR {$i=1,2,\ldots$, E}
            \STATE Randomly sample a batch of trajectories $\boldsymbol{\tau}_{0:k}$ in $\boldsymbol{\tau}$
            \STATE \text{\fontfamily{qcr}\selectfont \# Infer the latent change factors}
            \FOR {j = s, r }
            \STATE Infer $\mu_{\phi^j}(\boldsymbol{\tau}_{0:k})$ and $\sigma^2_{\phi^j}(\boldsymbol{\tau}_{0:k})$ using $q_{\phi^j}$
            \STATE Infer $\mu_{\gamma^j}(\boldsymbol{\theta}^j_{0:k})$  and $\sigma^2_{\gamma^j}(\boldsymbol{\theta}^j_{0:k})$ using $p_{\gamma^j}$
            \STATE Sample $\boldsymbol{\theta}^j_{0: k} \sim \mathcal{N}\left(\mu_{\phi^j}\left(\boldsymbol{\tau}_{0:k}\right), \sigma_{\phi^j}^{2}\left(\boldsymbol{\tau}_{0:k}\right)\right)$
            \ENDFOR
            \STATE Reconstruct and predict $\hat{\boldsymbol{s}}_{0:k}$, $\hat{\boldsymbol{s}}_{1:k}$, $\hat{\boldsymbol{r}}_{0:k}$, $\hat{\boldsymbol{r}}_{1:k}$ using $p_{\alpha_1}$, $p_{\alpha_2}$, $p_{\beta_1}$, and $p_{\beta_2}$
            \STATE \text{\fontfamily{qcr}\selectfont \# Update the FN-VAE model}
            \STATE $\phi \leftarrow \phi - \lambda_{\phi} \nabla_{\phi}{\mathcal{L}}_\text{VAE}$
            \STATE $\gamma \leftarrow \gamma - \lambda_{\gamma} \nabla_{\gamma}\left({\mathcal{L}}_\text{KL} + {\mathcal{L}}_\text{smooth} \right)$
            \STATE $\alpha \leftarrow \alpha - \lambda_{\alpha} \nabla_{\alpha}(\mathcal{L}_\text{rec-dyn} + \mathcal{L}_\text{pred-dyn})$
            \STATE $\beta \leftarrow \beta - \lambda_\beta \nabla_{\beta}(\mathcal{L}_\text{rec-rw} + \mathcal{L}_\text{pred-rw})$
            \IF {updateG}
            \STATE $G \leftarrow G - \lambda_G \nabla_G\left({\mathcal{L}}_\text{rec-dyn} + {\mathcal{L}}_\text{rec-rw} + {\mathcal{L}}_\text{KL} + {\mathcal{L}}_\text{sparse} \right)$
            \ENDIF
            \ENDFOR
        \end{algorithmic}
    \end{algorithm}

    \subsection{The framework dealing with discrete and across-episode changes}
    \label{app: discrete}

    \begin{algorithm}[!h]
        \caption{Factored Adaptation for non-stationary RL (discrete changes.)}
        \label{alg:A3}
        \begin{algorithmic}[1]
            \STATE {\bfseries Init:} Env; VAE parameters: $\phi = (\phi^s, \phi^r)$, $\alpha = (\alpha_1, \alpha_2)$, $\beta = (\beta_1, \beta_2)$, $\gamma$; Mask matrices: $\boldsymbol{C^{\cdot \scriptveryshortarrow \cdot}}$; Policy parameters: $\psi$; replay buffer: $\mathcal{D}$; Number of episodes: $N$; Episode horizon: $H$; Change index $\tilde{\boldsymbol{t}}=\{t_1, \ldots, t_M\}$; $m=0$.
            \STATE {\bfseries Output:} $\phi$, $\alpha_1$, $\alpha_2$, $\beta_1$, $\beta_2$, $\gamma$, $\psi$
            \STATE \text{\fontfamily{qcr}\selectfont \# Model initialization}
            \STATE Collect multiple trajectories $\boldsymbol{\tau} = \{\boldsymbol{\tau}^{1}_{0:k}, \boldsymbol{\tau}^{2}_{0:k}, \ldots\}$ with policy $\pi_{\psi}$ from Env;
            \STATE Learn an initial VAE model on $\boldsymbol{\tau}$ (Alg.~\ref{alg:A1})
            \STATE Identify the compact representations $\boldsymbol{s}^{min}$ and change factors $\boldsymbol{\theta}^{min}$ based on $\boldsymbol{C}^{\cdot \scriptveryshortarrow \cdot}$
            \STATE \text{\fontfamily{qcr}\selectfont \# Model estimation \& policy learning}
            \FOR {$n=0,\ldots, N-1$}
            \FOR{$t=0, \ldots, H-1$}
            \STATE Observe $\boldsymbol{s}_{t}$ from Env;
            \STATE \text{\fontfamily{qcr}\selectfont \# Estimating latent change factors}
            \IF{$n \cdot (H-1)+ t \in \tilde{\boldsymbol{t}}$}
            \STATE $m \leftarrow m+1$
            \FOR {j = s, r }
            \STATE Infer $\mu_{\gamma^j}(\boldsymbol{\theta}^j_{{t}_{m-1}})$ and $\sigma^2_{\gamma^j}(\boldsymbol{\theta}^j_{{t}_{m-1}})$ via $p_{\gamma^j}$
            \STATE Sample ${\boldsymbol{\theta}}^j_{t_m} \sim \mathcal{N}\left(\mu_{\gamma^j}(\boldsymbol{\theta}^j_{{t}_{m-1}}), \sigma_{\gamma^j}^{2}(\boldsymbol{\theta}^j_{{t}_{m-1}})\right)$
            \ENDFOR
            \ENDIF
            \STATE Generate $\boldsymbol{a}_{t} \sim \pi_\psi (\boldsymbol{a}_{t} \mid \boldsymbol{s}^{min}_{t}, \boldsymbol{\theta}^{min}_{t_m})$
            \STATE Receive $r_{n,t}$ from Env
            \STATE Add $(\boldsymbol{s}_{t}, \boldsymbol{a}_{t}, r_t, \boldsymbol{\theta}^s_{t_m}, \boldsymbol{\theta}^r_{t_m})$ to replay buffer $\mathcal{D}$;
            \STATE  Extract a trajectory with length $k$ from $\mathcal{D}$;
            \STATE Learn VAE (Alg.~\ref{alg:A1}) with updateG=False;
            \STATE Sample a batch of data from $\mathcal{D}$
            \STATE Update policy network parameters $\psi$
            \ENDFOR
            \ENDFOR
        \end{algorithmic}
    \end{algorithm}
    Alg.~\ref{alg:A3} gives the extended framework for handling both across- and within-episode changes in non-stationary RL, respectively. The major difference between Alg.~\ref{alg:A3} and Alg.~\ref{alg:A2} is that we only infer $\theta$ using via CF dynamics networks at change points. Furthermore, we also adjust the objective functions of FN-VAE to fit the discrete changes.
    At timestep $t$ in episode $n$, where $t_{m} \leq \big((n-1) \cdot H + t\big) < t_{m+1}$, we have:

    $\bullet$ Prediction and reconstruction losses:
    
    \begin{equation}
        \begin{array}{l}
            \mathcal{L}_{\text{rec-dyn}} =
            \sum \limits_{t=1}^{T-2} \mathbb{E}_{\theta^s_{t_m} \sim q_{\phi}} \log p_{\alpha_1}(\boldsymbol{s}_{t} |\boldsymbol{s}_{t-1}, \boldsymbol{a}_{t-1},\boldsymbol{\theta^s}_{t_m}; \boldsymbol{C}^{\cdot \scriptveryshortarrow \boldsymbol{s}})
            \\
            \mathcal{L}_{\text{pred-dyn}} =  \sum \limits_{t=1}^{T-2} \mathbb{E}_{\theta^s_{t_m} \sim q_{\phi}} \log p_{\alpha_2}(\boldsymbol{s}_{t+1}| \boldsymbol{s}_{t},
            \boldsymbol{a}_{t},\boldsymbol{\theta^s}_{t_m})
            \label{eq:dyn2}
        \end{array}
    \end{equation}
    
    \begin{equation}
        \begin{array}{l}
            \mathcal{L}_{\text{rec-rw}}  = \sum \limits_{t=1}^{T-2} \mathbb{E}_{\theta^r_{t_m} \sim q_{\phi}} \log p_{\beta_1}(r_{t} |\boldsymbol{s}_{t}, \boldsymbol{a}_{t},\boldsymbol{\theta}_{t_m}^r; \boldsymbol{c}^{\boldsymbol{s} \scriptveryshortarrow r}, \boldsymbol{c}^{\boldsymbol{a} \scriptveryshortarrow r})
            \\
            \mathcal{L}_{\text{pred-rw}} = \sum \limits_{t=1}^{T-2} \mathbb{E}_{\theta^r_{t_m} \sim q_{\phi}} \log p_{\beta_2}(r_{t+1}| \boldsymbol{s}_{t+1}, \boldsymbol{a}_{t+1},\boldsymbol{\theta}_{t_m}^r)
            \label{eq:rw2}
        \end{array}
    \end{equation}
    
    $\bullet$ KL loss:
    
    \begin{equation}
        \label{eq:kl2}
        \begin{array}{l}
            \mathcal{L}_{\text{KL}} = \sum \limits_{t=2}^T  \text{KL}  \big( q_{\phi^s}(\boldsymbol{\theta^s}_{t_{m}} |\boldsymbol{\theta^s}_{t_{m-1}}, \boldsymbol{\tau}_{0:t})) \Vert   {p_{\gamma^s}(\boldsymbol{\theta^s}_{t_{m}}|\boldsymbol{\theta^s}_{t_{m-1}}; \boldsymbol{C}^{\boldsymbol{\theta^s} \scriptveryshortarrow \boldsymbol{\theta^s}})} \big) \\
            + \text{KL}  \big( q_{\phi^r}(\boldsymbol{\theta}^r_{t_m} | \boldsymbol{\theta}^r_{t_{m-1}}, \boldsymbol{\tau}_{0:t})) \Vert   {p_{\gamma^r}(\boldsymbol{\theta}^r_{t_m}|\boldsymbol{\theta}^r_{t_{m-1}}; \boldsymbol{C}^{\boldsymbol{\theta}^r \scriptveryshortarrow \boldsymbol{\theta}^r})} \big)
        \end{array}
    \end{equation}
    
    $\bullet$ Sparsity loss:
    
    \begin{equation}
        \begin{aligned}
            \mathcal{L}_{\text{sparse}} =&  w_1\| \boldsymbol{C}^{\boldsymbol{s} \scriptveryshortarrow \boldsymbol{s}} \|_1 + w_2\| \boldsymbol{C}^{\boldsymbol{a} \scriptveryshortarrow \boldsymbol{s}} \|_1 + w_3\| \boldsymbol{C}^{\boldsymbol{\theta^s} \scriptveryshortarrow \boldsymbol{s}} \|_1 \\
            & + w_6\| \boldsymbol{C}^{\boldsymbol{\theta^s} \scriptveryshortarrow \boldsymbol{\theta^s}} \|_1 + w_7\| \boldsymbol{C}^{\boldsymbol{\theta}^r \scriptveryshortarrow \boldsymbol{\theta}^r}\|_1 \\
            & + w_4\| \boldsymbol{c}^{\boldsymbol{s} \scriptveryshortarrow r} \|_1 + w_5\| \boldsymbol{c}^{\boldsymbol{a} \scriptveryshortarrow r} \|_1
        \end{aligned}
        \label{eq:sparse2}
    \end{equation}
    
    $\bullet$ Smoothness loss:
    
    \begin{equation}
        \mathcal{L}_{\text{smooth}} =  \sum
        \limits_{t=2}^T \left(|| \boldsymbol{\theta^s}_{t_m} - \boldsymbol{\theta^s}_{t_{m-1}}||_1 +  ||\boldsymbol{\theta^r}_{t_m} - \boldsymbol{\theta^r}_{t_{m-1}}||_1  \right)
    \end{equation}
    
    The total loss $\mathcal{L}_{\text{vae}} = k_1\mathcal{L}_{\text{rec}} + k_2\mathcal{L}_{\text{pred}} - k_3\mathcal{L}_{\text{KL}} - k_4\mathcal{L}_{\text{sparse}} -
    k_5\mathcal{L}_{\text{smooth}}$, where $k_1$, $k_2$, $k_3$, $k_4$, and $k_5$ are adjustable hyper-parameters to balance the objective functions.

    \subsection{The framework dealing with raw pixels}
    \label{app: pixel}
    We augment the generative process in Eq.~\ref{eq: adarl1}-\ref{Eq: conti_model} with the generative process of observation.
    
    \begin{equation}
    {o}
        _{t} = u_{i} ({c}_{i}^{\boldsymbol{s} \scriptveryshortarrow \boldsymbol{o}} \odot \boldsymbol{s}_{t}, \epsilon_{t}^{o} ),
    \end{equation}
    
    where $u$ is a non-linear function and $i=1, \ldots, d$. $\boldsymbol{c}^{\boldsymbol{s} \scriptveryshortarrow \boldsymbol{o}}:=[c_i^{\boldsymbol{s} \scriptveryshortarrow \boldsymbol{o}}]_{i=1}^d$. $\epsilon_t^o$ is an i.i.d. random noise. To learn the $u_i$, we model the states as the latent variables in FN-VAE.
    \begin{figure}[h]
        \begin{center}
            \includegraphics[width=0.7\linewidth]{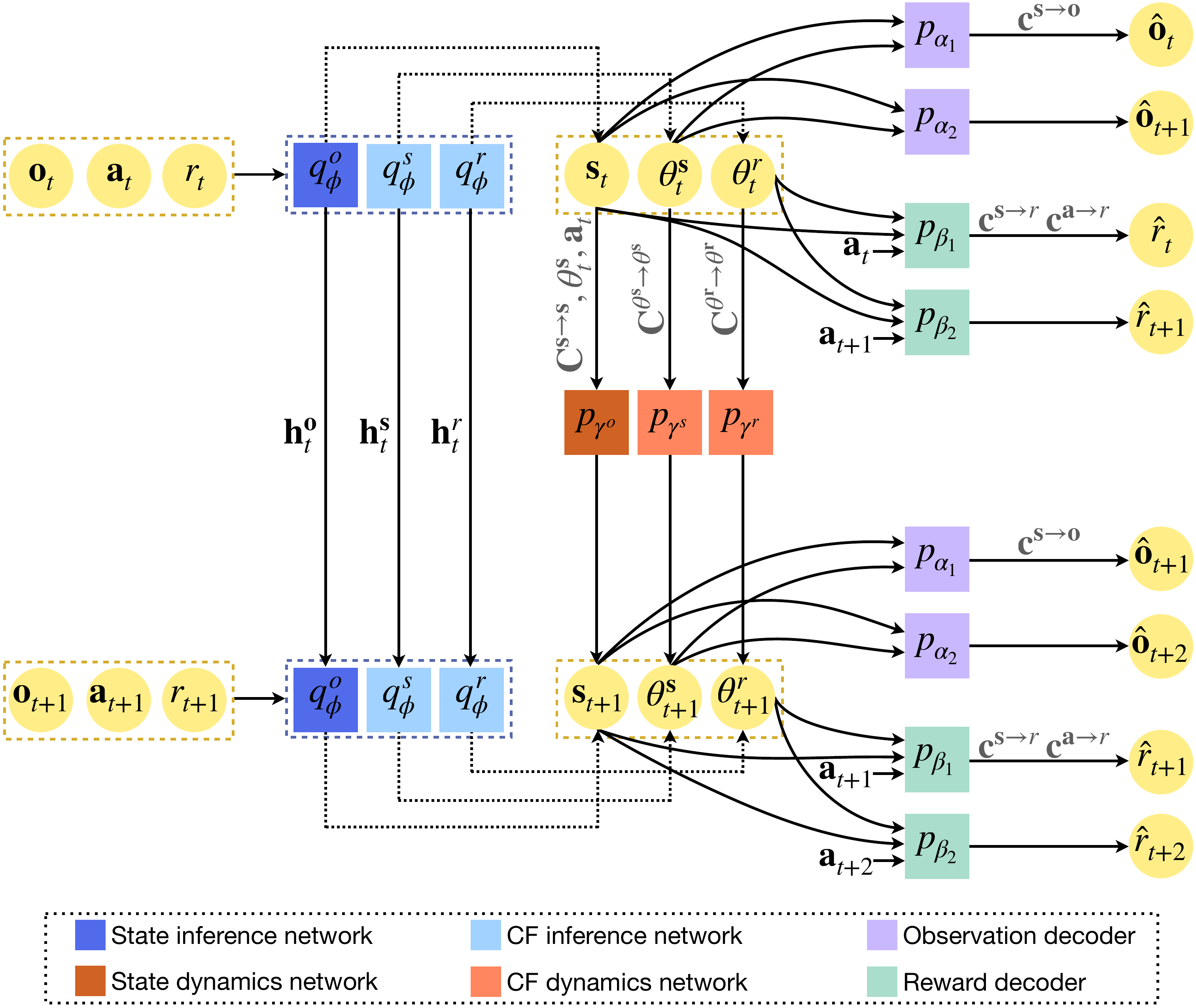}
        \end{center}
        \caption{The architecture of FN-VAE using raw pixel as input.
        }
        \label{fig:pipeline_pomdp}
    \end{figure}
    Fig.~\ref{fig:pipeline_pomdp} gives the modified FN-VAE dealing with raw pixels, where the states are also in the latent space. Different from the original FN-VAE, we incorporate state inference networks and state dynamics networks. Moreover,
    we reconstruct and predict the current and future observations using the observation decoder. Detailed objective functions are given below.\footnote{Here we give the example of handling the discrete changes.}

    $\bullet$ Prediction and reconstruction losses
    
    \begin{equation}
        \begin{array}{l}
            \mathcal{L}_{\text{rec-obs}} =
            \sum \limits_{t=1}^{T-2} \mathbb{E}_{s_t \sim q_{\phi^o}} \log p_{\alpha_1}(\boldsymbol{o}_{t} |\boldsymbol{s}_{t}; \boldsymbol{c}^{\boldsymbol{s} \scriptveryshortarrow \boldsymbol{o}}) 
            \\
            \mathcal{L}_{\text{pred-obs}} =  \sum \limits_{t=1}^{T-2} \mathbb{E}_{{s_t \sim q_{\phi^o}}} \log p_{\alpha_2}(\boldsymbol{o}_{t+1}| \boldsymbol{s}_{t}, \boldsymbol{\theta^s}_{t_m})
            \label{eq:obs}
        \end{array}
    \end{equation}
    
    \begin{equation}
        \begin{array}{l}
            \mathcal{L}_{\text{rec-rw}}  = \sum \limits_{t=1}^{T-2} \mathbb{E}_{(\theta^r_{t_m} \sim q_{\phi}, s_t \sim q_{\phi^o})} \log p_{\beta_1}(r_{t} |\boldsymbol{s}_{t}, \boldsymbol{a}_{t},\boldsymbol{\theta}_{t_m}^r; \boldsymbol{c}^{\boldsymbol{s} \scriptveryshortarrow r}, \boldsymbol{c}^{\boldsymbol{a} \scriptveryshortarrow r})
            \\
            \mathcal{L}_{\text{pred-rw}} = \sum \limits_{t=1}^{T-2} \mathbb{E}_{(\theta^r_{t_m} \sim q_{\phi}, s_t \sim q_{\phi^o})} \log p_{\beta_2}(r_{t+1}| \boldsymbol{s}_t, \boldsymbol{a}_{t+1},\boldsymbol{\theta}_{t_m}^r)
            \label{eq:rw3}
        \end{array}
    \end{equation}
    
    $\bullet$ KL loss
    
    \begin{equation}
        \label{eq:kl3}
        \begin{aligned}
            \mathcal{L}_{\text{KL}} = \sum \limits_{t=2}^T   & \text{KL}  \big( q_{\phi^s}(\boldsymbol{\theta^s}_{t_{m}} |\boldsymbol{\theta^s}_{t_{m-1}}, \boldsymbol{\tau}_{0:t})) \Vert   {p_{\gamma^s}(\boldsymbol{\theta^s}_{t_{m}}|\boldsymbol{\theta^s}_{t_{m-1}}; \boldsymbol{C}^{\boldsymbol{\theta^s} \scriptveryshortarrow \boldsymbol{\theta^s}})} \big) \\
            +  & \text{KL}  \big( q_{\phi^r}(\boldsymbol{\theta}^r_{t_m} | \boldsymbol{\theta}^r_{t_{m-1}}, \boldsymbol{\tau}_{0:t})) \Vert   {p_{\gamma^r}(\boldsymbol{\theta}^r_{t_m}| \boldsymbol{C}^{\boldsymbol{o}_t \scriptveryshortarrow \boldsymbol{\theta}^r})} \big) \\
            +  & \text{KL}  \big( q_{\phi^o}(\boldsymbol{s}_t | \boldsymbol{\tau}_{0:t}, \boldsymbol{\theta}^s_{t_{m}})) \Vert   {p_{\gamma^o}(\boldsymbol{s}_t| \boldsymbol{s}_{t-1}, \boldsymbol{a}_{t-1}, \boldsymbol{\theta}^s_{t_{m}}; \boldsymbol{C}^{\boldsymbol{s} \scriptveryshortarrow \boldsymbol{s}}, \boldsymbol{C}^{\boldsymbol{a} \scriptveryshortarrow \boldsymbol{s}}, \boldsymbol{C}^{\boldsymbol{\theta}^s \scriptveryshortarrow \boldsymbol{s}})} \big)
        \end{aligned}
    \end{equation}
    
    where $\boldsymbol{\tau}_{0:t}=\{\boldsymbol{o}_0, r_0, \boldsymbol{o}_1, r_1, \ldots, \boldsymbol{o}_t, r_t\}$.
    \\
    $\bullet$ Sparsity loss
    
    \begin{equation}
        \begin{aligned}
            \mathcal{L}_{\text{sparse}} =&  w_1\| \boldsymbol{C}^{\boldsymbol{s} \scriptveryshortarrow \boldsymbol{s}} \|_1 + w_2\| \boldsymbol{C}^{\boldsymbol{a} \scriptveryshortarrow \boldsymbol{s}} \|_1 + w_3\| \boldsymbol{C}^{\boldsymbol{\theta^s} \scriptveryshortarrow \boldsymbol{s}} \|_1 \\
            & + w_4\| \boldsymbol{c}^{\boldsymbol{s} \scriptveryshortarrow r} \|_1 + w_5\| \boldsymbol{c}^{\boldsymbol{a} \scriptveryshortarrow r} \|_1 \\
            & + w_6\| \boldsymbol{C}^{\boldsymbol{\theta^s} \scriptveryshortarrow \boldsymbol{\theta^s}} \|_1 + w_7\| \boldsymbol{C}^{\boldsymbol{\theta}^r \scriptveryshortarrow \boldsymbol{\theta}^r}\|_1 + w_8 \| \boldsymbol{C}^{\boldsymbol{s} \scriptveryshortarrow \boldsymbol{o}} \|_1 \\
        \end{aligned}
        \label{eq:sparse3}
    \end{equation}
    
    $\bullet$ Smooth loss
    
    \begin{equation} 
        \mathcal{L}_{\text{smooth}} =  \sum
        \limits_{t=2}^T \left(|| \boldsymbol{\theta^s}_{t_m} - \boldsymbol{\theta^s}_{t_{m-1}}||_1 +  ||\boldsymbol{\theta^r}_{t_m} - \boldsymbol{\theta^r}_{t_{m-1}}||_1  \right)
    \end{equation}

    \begin{figure}[h]
        \centering
        \includegraphics[width=0.7\linewidth]{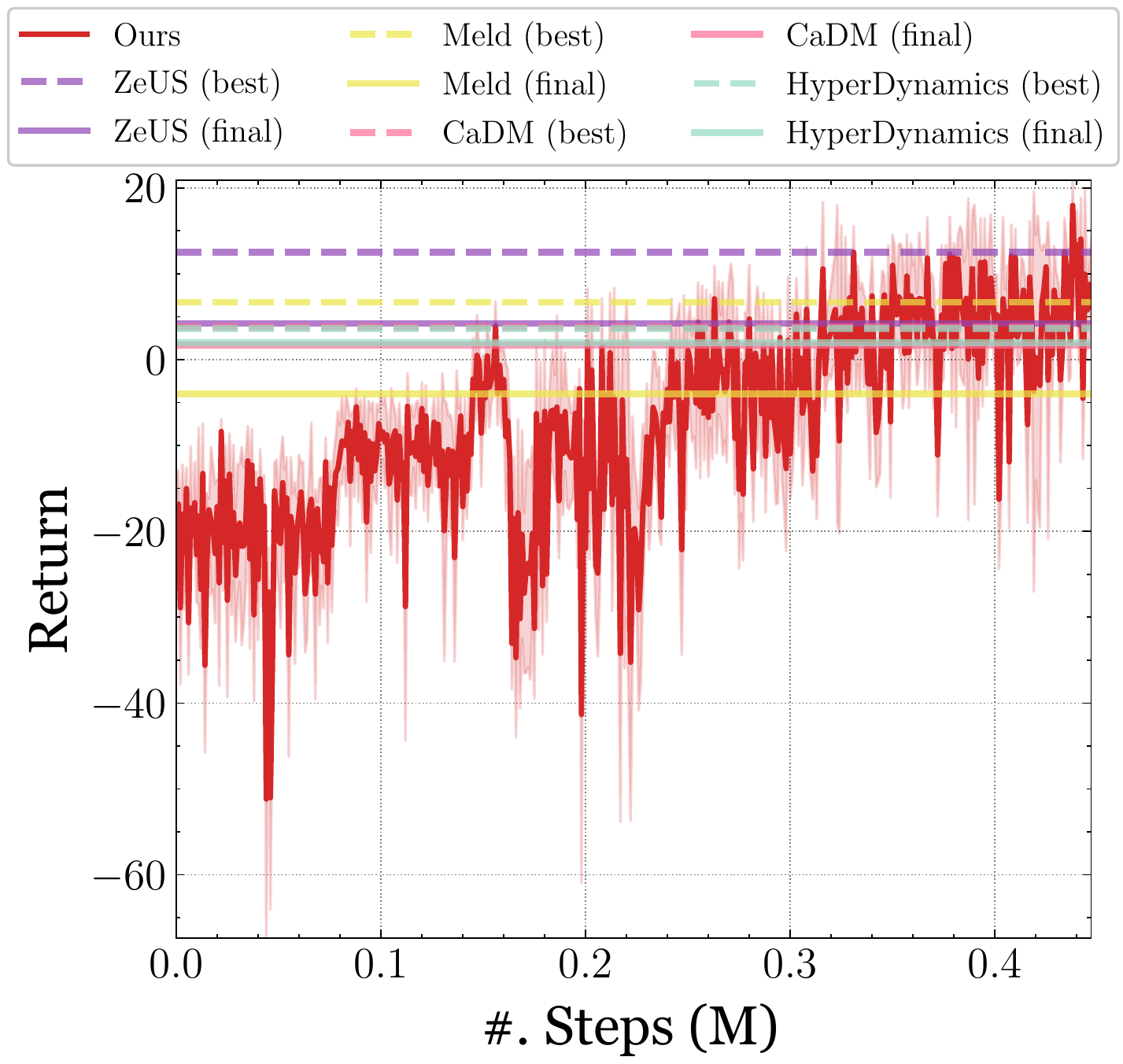}
        \caption{Average return across 10 runs on Sawyer-Peg (raw pixels) with across-episode changes.}
        \label{fig:pomdp}
    \end{figure}

    \subsection{Hyper-parameter selection}\label{app:hyper}

    \subsubsection{Factored model estimation}

    \paragraph{Input with symbolic states.} In Half-Cheetah, Sawyer-Reaching, and Minitaur the symbolic states are observable.
    For the CF dynamic networks, we use $2$-layer fully connected networks. The number of neurons is $512$. For CF inference networks, we use $2$-layer fully connected networks, where the number of neurons is $256$, followed by the LSTM networks with $256$ hidden units. The initial learning rates for all losses are set to be $0.1$ with a decay rate $0.99$. The batch size is $256$ and the length of time steps is equal to the horizon in each task. The number of RNN cells is $256$. The decoder networks are $2$-layer fully connected networks. The number of neurons is $512$.

    \paragraph{Input with raw pixels.} In Saywer-Peg, we directly learn and adapt in non-stationary environments with raw pixels observed. Different from other experiments, we use the architecture described in Fig.~\ref{fig:pipeline_pomdp}. At timestep $t$, we stack $4$ frames as the input $\boldsymbol{o}_t$. A $5$-layer convolutional networks is used to extract the features of the trajectories of observations and rewards.
    The layers have $32$, $64$, $128$, $256$, and $256$ filters. And the corresponding filter sizes are $5$, $3$, $3$, $3$, $4$. The observation decoders are the transpose of the convolutional networks.
    Then the extracted features are used as the input of LSTM networks in state inference networks. The state inference networks and state dynamic networks share the same architectures with the CF inference and dynamics networks, respectively. We use the same CF inference networks, CF dynamics networks, and reward decoders with those in cases with symbolic states as input. The number of latent features is $40$.

    \paragraph{Balancing parameters in losses}
    For all experiments:
    \begin{itemize}
        \item All $w_{\cdot}$ are set to be $0.1$;
        \item Weights of the reconstruction loss: $k_1=0.8$;
        \item Weights of the prediction loss: $k_2=0.8$;
        \item Weights of KL loss: $k_3=0.5$;
        \item Weights of sparsity loss: $k_4=0.1$;
        \item Weights of smooth loss: $k_5=0.02$.
    \end{itemize}

    We use the automatic weighting method in \cite{liebel2018auxiliary} to learn the weights for $k_1, \ldots, K_5$ and grid search for $w_1, \ldots, w_7$.

    \paragraph{Model initialization.}
    Table~\ref{tbl: param_me_hc}, \ref{tbl: param_me_saw}, and \ref{tbl: param_me_minitaur} provide the settings of learning the model initialization.
    \begin{table}[H]
        \centering
        \begin{tabular}{c|c|c|c|c|c}
            \midrule
            & CONT (D) & A-EP (D) & W-EP (D) & A-EP (R) & A-EP (R+D) \\ \hline
            \# trajectories          & $500$    & $20$     & $20$     & $20$     & $100$      \\ \hline
            \# steps in each episode & $50$     & $50$     & $50$     & $50$     & $50$       \\ \hline
            \# episodes              & $10$     & $100$    & $100$    & $100$    & $100$      \\ \bottomrule
        \end{tabular}
        \caption{The selected hyper-parameters for model estimation in Half-Cheetah experiment.}
        \label{tbl: param_me_hc}
    \end{table}
    \begin{table}[H]
        \centering
        \begin{tabular}{c|c|c}
            \midrule
            & Sawyer-Reaching & Sawyer-Peg \\ \hline
            \# trajectories          & $500$           & $20$       \\ \hline
            \# steps in each episode & $150$           & $40$       \\ \hline
            \# episodes              & $10$            & $100$      \\ \bottomrule
        \end{tabular}
        \caption{The selected hyper-parameters for model estimation in Saywer experiments.}
        \label{tbl: param_me_saw}
    \end{table}
    \begin{table}[H]
        \centering
        \begin{tabular}{c|c|c|c}
            \midrule
            & CONT (D) & W-EP (D) & A-EP (R+D) \\ \hline
            \# trajectories          & $500$    & $50$     & $80$       \\ \hline
            \# steps in each episode & $100$    & $100$    & $100$      \\ \hline
            \# episodes              & $10$     & $50$     & $100$      \\ \bottomrule
        \end{tabular}
        \caption{The selected hyper-parameters for model estimation in Minitaur experiments.}
        \label{tbl: param_me_minitaur}
    \end{table}

    \subsubsection{Policy learning}
    In the Half-Cheetah, Sawyer-Reaching, and Minitaur experiments, we follow the learning rates selection for policy networks in~\citep{xie2020deep}.
    In Sawyer-Peg, for both actor and critic networks, we use $2$-layer fully-connected networks. The number of neurons is $256$. For all experiments, we use standard Gaussian to initialize the parameters of policy networks. The learning rate is $3e-4$. The relay buffer capacity is $50,000$. The number of batch size is $256$.

    \paragraph{Details on TRIO and VariBAD.}

    For TRIO and VariBAD, we meta-train the models (batch size: 5000, \# epochs: 2 for all experiments) and show the learning curves of meta-testing. The tasks parameters for meta-training are uniformly sampled from a Gaussian distribution. For all approaches, we use the same set of hyper-parameters for policy optimization modules (i.e., SAC). For the latent parameters in TRIO, we follow the original paper where the latent space from the inference network is projected to a higher dimension. The number of latent parameters for TRIO is the same as those in other approaches (Half-Cheetah and Minitaur: $40$, Sawyer-Reaching: $20$). We compared with TS-TRIO, with the kernels set as in the original implementation.

    \section{Experimental Platforms and Licenses}

    \subsection{Platforms}
    \label{app:platforms}
    All methods are implemented on 8 Intel Xeon Gold 5220R and 4 NVidia V100 GPUs.

    \subsection{Licenses}\label{app:license}
    In our code, we have used the following libraries which are covered by the corresponding licenses:
    \begin{itemize}
        \item Tensorflow (Apache License 2.0),
        \item Pytorch (BSD 3-Clause "New" or "Revised" License),
        \item OpenAI Gym (MIT License),
        \item OpenCV (Apache 2 License),
        \item Numpy (BSD 3-Clause "New" or "Revised" License)
        \item Keras (Apache License).
    \end{itemize}
\end{document}